\font\msbm=msbm10
\numberwithin{equation}{section}
\theoremstyle{plain}
\newtheorem{theorem}{Theorem}[section]
\newtheorem{lemma}[theorem]{Lemma}
\newtheorem{proposition}[theorem]{Proposition}
\newtheorem{definition}{Definition}[section]
\def\mathbb#1{\hbox{\msbm{#1}}}
\newcommand{\tr}{\operatorname{Tr}}
\newcommand{\ba}{\boldsymbol{a}}
\newcommand{\bb}{\boldsymbol{b}}
\newcommand{\be}{\boldsymbol{e}}
\newcommand{\bg}{\boldsymbol{g}}
\newcommand{\bh}{\boldsymbol{h}}
\newcommand{\bp}{\boldsymbol{p}}
\newcommand{\bs}{\boldsymbol{s}}
\newcommand{\bt}{\boldsymbol{t}}
\newcommand{\bu}{\boldsymbol{u}}
\newcommand{\bv}{\boldsymbol{v}}
\newcommand{\bw}{\boldsymbol{w}}
\newcommand{\bx}{\boldsymbol{x}}
\newcommand{\by}{\boldsymbol{y}}
\newcommand{\bz}{\boldsymbol{z}}
\newcommand{\bone}{\boldsymbol{1}}
\newcommand{\balpha}{\boldsymbol{\alpha}}
\newcommand{\bbeta}{\boldsymbol{\beta}}
\newcommand{\bgamma}{\boldsymbol{\gamma}}
\newcommand{\bmu}{\boldsymbol{\mu}}
\newcommand{\BA}{\boldsymbol{A}}
\newcommand{\BB}{\boldsymbol{B}}
\newcommand{\BC}{\boldsymbol{C}}
\newcommand{\BH}{\boldsymbol{H}}
\newcommand{\BI}{\boldsymbol{I}}
\newcommand{\BJ}{\boldsymbol{J}}
\newcommand{\BP}{\boldsymbol{P}}
\newcommand{\BQ}{\boldsymbol{Q}}
\newcommand{\BS}{\boldsymbol{S}}
\newcommand{\BU}{\boldsymbol{U}}
\newcommand{\BV}{\boldsymbol{V}}
\newcommand{\BW}{\boldsymbol{W}}
\newcommand{\BX}{\boldsymbol{X}}
\newcommand{\BY}{\boldsymbol{Y}}
\newcommand{\BZ}{\boldsymbol{Z}}
\newcommand{\BPhi}{\boldsymbol{\Phi}}
\newcommand{\BPi}{\boldsymbol{\Pi}}
\newcommand{\bvarphi}{\boldsymbol{\varphi}}
\newcommand{\btheta}{\boldsymbol{\theta}}
\newcommand{\bzero}{\boldsymbol{0}}
\newcommand{\BSigma}{\boldsymbol{\Sigma}}
\newcommand{\pa}{\partial}
\newcommand{\I}{\boldsymbol{I}}
\newcommand{\RR}{\mathbb{R}}
\newcommand{\lag}{\langle}
\newcommand{\rag}{\rangle}
\newcommand{\eps}{\epsilon}
\newcommand*\diff{\mathop{}\!\mathrm{d}}
\DeclareMathOperator{\Tr}{Tr}
\DeclareMathOperator{\argmin}{argmin}
\DeclareMathOperator{\supp}{supp}
\DeclareMathOperator{\E}{\mathbb{E}}
\DeclareMathOperator{\diag}{diag}
\DeclareMathOperator{\ReLU}{ReLU}
\renewcommand{\Pr}{\mathbb{P}}
\long\def\\#1//{}
\definecolor{xl}{RGB}{200,50,120}
\begin{document}
\title{\bf Beyond Unconstrained Features: Neural Collapse \\ for Shallow Neural Networks with General Data}
\author{Wanli Hong\thanks{Shanghai Frontiers Science Center of Artificial Intelligence and Deep Learning, New York University Shanghai, China. S.L. and W.H. is (partially) financially supported by the National Key R\&D Program of China, Project Number 2021YFA1002800, National Natural Science Foundation of China (NSFC) No.12001372, Shanghai Municipal Education Commission (SMEC) via Grant 0920000112, and NYU Shanghai Boost Fund. W.H. is also supported by NYU Shanghai Ph.D. fellowship and acknowledges the NSF/NRT support.}~\thanks{Center for Data Science, New York University.},~ Shuyang Ling$^*$}

\maketitle

\begin{abstract}
    Neural collapse (${\cal NC}$) is a phenomenon that emerges at the terminal phase of the training (TPT) of deep neural networks (DNNs). The features of the data in the same class collapse to their respective sample means and the sample means exhibit a simplex equiangular tight frame (ETF). In the past few years, there has been a surge of works that focus on explaining why the ${\cal NC}$ occurs and how it affects generalization. Since the DNNs are notoriously difficult to analyze, most works mainly focus on the unconstrained feature model (UFM). While the UFM explains the ${\cal NC}$ to some extent, it fails to provide a complete picture of how the network architecture and the dataset affect ${\cal NC}$. In this work, we focus on shallow ReLU neural networks and try to understand how the width, depth, data dimension, and statistical property of the training dataset influence the neural collapse. We provide a complete characterization of when the ${\cal NC}$ occurs for two or three-layer neural networks. For two-layer ReLU neural networks, a sufficient condition on when the global minimizer of the regularized empirical risk function exhibits the ${\cal NC}$ configuration depends on the data dimension, sample size, and the signal-to-noise ratio in the data instead of the network width. For three-layer neural networks, we show that the ${\cal NC}$ occurs as long as the first layer is sufficiently wide. Regarding the connection between ${\cal NC}$ and generalization, we show the generalization heavily depends on the SNR (signal-to-noise ratio) in the data: even if the ${\cal NC}$ occurs, the generalization can still be bad provided that the SNR in the data is too low. Our results significantly extend the state-of-the-art theoretical analysis of the ${\cal NC}$ under the UFM by characterizing the emergence of the ${\cal NC}$ under shallow nonlinear networks and showing how it depends on data properties and network architecture.
\end{abstract}

\section{Introduction}

Deep neural networks have achieved tremendous success in the past few years in a variety of applications~\cite{HZRS16,LBH15}.
However, the mystery behind deep neural networks and deep learning remains far behind its significant applications in practice. In this work, we will focus on a phenomenon called neural collapse $({\cal NC})$ that was observed in~\cite{PHD20} that some particular structures emerge in the feature representation layer and the classification layer of DNNs in the TPT regime for classification tasks when the training dataset is balanced. 

To make our discussion more precise, we consider an $L$-layer feedforward neural network in the form of 
\begin{equation}\label{def:dnn}
\bh_{\ell+1}(\bx) = \sigma(\BW_{\ell}^{\top}\bh_{\ell}(\bx) + \bb_{\ell}),~~1\leq\ell\leq L-1,
\end{equation}
where $\BW_{\ell}$ and $\bb_{\ell}$ are the weight and bias on the $\ell$-th layer, and the output after $L-1$ layers goes through the last linear layer gives $\BW^{\top}_L \bh_L(\bx)$ which is used for classification. Here $\sigma(\cdot)$ is a nonlinear activation function such as ReLU activation and sigmoid function.

From now on, we always set $\BW : = \BW_L$, i.e., the linear classifier on the last layer, and let $\bh_{\btheta}(\bx) := \bh_L(\bx)$ be the feature map of the data point $\bx$ where $\btheta$ represents all the parameters $\{(\BW_{\ell},\bb_{\ell})\}_{\ell=1}^{L-1}$. Given the training data $\{(\bx_i,\by_i)\}_{i=1}^N$, the model training reduces to the empirical risk minimization:
\[
R_N(\BW,\btheta) := \frac{1}{N} \sum_{i=1}^N \ell( \BW^{\top}\bh_{\btheta}(\bx_i), \by_i) 
\]
where $\BW\in\RR^{D\times K}$, $\bh_{\btheta}(\bx): \RR^{d}\rightarrow \RR^{D}$ is a feature map and $\ell(\cdot,\cdot)$ is a loss function such $\ell_2$-loss and cross-entropy (CE) loss.

Suppose the training dataset consists of $K$ classes with equal size $\{\bx_{ki}\}_{1\leq i\leq n,1\leq k\leq K}$, i.e., each class contains exactly $n$ points, and the label $\by_k = \be_k$ is a one-hot vector. Then at the nearly final stage of training, the linear classifier and features exhibit the following structures~\cite{PHD20} ${\cal NC}_1$-${\cal NC}_3$:

\begin{itemize}
\item $\mathcal{NC}_1$ variability collapse: the features of the samples from the same class converge to their mean feature vector, i.e., 
\[
\bh_{ki}:=\bh_{\theta}(\bx_{ki}) \longrightarrow \bar{\bh}_k : = \frac{1}{n}\sum_{i=1}^n \bh_{\theta}(\bx_{ki}),~~~1\leq i\leq n,~1\leq k\leq K,
\]
as the training evolves where $\bh_{ki}$ is the feature of the $i$-th data point in the $k$-th class.

\item $\mathcal{NC}_2$ convergence to the simplex ETF: these feature vectors form an equiangular tight frame (ETF), i.e., they share the same pairwise angles and length; 
\[
[\lag \bar{\bh}_k, \bar{\bh}_{k'}\rag]_{k,k'} \propto \I_K - \BJ_K/K
\]
where $\I_K$ and $\BJ_K$ are the $K\times K$ identity and constant ``1" matrices respectively.

\item $\mathcal{NC}_3$ convergence to self-duality: the weight of the linear classifier converges to the corresponding feature mean (up to a scalar product):
\[
\BW^{\top}\BW \propto \bar{\BH}^{\top}\bar{\BH}
\]
where $\bar{\BH}\in\RR^{D\times K}$ consists of the mean feature vectors $\{\bar{\bh}_k\}_{k=1}^K.$

\end{itemize}

These empirical findings have sparked a series of theoretical works that try to explain the emergence of the ${\cal NC}$ in the training of DNNs.
The key question is to understand when neural collapse occurs and how the neural collapse is related to the generalization. 
Starting from~\cite{EW22,FHLS21,LS22,MPP22}, a thread of works considers the unconstrained feature model (UFM) or the layer peeled model to explain the emergence of neural collapse. The rationale behind UFM relies on the universal approximation theorem~\cite{C89,HSW89}: over-parameterize deep neural networks have an exceptional power to approximate many common function classes. Therefore, the expressiveness of the feature map $\bh_{\btheta}(\bx)$ is extremely powerful so that it can be replaced by arbitrary vectors. 
Various versions of UFMs with different loss functions and regularizations are proposed in these works~\cite{EW22,MPP22,ZDZ21,FHLS21,DNT23,LS22,TB22,MPP22,YWZB22,ZLD22} including the ${\cal NC}$ on imbalanced datasets~\cite{FHLS21,WL24,DNT23,DTNH24,TKV22}. All the aforementioned works manage to find that the global minimizers of the empirical risk function under the UFMs and the global minimizers indeed match the characterization of ${\cal NC}$ proposed in~\cite{PHD20} such as variability collapse ${\cal NC}_1$ and the convergence to simplex ETF ${\cal NC}_{2-3}$. 
There are also some recent works trying to extend the UFMs to deep linear neural networks~\cite{DTO+23,GK24} and also nonlinear ReLU network~\cite{DTNH24,NLL+23,SMC24,TB22}. In particular,~\cite{TB22} shows that for two-layer ReLU networks with nonlinear unconstrained features, the global minimizer to the empirical risk function with $\ell_2$-loss matches the ${\cal NC}$ configuration. In~\cite{SMC24}, the authors prove that for a binary classification task with deep unconstrained feature networks, the global minimum exhibits all the typical properties of the ${\cal NC}$. 
Another streamline of works focuses on analyzing the dynamical side of neural collapse: whether the optimization algorithm such as gradient descent converges to the ${\cal NC}$~\cite{PL21a,HPD21,JLZ+22,MPP22,THN23,RB22}. Also, the ${\cal NC}$ has been studied in the domain of transfer learning~\cite{LLZ22,GGH21} and the relation between ${\cal NC}$ and neural tangent kernel is discussed in~\cite{SWG+24}. For a recent review on the topic of neural collapse, the readers might refer to~\cite{KRA22}.

As the ${\cal NC}$ is rather challenging to understand for deep neural networks, the UFMs have been a great surrogate to the complicated DNNs and make the theoretical analysis much less difficult. However, the unconstrained feature model also has its obvious disadvantage: by replacing $\bh(\bx)$ with any vector, the resulting feature map actually has nothing to do with the alignment of training data and labels. Consequently, the study of the $\mathcal{NC}$ under UFMs is more or less equal to an analysis of optimization phenomenon. In addition, the relation between input data and neural collapse remains unexplored yet but is the key to understanding this intriguing phenomenon of ${\cal NC}$~\cite{PHD20,YSH23}. Therefore, the UFMs do not seem to be suitable models to study the connection between the generalization of DNNs and ${\cal NC}$~\cite{HBN22}.

To overcome the obvious limitations of UFMs, we take one step further to study the ${\cal NC}$ under a more realistic setting: we focus on the shallow ReLU neural networks (with the number of layers equal to $L=2$ or 3) with a given dataset $\{(\bx_i, \by_i)\}_{i=1}^N$. In other words, we will study the ${\cal NC}$ under a nonlinear feature model and more importantly, in the presence of a general dataset. 
More precisely, our work aims to address a few important questions regarding the emergence of {\cal NC} and its connection to generalization:
\begin{equation}
\text{\em Does the neural collapse occur for a sufficiently wide shallow network? }\tag{Question 1}
\end{equation}
It is an interesting question since we believe that the family of two-layer neural networks already has the property of universal approximation. Therefore, it is crucial to see if the universal approximation of a two-layer neural network implies neural collapse, i.e., whether the global minimizer to the regularized empirical risk minimization equals the ETF configuration.

\begin{equation}\tag{Question 2}
\text{\em Is the ${\cal NC}$ more likely to occur on a dataset with cluster structures?} 
\end{equation}
The motivation behind this question arises from a simple observation: if the datasets are highly separated, i.e., each data point is very close to its mean and far away from the data in other classes, then intuitively the ${\cal NC}$ is more likely to happen than on purely random noise. Therefore, it is natural to ask how the separation (which can be quantified by the signal-to-noise ratio) determines the emergence of ${\cal NC}$. 

\begin{equation}
\text{\em Does the emergence of the ${\cal NC}$ necessarily imply excellent generalization?}
\tag{Question3}
\end{equation}
 Intuitively, ${\cal NC}$ may help the generalization as the ${\cal NC}$ maps data in the same class to a single point, and the classifier is simplified to a nearest class-mean decision rule, which is related to the max-margin classifier~\cite{PHD20} and implicit bias~\cite{BKV23,SHNG18}. Thus, it is important to address the connection between ${\cal NC}$ and generalization.

We summarize our main contribution as follows: we provide a new proof for the ${\cal NC}$ under positive unconstrained ReLU feature model via convex optimization that is very versatile and can potentially apply to many other unconstrained feature settings. More importantly, we derive sufficient conditions for the ${\cal NC}$ under both general datasets and Gaussian mixture model (GMM) for cross-entropy loss and $\ell_2$-loss: (a) we provide a complete characterization of when the ${\cal NC}$ occurs for two-layer neural networks, i.e., when the global minimizer exhibits the ${\cal NC}$ configuration.
For a two-layer neural network, even if it is sufficiently wide, the ${\cal NC}$ may not occur if the dimension of input data is not sufficiently large. (b) If the data dimension is moderate, whether the ${\cal NC}$ occurs mainly depends on the SNR in the data instead of the network width, which is fully elucidated under the GMM as a benchmark example. For three-layer neural networks, we show that the ${\cal NC}$ occurs as long as the first layer is sufficiently wide. (c) In addition, we show that under the Gaussian mixture model, the generalization mainly depends on the SNR in the dataset even if the ${\cal NC}$ occurs. Our results significantly extend the state-of-the-art works that mainly focus on the unconstrained feature models.

\subsection{Notation}
We let boldface letter $\BX$ and $\bx$ be a matrix and a vector respectively; $\BX^{\top}$ and $\bx^{\top}$ are the transpose of $\BX$ and $\bx$ respectively. The matrices $\I_n$, $\BJ_n$, and $\be_k$ are the $n\times n$ identity matrix, a constant matrix with all entries equal to 1, and the one-hot vector with $k$-th entry equal to 1. Let
\begin{equation}\label{def:ck}
\BC_K : = \I_K - \BJ_K/K
\end{equation}
be the $K\times K$ centering matrix. 
For any vector $\bx,$ $\diag(\bx)$ denotes the diagonal matrix whose diagonal entries equal $\bx.$ 
For any matrix $\BX$, we let $\|\BX\|$, $\|\BX\|_F$, and $\|\BX\|_*$ be the operator norm, Frobenius form, and nuclear norm. 

\subsection{Organization}

The following sections are organized in the following way. In Section~\ref{s:main}, we start with the model setting and then present our main theoretical results. In Section~\ref{s:numerics}, we provide numerical experiments to support our theoretical findings, and Section~\ref{s:proof} justifies all our theorems.

\section{Preliminaries and main theorems}\label{s:main}

In this work, we focus on the empirical risk minimization with activation regularization:
\begin{equation}\label{eq:model}
R_N(\BW,\btheta) = \frac{1}{N} \sum_{i=1}^N \ell( \BW^{\top}\bh_{\btheta}(\bx_i), \by_i) + \frac{\lambda_W}{2}\|\BW\|_F^2 + \frac{\lambda_H}{2} \|\BH_{\btheta}(\BX)\|_F^2
\end{equation}
where $\BH_{\btheta}(\BX) = [\bh(\bx_1),\cdots,\bh(\bx_N)]\in\RR^{D\times N}$ is the aggregation of all features. 

Throughout our discussion, we assume that the dataset is balanced, i.e., 
\begin{equation}\label{def:Y}
\BY := \I_K\otimes \bone_n^{\top} = [\underbrace{\be_1,\cdots,\be_1}_{n\text{ times}},\cdots,\underbrace{\be_K,\cdots,\be_K}_{n\text{ times}}]\in\RR^{K\times Kn}
\end{equation}
is the label matrix where each column represents the label of the corresponding data point, and each class contains $n$ samples. In particular, the mean feature matrix $\bar{\BH}$ is given by
\begin{equation}\label{def:barH}
\bar{\BH} := \frac{1}{n} \BH\BY^{\top}\in\RR^{D\times K},~~\bar{\bh}_k : = \frac{1}{n}\sum_{i=1}^n \bh_{ki}
\end{equation}
where $\bar{\bh}_k$ is the $k$-th column of $\bar{\BH}.$ The regularization is crucial as it has been shown empirically that without the regularization, the ${\cal NC}$ will not happen. In particular,~\cite{RB22} has shown that without regularization, the DNN will interpolate the data but the features will not exhibit ${\cal NC}$ phenomenon. The reason for choosing activation regularization instead of weight decay is due to the simplicities of analyzing the activation regularization. We will leave the study of the role of weight decay to our future research agenda. 
Before proceeding, we will need to formalize a concept about the occurrence of neural collapse.

\begin{definition}[\bf Neural collapse occurs] 
We say the neural collapse occurs for the neural network $f_{\BW,\btheta}(\bx) = \BW^{\top}\bh_{\btheta}(\bx)$ with data $\{(\bx_{ki},\by_k)\}_{1\leq i\leq n,1\leq k\leq K}$ if there exists $(\BW,\btheta)$ such that it is equal to the global minimizer of~\eqref{eq:model} and also it holds
\[
\BH_{\btheta}(\BX) = \bar{\BH}_{\theta}(\BX)\otimes \bone_n^\top,~~~\bar{\BH}_{\theta}(\BX)^\top\bar{\BH}_{\theta}(\BX)\propto \BI_K, ~~~\BW^{\top}\BW \propto \I_K - \BJ_K/K
\]
i.e., the feature matrix $\BH_{\btheta}(\BX)$ satisfies the within-class variability collapse and the mean feature $\bar{\BH}_{\theta}(\BX)$ in~\eqref{def:barH} converges to an orthogonal frame.
\end{definition}
Here are two important remarks about the definition. (a) The reason why we use the convergence of the mean features to an orthogonal frame instead of a simplex ETF is because of the nonnegativity of the ReLU feature discussed in this work, also see~\cite{DTNH24,NLL+23,TB22}; (b) This definition only concerns the global minimizer to~\eqref{def:dnn} and its ${\cal NC}$ properties, but we do not discuss whether an algorithm (gradient descent or SGD) converges to a global minimizer, which is beyond the scope of this work and will be investigated in the future.
As~\eqref{eq:model} is quite challenging to analyze in general, we will focus on the following special yet non-trivial models. 

\paragraph{Two-layer and three-layer ReLU network}
We consider the two-layer bias-free neural network with ReLU activation function $\sigma_{\ReLU}(x) = \max\{x,0\}$, i.e., $L=2$ or $3$ in~\eqref{def:dnn}. For two-layer neural networks, we have the empirical risk function as follows:
\begin{equation}\label{def:dnn2}
R_N(\BW,\BW_1) = \frac{1}{N} \sum_{i=1}^N \ell( \BW^{\top}\sigma_{\ReLU}(\BW_1^{\top}\bx_i), \by_i) + \frac{\lambda_W}{2}\|\BW\|_F^2 + \frac{\lambda_H}{2} \|\sigma_{\ReLU}(\BW_1^{\top}\BX)\|_F^2
\end{equation}
where $\BW_1\in\RR^{d\times D}$, $\BW\in\RR^{D\times K}$, and $\BX\in\RR^{d\times N}.$ 
In addition, we also consider the three-layer counterpart whose empirical risk function is given by
\begin{equation}\label{def:dnn3}
  \begin{aligned}
R_N(\BW,\BW_1,\BW_2) = &\frac{1}{N} \sum_{i=1}^N \ell( \BW^{\top}\sigma_{\ReLU}(\BW_2^{\top}\sigma_{\ReLU}(\BW_1^{\top}\bx_i)), \by_i) \\
&~~~+\frac{\lambda_W}{2}\|\BW\|_F^2 + \frac{\lambda_H}{2} \|\sigma_{\ReLU}(\BW_2^{\top}\sigma_{\ReLU}(\BW_1^{\top}\BX))\|_F^2
  \end{aligned}
\end{equation}
where $\BW_1\in\RR^{d\times d_1}$, $\BW_2\in\RR^{d_1\times D}$, $\BW\in\RR^{D\times K}$, and $\BX\in\RR^{d\times N}.$
The two-layer neural network (two-layer NN) has become one important model to study from a theoretical perspective as it is possibly the simplest nonlinear neural network. The convergence of gradient descent and SGD in training two-layer neural networks has been discussed in~\cite{LY17,DZPS18} as well as in~\cite{MMN18,RV22} by using the mean-field analysis of shallow networks. Despite the shallowness, two-layer neural networks have powerful approximation properties~\cite{C89,HSW89}: the universal approximation theorem, which motivates the study of the unconstrained feature model or the layer-peeled model~\cite{FHLS21,ZLD22}. This inspires us to understand the ${\cal NC}$ phenomenon in shallow networks, especially when the global minimizers to~\eqref{def:dnn3} include the ${\cal NC}$ configuration.

\paragraph{Unconstrained positive feature model }
One obvious difficulty to determine when the ${\cal NC}$ occurs comes from the nonlinearity of~\eqref{def:dnn} even if $L=2$ or $3$. Therefore, in most state-of-the-art literature, a significant simplification of the nonlinear model~\eqref{def:dnn} is to assume the feature matrix $\BH = \sigma_{\ReLU}(\BW_1^{\top}\BX)$ is unconstrained, i.e., $\BH$ is any $D\times N$ nonnegative matrix.
We will also start with analyzing whether the ${\cal NC}$ occurs under the unconstrained positive feature model (UPFM).
Under $\ell_{CE}$-loss function and unconstrained positive feature model, we have
\begin{equation}\label{def:upfm_ce}
R_N(\BW,\BH) = \frac{1}{N} \ell_{CE}(\BW^{\top}\BH,\BY) + \frac{\lambda_W}{2}\|\BW\|_F^2 + \frac{\lambda_H}{2}\|\BH\|_F^2
\end{equation}
subject to $\BH\geq 0$ where $\BY$ is defined in~\eqref{def:Y},
\[
\ell_{CE}(\bz, \be_k) = \log \sum_{j=1}^n e^{z_j} - z_k
\]
and $\be_k$ is a one-hot vector.
The counterpart under $\ell_2$-loss is
\begin{equation}\label{def:upfm_l2}
R_N(\BW,\BH) = \frac{1}{2N}\|\BW^{\top}\BH -\BY\|_F^2 + \frac{\lambda_W}{2}\|\BW\|_F^2 + \frac{\lambda_H}{2} \|\BH\|_F^2
\end{equation}
subject to $\BH\geq 0$. 
The study of the ${\cal NC}$ on~\eqref{def:upfm_ce} and~\eqref{def:upfm_l2} is the first step towards to understanding the ${\cal NC}$ for two/three-layer neural networks~\eqref{def:dnn2} and~\eqref{def:dnn3}. In fact, analyzing~\eqref{def:upfm_l2} is quite straightforward as it is directly related to the singular value thresholding. For~\eqref{def:dnn2}, it is slightly more complicated but we will provide proof of the ${\cal NC}$ under UPFM via convex relaxation.

\subsection{${\cal NC}$ under unconstrained feature model with general data}
Now we present our first theorem that characterizes the ${\cal NC}$ under the unconstrained positive feature model for both $\ell_2$- and cross-entropy loss.

\begin{theorem}[\bf Neural collapse under unconstrained positive feature models]\label{thm:upfm}
Under the unconstrained positive feature model with a balanced dataset, the following holds.
\begin{enumerate}[(a)]
\item The global minimizer to~\eqref{def:upfm_ce} is given by
\begin{equation}\label{eq:barH_ce_NC1}
\begin{aligned}
& \BH^{\top}\BH = b\BY^{\top}\BY=b \I_K\otimes\BJ_n,~~~~~\BW^{\top}\BW = \frac{a^2}{b}(\I_K - \BJ_K/K), \\
& \BW^{\top}\BH = a (\I_K - \BJ_K/K)\BY,
\end{aligned}
\end{equation}
i.e., the mean feature $\bar{\BH}$ is nonnegative and satisfies 
\begin{equation}\label{eq:barH_ce}
\bar{\BH}^{\top}\bar{\BH} = b\I_K,~~~\BW^{\top}\bar{\BH} = a(\I_K - \BJ_K/K),~~~ \BW = \frac{a}{b}\bar{\BH}(\I_K - \BJ_K/K)
\end{equation}
where $\BY$ is defined in~\eqref{def:Y}, 
\[
b = \sqrt{\frac{K-1}{nK}\cdot\frac{\lambda_W}{\lambda_H}} a,~~a = \max\left\{\log \left( (K-1) \left[ \sqrt{ \frac{1}{nK(K-1)}\cdot\frac{1}{\lambda_H\lambda_W} } - 1\right]\right), 0\right\}
\]
In particular, $a > 0$ if 
\[
\sqrt{ \frac{n(K-1)}{K} } > Kn \lambda
\]
where $\lambda = \sqrt{\lambda_W\lambda_H}.$ Here~\eqref{eq:barH_ce_NC1} and~\eqref{eq:barH_ce} correspond to ${\cal NC}_1$ and ${\cal NC}_{2-3}$ of the UPFM under cross-entropy loss.
\item The global minimizer to~\eqref{def:upfm_l2} is given by
\begin{equation}\label{eq:barH_l2_NC1}
\begin{aligned}
& \BH^{\top}\BH = \sqrt{\frac{\lambda_W}{\lambda_H}}\frac{(1- \sqrt{n}K\lambda)_+}{\sqrt{n}} \BY^{\top}\BY = \sqrt{\frac{\lambda_W}{\lambda_H}}\frac{(1- \sqrt{n}K\lambda)_+}{\sqrt{n}}\I_K\otimes\BJ_n, \\
& \BW^{\top}\BW =\sqrt{\frac{\lambda_H}{\lambda_W}}n^{1/2} (1 - \sqrt{n}K\lambda)_+ \I_K,~~~ \BW^{\top}\BH = (1-\sqrt{n}K\lambda)_+ \BY
\end{aligned}
\end{equation}
i.e., the mean feature $\bar{\BH}\in\RR^{D\times K}$ is nonnegative and satisfies 
\begin{equation}\label{eq:barH_l2}
\bar{\BH}^{\top}\bar{\BH} = \sqrt{\frac{\lambda_W}{\lambda_H}}\frac{(1- \sqrt{n}K\lambda)_+}{\sqrt{n}}\I_K,~~~\BW^{\top}\bar{\BH} = (1-\sqrt{n}K\lambda)_+ \I_K,~~~\BW = \sqrt{\frac{n\lambda_{H}}{\lambda_{W}}}\bar{\BH}
\end{equation}
where $\lambda = \sqrt{\lambda_W\lambda_H}$ and $\BY$ is defined in~\eqref{def:Y}. Here~\eqref{eq:barH_l2_NC1} and~\eqref{eq:barH_l2} correspond to ${\cal NC}_1$ and ${\cal NC}_{2-3}$ of the UPFM under $\ell_2$-loss.
\end{enumerate}
\end{theorem}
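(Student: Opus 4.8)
The plan is to handle the two losses in parallel by passing to the product $\BZ:=\BW^\top\BH$ through the variational identity $\min_{\BW^\top\BH=\BZ}\{\tfrac{\lambda_W}{2}\|\BW\|_F^2+\tfrac{\lambda_H}{2}\|\BH\|_F^2\}=\lambda\|\BZ\|_*$ with $\lambda=\sqrt{\lambda_W\lambda_H}$; since dropping $\BH\ge0$ only lowers this minimum, it is a legitimate lower bound. For the $\ell_2$ loss in (b) this produces the convex surrogate $\tfrac{1}{2N}\|\BZ-\BY\|_F^2+\lambda\|\BZ\|_*$, minimized by soft singular-value thresholding of $\BY$ at level $N\lambda=Kn\lambda$. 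Since $\BY=\I_K\otimes\bone_n^\top$ has all $K$ singular values equal to $\sqrt n$, the minimizer is $\BZ^\star=(1-\sqrt{n}K\lambda)_+\BY$. First I would then produce a nonnegative factorization saturating the bound: take $\bar\BH$ a scaled nonnegative orthogonal frame (disjoint column supports) and $\BW$ with matching support so that $\BW^\top\BH=\BZ^\star$ while the AM--GM equality $\lambda_W\|\BW\|_F^2=\lambda_H\|\BH\|_F^2$ holds; balancing the norms fixes the scale and reproduces \eqref{eq:barH_l2_NC1}--\eqref{eq:barH_l2}. Because $\BZ^\star$ is a nonnegative multiple of $\BY$ and its per-class target $\BW^\top\bar\BH\propto\I_K$ is full rank and diagonal, positivity is free here and the relaxation is tight.

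For the cross-entropy loss in (a) positivity is genuinely binding, so I would argue more carefully. First I would establish within-class collapse by convexity: both $\bh\mapsto\ell_{CE}(\BW^\top\bh,\be_k)$ and $\|\cdot\|_F^2$ are convex and the class mean $\bar\bh_k$ is again nonnegative, so replacing every $\bh_{ki}$ by $\bar\bh_k$ cannot increase the loss and gives $\tfrac{\lambda_H}{2}\|\BH\|_F^2\ge\tfrac{n\lambda_H}{2}\|\bar\BH\|_F^2$; this reduces the problem to $\BW,\bar\BH\ge0$ in $\RR^{D\times K}$ with objective $\tfrac1K\sum_k\ell_{CE}(\BW^\top\bar\bh_k,\be_k)+\tfrac{\lambda_W}{2}\|\BW\|_F^2+\tfrac{n\lambda_H}{2}\|\bar\BH\|_F^2$. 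Writing $\bar\BZ=\BW^\top\bar\BH$, I would then lower bound the loss by a scalar function of the average logit gap: Jensen applied to the $K-1$ off-diagonal entries of each column and then across columns yields $\tfrac1K\sum_k\ell_{CE}\ge\log(1+(K-1)e^{-\bar g})$ with $\bar g=\tfrac1{K-1}\langle\BC_K,\bar\BZ\rangle$, equality holding exactly when the logits are class-symmetric.

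The decisive, positivity-aware step is to bound the regularizer from below by $\bar g$. Using AM--GM, $\tfrac{\lambda_W}{2}\|\BW\|_F^2+\tfrac{n\lambda_H}{2}\|\bar\BH\|_F^2\ge\lambda\sqrt n\,\|\BW\|_F\|\bar\BH\|_F$, and $\bar g=\tfrac1{K-1}\langle\BW,\bar\BH\BC_K\rangle\le\tfrac1{K-1}\|\BW\|_F\|\bar\BH\BC_K\|_F$, so everything hinges on the inequality $\|\bar\BH\BC_K\|_F\le\sqrt{\tfrac{K-1}{K}}\,\|\bar\BH\|_F$ valid for every nonnegative $\bar\BH$. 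I would prove it from $\|\bar\BH\BC_K\|_F^2=\|\bar\BH\|_F^2-\tfrac1K\|\bar\BH\bone_K\|^2$ together with $\|\bar\BH\bone_K\|^2-\|\bar\BH\|_F^2=\sum_{k\ne k'}\langle\bar\bh_k,\bar\bh_{k'}\rangle\ge0$, which holds precisely because nonnegative vectors have nonnegative inner products --- and whose equality case forces the columns of $\bar\BH$ to be orthogonal, i.e.\ a nonnegative orthogonal frame. This is exactly the mechanism by which positivity rules out the simplex ETF in favor of an orthogonal frame and inflates the effective penalty by the factor $\sqrt{K/(K-1)}$ that the naive nuclear-norm bound discards.

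Chaining the three bounds collapses the problem to the one-dimensional convex minimization $\min_{\bar g\ge0}\log(1+(K-1)e^{-\bar g})+\lambda\sqrt{nK(K-1)}\,\bar g$; setting the derivative to zero gives $e^{\bar g}=(K-1)\big(\tfrac{1}{\lambda\sqrt{nK(K-1)}}-1\big)$ (and $\bar g=0$ otherwise), which is precisely the stated $a$ and the threshold $a>0\iff\sqrt{n(K-1)/K}>Kn\lambda$. To finish I would verify that the proposed configuration attains all inequalities simultaneously --- within-class collapse, class-symmetric logits, orthogonal nonnegative $\bar\BH$, the alignment $\BW\propto\bar\BH\BC_K$, and the balance $\lambda_W\|\BW\|_F^2=n\lambda_H\|\bar\BH\|_F^2$ --- so the lower bound is tight, the global minimizer carries the asserted $\mathcal{NC}$ structure, and reading off the scales yields $a,b$. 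I expect the crux to be identifying the positivity-aware inequality of the third paragraph: it is what the nuclear-norm relaxation throws away, and pinning down its equality case is exactly what selects the orthogonal frame and fixes the constant $a$.
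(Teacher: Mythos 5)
Your part (b) follows the paper's own argument almost verbatim (drop $\BH\geq 0$, reduce to nuclear-norm-penalized thresholding of $\BY$, then exhibit a nonnegative factorization attaining the bound), but your part (a) is a genuinely different route, and it is correct. The paper lifts to the variables $\BU=\BW^{\top}\BW$, $\BZ=\BW^{\top}\BH$, $\BV=\BH^{\top}\BH$, relaxes $\BH\geq 0$ to the entrywise constraint $\BV\geq 0$, and certifies the collapsed point by explicitly solving the KKT system of the resulting SDP: it constructs the dual pair $\BS\succeq 0$ and $\BB=t(\BJ_K-\I_K)\otimes\BJ_n$, checks $\BS\BQ=0$ and $\lag\BB,\BV\rag=0$, and verifies $\BS\succeq 0$ by a Schur complement. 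You instead run a purely primal chain of inequalities: Jensen for within-class collapse, Jensen/AM--GM to reduce the CE loss to the average logit gap $\bar g=\frac{1}{K-1}\lag\BC_K,\BW^{\top}\bar{\BH}\rag$, Cauchy--Schwarz, and the positivity-aware estimate $\|\bar{\BH}\BC_K\|_F\leq\sqrt{(K-1)/K}\,\|\bar{\BH}\|_F$ (valid because nonnegative columns have nonnegative inner products), collapsing everything to a one-dimensional convex problem. I checked the arithmetic: your stationarity condition gives exactly the paper's $a$, the activation threshold reproduces $\sqrt{n(K-1)/K}>Kn\lambda$, and the AM--GM balance $\lambda_W\|\BW\|_F^2=n\lambda_H\|\bar{\BH}\|_F^2$ at the orthogonal-frame candidate yields $b=\sqrt{\tfrac{K-1}{nK}\cdot\tfrac{\lambda_W}{\lambda_H}}\,a$. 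The trade-off: the paper's dual certificate is mechanical, yields uniqueness via strict complementarity, and (as the authors note) extends to imbalanced classes; your argument is elementary and makes transparent exactly where positivity enters --- your inequality is precisely what replaces the simplex ETF by an orthogonal frame --- but it leans on the balanced, symmetric structure, and recovering the claim that \emph{every} global minimizer has the stated form requires tracking the equality cases (strict convexity of $\|\cdot\|^2$ for ${\cal NC}_1$, Cauchy--Schwarz for $\BW\propto\bar{\BH}\BC_K$, your inequality for orthogonality, and the across-column Jensen step for equal column norms).

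Two small points to tighten. First, your one-dimensional problem must be restricted to $\bar g\geq 0$, and this needs a sentence: the regularizer bound $\lambda\sqrt{nK(K-1)}\,\bar g$ is vacuous for $\bar g<0$, but then the CE term alone already exceeds $\log K$, which is an upper bound for the constrained scalar minimum; without this remark, minimizing over all $\bar g\in\RR$ would give a non-tight bound exactly in the regime $a=0$. Second, the attainment step needs $D\geq K$ so that a nonnegative orthogonal frame with disjoint supports exists (the paper assumes this implicitly as well).
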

The same result is obtained in~\cite{DTNH24} for the unconstrained positive feature model under imbalanced datasets. 
The proof of Theorem~\ref{thm:upfm} is provided in Section~\ref{ss:upfm}, and our technique uses a convex relaxation of~\eqref{def:upfm_ce} and~\eqref{def:upfm_l2}, and show that the collapsed solution is exactly the global minimizer to the convex relaxation. Our justification of Theorem~\ref{thm:upfm} is significantly different from~\cite{DTNH24} and can also be extended to the imbalanced scenarios, and thus we provide a proof here.
From Theorem~\ref{thm:upfm}, we can directly see that the global minimizer shows the within-class variability collapse and the convergence of mean features to an orthogonal frame under the unconstrained positive  feature model.

Once we have a full characterization of the global minimizer to~\eqref{def:upfm_ce} and~\eqref{def:upfm_l2}, we will study whether the global minimizer exhibits the ${\cal NC}$ property for a given dataset $\{(\bx_i,\by_i)\}_{i=1}^N$ and $N=Kn$. We begin with a two-layer neural network with ReLU activation, and the corresponding feature is exactly
\[
\BH = \sigma_{\ReLU}(\BW_1^{\top}\BX)\in\RR^{D\times N}
\]
where $\BW_1\in\RR^{d\times D}$, and $d$ and $D$ are the dimensions of the input data and output features respectively.
Assume $\BX = [\BX_1,\cdots,\BX_K]\in\RR^{d\times Kn}$ with $\BX_{k} = [\bx_{k 1},\cdots,\bx_{k n}]\in\RR^{d\times n}$, i.e., $\BX_k$ is the $k$-th class consisting of $n$ points in $\RR^d$.

To see if the neural collapse occurs, Theorem~\ref{thm:upfm} implies that we need to find out the existence of $\BW_1$ such that $\BH = \sigma_{\ReLU}(\BW_1^{\top}\BX) = \bar{\BH} \BY$ with $\bar{\BH}\geq 0$ and $\bar{\BH}^{\top}\bar{\BH}\propto \I_K$. 
Note that for any nonnegative mean feature matrix $\bar{\BH}$ with orthogonal columns, it satisfies
\[
\supp(\bar{\bh}_k)\cap \supp(\bar{\bh}_{\ell}) = \emptyset.
\]
where $\bar{\bh}_k\geq 0$ and $\|\bar{\bh}_k\|$ is constant over $1\leq k\leq K$. In other words, each row of $\bar{\BH}\in\RR^{D\times K}$ is a one-hot vector in $\RR^K$ multiplied by a nonnegative scaler. 
This reduces to
show the existence of a vector $\bbeta_{k}\in\RR^d$ such that
\begin{equation}\label{featc:event}
\BX_{k}^{\top}\bbeta_{k} = \bone_n,~~~~\BX^{\top}_{k'} \bbeta_{k} \leq 0,~~\forall 1\leq k'\neq k \leq K,
\end{equation}
for each $1\leq k\leq K$, which is a linear feasibility problem.
The answer depends on $K$, $d$ and $n$. 

\vskip0.25cm

Our first result is quite general and applies to any dataset, and it concerns whether the global minimizer to~\eqref{def:dnn2} contains the ${\cal NC}$ configuration, i.e., whether there exists $\BW_1$ such that $\BH = \sigma(\BW_1^{\top}\BX) = \bar{\BH} \otimes \bone_n^\top$ for some $\bar{\BH}\in\RR_+^{D\times K}$.

\begin{theorem}[\bf Neural collapse for general datasets]\label{thm:NC_general}
For any general dataset $\BX$, we have the following results:
\begin{enumerate}[(a)]
\item Suppose $\BX$ is not linearly separable, then the neural collapse does not occur.
\item Suppose $d\geq Kn$ and moreover $\BX^{\top}\BX$ is of rank $Kn$, and then~\eqref{featc:event} satisfies for any $1\leq k\leq K$.
\item Suppose $d < n$ and $\BX_k^{\top}$ does not contain $\bone_n$ in its range, then the neural collapse will not occur. 
\end{enumerate}
\end{theorem}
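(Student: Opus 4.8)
The plan is to reduce all three parts to the linear feasibility system~\eqref{featc:event}. As established in the discussion preceding the theorem, Theorem~\ref{thm:upfm} shows that the neural collapse occurs for the two-layer network precisely when there is a $\BW_1$ with $\sigma_{\ReLU}(\BW_1^\top\BX)=\bar\BH\BY$, $\bar\BH\ge 0$, and $\bar\BH^\top\bar\BH\propto\I_K$; nonnegativity together with orthogonality forces the columns of $\bar\BH$ to have pairwise disjoint supports, so every active row of $\BW_1$ fires on exactly one class and must be constant there and inactive elsewhere. Hence the neural collapse occurs if and only if~\eqref{featc:event} is feasible for each $1\le k\le K$, and I would treat each claim as an assertion about this system.

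Parts (a) and (c) are both nonexistence statements, obtained by exhibiting an obstruction to feasibility. For part (c), the equality constraint $\BX_k^\top\bbeta_k=\bone_n$ is solvable only if $\bone_n\in\Ran(\BX_k^\top)$; since $d<n$ forces $\dim\Ran(\BX_k^\top)\le d<n$, the hypothesis $\bone_n\notin\Ran(\BX_k^\top)$ leaves~\eqref{featc:event} infeasible for that $k$, so the collapse cannot occur. For part (a), I would argue by contraposition: if the collapse occurs then~\eqref{featc:event} is feasible, and any solution $\bbeta_k$ obeys $\bbeta_k^\top\bx_{ki}=1>0$ for all $i$ and $\bbeta_k^\top\bx_{k'i}\le 0$ for all $k'\ne k$; thus $\bbeta_k$ is a homogeneous hyperplane strictly separating class $k$ from the union of the other classes, so the data must be linearly separable. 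Negating the conclusion yields (a).

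Part (b) is the existence direction. The assumption that $\BX^\top\BX$ has rank $Kn$ means $\BX\in\RR^{d\times Kn}$ has full column rank, so the map $\bbeta\mapsto\BX^\top\bbeta$ is surjective onto $\RR^{Kn}$. I would then prescribe the target directly: for each fixed $k$, pick $\bbeta_k$ with $\BX^\top\bbeta_k$ equal to $\bone_n$ on the $k$-th block of coordinates and equal to $-\bone_n$ on every other block. This $\bbeta_k$ satisfies $\BX_k^\top\bbeta_k=\bone_n$ and $\BX_{k'}^\top\bbeta_k=-\bone_n\le 0$ for $k'\ne k$, so~\eqref{featc:event} holds; repeating for every $k$ gives feasibility and hence the collapse.

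The individual steps are short because the reduction to~\eqref{featc:event} carries the weight, so the part I would be most careful about is that reduction itself, which underlies (a). One must check that within-class variability collapse forces the exact equality $\BX_k^\top\bbeta_k=\bone_n$ rather than mere positivity, and that it is the nonnegativity of the ReLU features---together with orthogonality---that pins down the disjoint-support, one-hot structure of $\bar\BH$; this nonnegativity, unavailable for unconstrained real features, is exactly what makes the clean equivalence possible. Once it is in place, parts (b) and (c) are one-line linear-algebra consequences requiring no further estimates.
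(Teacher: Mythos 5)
Your proposal is correct and follows essentially the same route as the paper: the reduction of neural collapse to the linear feasibility system~\eqref{featc:event} (via disjoint supports of the nonnegative orthogonal mean features), contraposition through a separating hyperplane for (a), full column rank of $\BX$ for (b), and the range obstruction $\bone_n\notin\Ran(\BX_k^{\top})$ for (c). Your part (b) is in fact slightly more explicit than the paper's one-line argument, since you spell out the surjectivity of $\bbeta\mapsto\BX^{\top}\bbeta$ and exhibit the target vector, but the underlying idea is identical.
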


The proof is very simple and thus we present it here. 
The interesting implication of Theorem~\ref{thm:NC_general} is that even if the neural network is extremely wide, ${\cal NC}$ may not happen if the input dimension $d$ is too small, which answers Question 1.

\begin{proof}[\bf Proof of Theorem~\ref{thm:NC_general}]
For (a), we prove it by contradiction. Suppose the neural collapse occurs, then 
\[
\BX_{k}^{\top}\bbeta_{k} = \bone_n,~~~~\BX^{\top}_{k'} \bbeta_{k} \leq 0,~~\forall k'\neq k.
\]
It means the $k$-th cluster $\BX_k$ is separated from the rest by the hyperplane $\{\bx: \lag \bx, \bbeta_k\rag =1/2\}$ for all $k$. Therefore, if the linear separability fails to hold, then the neural collapse will not occur.
For (b), suppose $\BX$ is of rank $Kn$, then the range of $\BX$ is of dimension $Kn$ and definitely contains a vector such that~\eqref{featc:event} holds.
For (c), suppose $\BX_k^{\top}$ does not $\bone_n$ in its column space, then the first equality in~\eqref{featc:event} cannot hold.
\end{proof}

\subsection{${\cal NC}$ and SNR}
Therefore, the more interesting scenario is $n < d\leq Kn$ as when the input data dimension is greater than the data size, i.e., $d> Kn$, it is very likely that the ${\cal NC}$ will occur due to the linear feasibility in~\eqref{featc:event}. 
To have a concrete discussion on the ${\cal NC}$ with $d< Kn$, 
 we consider the Gaussian mixture model with $K$ classes:
\[
\BX_{k} = \bmu_{k}\bone_n^{\top} + \sigma\BZ_{k}\in\RR^{d\times n}.
\]
We start with a two-layer neural network with a Gaussian mixture model (GMM) of two clusters and then extend to the $K$-class scenario. 
The goal is to see how the signal-to-noise ratio (SNR) in the data affects the ${\cal NC}$.
The data are assumed in the following form:
\[
\BX^{\top} := \begin{bmatrix}\BX_1^{\top} \\ \BX_2^{\top} \end{bmatrix}:= 
\begin{bmatrix}
 \bone_n\bmu_1^{\top} + \sigma\BZ_1 \\
\bone_n\bmu_2^{\top}+ \sigma\BZ_2
\end{bmatrix}\in\RR^{2n\times d}
\]
where $\bmu_1$ and $\bmu_2$ are the cluster centers, and each cluster consists of $n$ samples in $\RR^d.$

\begin{theorem}[\bf Neural collapse for GMM with two clusters]\label{thm:nc_gmm2}
 Let $0<\eps<1$, $2n \geq d > n+C\eps^{-2}\log{n}$, and $\theta$ be the angle between $\bmu_1$ and $\bmu_2$. With probability at least $1-O(n^{-1})$,~\eqref{featc:event} is feasible, i.e., the global minimizer to the empirical risk is given by the ${\cal NC}$ configuration under either scenario below:
 \begin{itemize}
\item For $\cos\theta< -4\eps/(1-\eps)^2$, i.e., $\theta > \pi - \arccos (4\eps/(1-\eps)^2)$, then
\begin{equation}\label{thmeq:gmm2_eq_a}
\sigma \lesssim (1-\eps)\sqrt{\frac{d-n}{d\log n}}\min\{ \|\bmu_1\|,\|\bmu_2\|\}.
\end{equation}
\item For $\cos\theta> -4\eps/(1-\eps)^2$, i.e., $\theta < \pi - \arccos (4\eps/(1-\eps)^2)$, then
\begin{equation}\label{thmeq:gmm2_eq_b}
\sigma \lesssim (1-\eps) \sqrt{\frac{d-n}{d\log n} \left( 1- \left( |\cos\theta| + \frac{4\eps}{(1-\eps)^2}\right)^2\right) } \min\{\|\bmu_1\|,\|\bmu_2\|\}.
\end{equation}
\end{itemize}
In particular, if $\bmu_1 = -\bmu_2 = \bmu$, i.e., $\theta=\pi$, then
\[
\frac{\sigma}{\|\bmu\|} \lesssim (1-\eps) \sqrt{\frac{d-n}{d\log n}},
\]
then neural collapse occurs with probability at least $1 - O(n^{-1}).$
Suppose $d \geq 2n$,~\eqref{featc:event} is feasible with probability 1, i.e., the neural collapse occurs.
\end{theorem}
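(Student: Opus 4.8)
The plan is to reduce neural collapse to the linear feasibility~\eqref{featc:event} and to exhibit an explicit feasible $\bbeta_1$ living in the null space of the noise. By the reduction leading to~\eqref{featc:event} (a consequence of Theorem~\ref{thm:upfm}), for $K=2$ neural collapse occurs iff there are $\bbeta_1,\bbeta_2\in\RR^d$ with $\BX_1^\top\bbeta_1=\bone_n,\ \BX_2^\top\bbeta_1\le\bzero$ and the mirror-image conditions for $\bbeta_2$. Swapping $(\bmu_1,\BZ_1)\leftrightarrow(\bmu_2,\BZ_2)$ exchanges the two problems, so I would construct $\bbeta_1$ only; since its SNR threshold will involve $\|\bmu_2\|$ and that of $\bbeta_2$ will involve $\|\bmu_1\|$, intersecting the two events yields the $\min\{\|\bmu_1\|,\|\bmu_2\|\}$ in the statement.

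The central device is to look for $\bbeta_1$ in the random subspace $\mathcal{W}:=\Null(\BZ_1)$, of dimension $d-n$, which is independent of both $(\bmu_1,\bmu_2)$ and $\BZ_2$. For any $\bbeta\in\mathcal{W}$ one has $\BX_1^\top\bbeta=(\bmu_1^\top\bbeta)\bone_n$, so the single scalar constraint $\bmu_1^\top\bbeta=1$ already enforces $\BX_1^\top\bbeta=\bone_n$ exactly; meanwhile $\BX_2^\top\bbeta=(\bmu_2^\top\bbeta)\bone_n+\sigma\BZ_2\bbeta$. Since $\bbeta$ is independent of $\BZ_2$, the entries of $\BZ_2\bbeta$ are i.i.d.\ $N(0,\|\bbeta\|^2)$, and a maximal Gaussian inequality gives $\BX_2^\top\bbeta\le\bzero$ with probability $1-O(n^{-1})$ whenever $-\bmu_2^\top\bbeta\gtrsim\sigma\|\bbeta\|\sqrt{\log n}$. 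Feasibility therefore reduces to a deterministic problem in $\mathcal{W}$: maximize $-\bmu_2^\top\bbeta/\|\bbeta\|$ subject to $\bmu_1^\top\bbeta=1$.

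Writing $\bp_i:=\BP_\mathcal{W}\bmu_i$ and splitting $\bp_2$ into its parts along and orthogonal to $\bp_1$, this is a one-parameter optimization I can solve in closed form; I expect the outcome to be that feasibility holds once $\|\bp_2\|\gtrsim\sigma\sqrt{\log n}$ when $\cos\theta\le0$ (the signal term $-\bmu_2^\top\bbeta>0$ is usable) and once $\|\bp_2-\tfrac{\bp_1^\top\bp_2}{\|\bp_1\|^2}\bp_1\|\gtrsim\sigma\sqrt{\log n}$ when $\cos\theta>0$. Because $\mathcal{W}$ is a uniformly random $(d-n)$-subspace independent of the means, concentration of the projection of a fixed vector yields $\|\bp_i\|^2=\tfrac{d-n}{d}\|\bmu_i\|^2(1\pm\eps)$ and $\bp_1^\top\bp_2=\tfrac{d-n}{d}\bmu_1^\top\bmu_2(1\pm\eps)$, so that $\|\bp_2-\tfrac{\bp_1^\top\bp_2}{\|\bp_1\|^2}\bp_1\|^2\approx\tfrac{d-n}{d}\|\bmu_2\|^2\sin^2\theta$; propagating the $(1\pm\eps)$ factors through $\cos\theta$ produces both the case threshold $\cos\theta=-4\eps/(1-\eps)^2$ and the correction $4\eps/(1-\eps)^2$, and substituting gives the clean bound~\eqref{thmeq:gmm2_eq_a} in the first regime and the $\sin\theta$-bound~\eqref{thmeq:gmm2_eq_b} in the second. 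The case $\theta=\pi$ falls in the first regime with $\|\bp_2\|\approx\sqrt{(d-n)/d}\,\|\bmu\|$, giving $\sigma/\|\bmu\|\lesssim\sqrt{(d-n)/(d\log n)}$. For $d\ge2n$ I would instead take $\bbeta_1\in\Null(\BX_2^\top)$, whose dimension $d-n\ge n$ lets me solve the $n$ equations $\BX_1^\top\bbeta_1=\bone_n$ almost surely; then $\BX_2^\top\bbeta_1=\bzero$ deterministically and feasibility holds with probability one.

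The main obstacle is the concentration step as $d\to n^{+}$: the projection fraction $(d-n)/d$ shrinks to $0$, so pinning $\|\BP_\mathcal{W}\bmu_i\|^2$ and $\bp_1^\top\bp_2$ to multiplicative accuracy $\eps$, uniformly over the two classes, is exactly what forces the hypothesis $d>n+C\eps^{-2}\log n$; obtaining the sharp factor $1-(|\cos\theta|+4\eps/(1-\eps)^2)^2$ requires tracking the $\eps$-errors in numerator and denominator of the projected correlation at once. A secondary but essential point to check is the independence bookkeeping: the construction must keep $\bbeta$ a function of $(\mathcal{W},\bmu_1,\bmu_2)$ alone, so that $\|\bbeta\|$ is a constant for the Gaussian maximal inequality against $\BZ_2$; choosing $\mathcal{W}=\Null(\BZ_1)$ is precisely what secures this while making $\BX_1^\top\bbeta=\bone_n$ hold exactly rather than approximately.
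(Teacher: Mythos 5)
Your proposal is correct and takes essentially the same route as the paper's proof: restrict $\bbeta$ to $\Null(\BZ_1)$ with the normalization $\lag\bmu_1,\bbeta\rag=1$, maximize the margin $-\lag\bmu_2,\bbeta\rag/\|\bbeta\|$ in closed form by splitting along and orthogonal to the projected first mean (the paper's Lemma~\ref{lemma:featc_opt}), control the projected norms and angle by Johnson--Lindenstrauss concentration on the random $(d-n)$-dimensional subspace (Lemma~\ref{lemma:JLangle}), and finish with a union bound over the $n$ Gaussian constraints; your treatment of $d\geq 2n$ via $\Null(\BX_2^{\top})$ is an equivalent instantiation of the paper's full-row-rank argument. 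One caveat on bookkeeping: the projected inner product concentrates additively, not multiplicatively --- your claim $\bp_1^{\top}\bp_2=\tfrac{d-n}{d}\,\bmu_1^{\top}\bmu_2(1\pm\eps)$ fails when $\bmu_1^{\top}\bmu_2\approx 0$ --- and it is the additive bound $|\cos\widetilde{\theta}-\cos\theta|\leq 4\eps/(1-\eps)^2$ (the paper's Lemma~\ref{lemma:JLangle}) that actually produces the case threshold and correction you quote, so your final bounds are right but that intermediate step should be restated.
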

In other words, if $\sigma$ is sufficiently small, then even if $d < 2n$, the neural collapse still occurs. 
Now we proceed to consider data that satisfy the GMM with $K$ classes, i.e.
\begin{equation}\label{eq:gmmk}
\BX^{\top} := \begin{bmatrix}\BX_1^{\top} \\ \vdots \\ \BX_K^{\top} \end{bmatrix}:= 
\begin{bmatrix}
 \bone_n\bmu_1^{\top} + \sigma\BZ_1 \\
\vdots \\
\bone_n\bmu_K^{\top}+ \sigma\BZ_K
\end{bmatrix}\in\RR^{Kn\times d}
\end{equation}
where $\{\BZ_k\}_{k=1}^K$ are $K$ i.i.d. $n\times d$ Gaussian random matrices. 

We define the mean vector matrix by
\[
\BPi = [\bmu_1,\bmu_2,\cdots,\bmu_K]^{\top}\in\RR^{K\times d}.
\]
Also, we assume $\BPi$ is a matrix of full row rank. If it is not full rank, we simply pick the maximum independent set of $\{\bmu\}_{k=1}^K$, and then the theorem above with $K=2$ generalizes to the $K$-class case. 
\begin{theorem}[\bf Neural collapse for GMM with $K$ clusters]\label{thm:nc_gmmk}
For the GMM with $K$ clusters, the following holds.
\begin{enumerate}[(a)]
\item Suppose $\BPi$ is full rank, $d-n\geq CK^2\log n$, and
\begin{equation}\label{thmeq:gmmk_eq_a}
\sigma \lesssim \sqrt{\frac{d-n}{d \log (Kn)}}\cdot\frac{\sigma_{\min}(\BPi)}{\sqrt{K-1}},
\end{equation}
then there exists a global minimizer to the empirical risk minimizer which is given by the ${\cal NC}$ configuration with probability at least $1-O(n^{-1}).$

\item Suppose 
\begin{equation}\label{thmeq:gmmk_eq_b}
\frac{d}{n} \geq \frac{K+1}{2} + 2\sqrt{\frac{(K-1)\log n}{n}} +\frac{K+2\log n}{n},
\end{equation}
then the neural collapse occurs with a high probability of least $1-O(n^{-1})$.

\item Suppose $ d\geq Kn$, then the neural collapse occurs with probability one since $\BX$ is of rank $Kn.$
\end{enumerate}
\end{theorem}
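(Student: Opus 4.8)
The plan is to reduce every case to the linear feasibility system~\eqref{featc:event}. By the discussion preceding Theorem~\ref{thm:NC_general}, the ${\cal NC}$ configuration is realized by a global minimizer precisely when, for each $1\le k\le K$, there is $\bbeta_k\in\RR^d$ with $\BX_k^\top\bbeta_k=\bone_n$ and $\BX_{k'}^\top\bbeta_k\le\bzero$ for every $k'\ne k$; so it suffices to certify feasibility with the stated probability. Part~(c) is then immediate: for the GMM~\eqref{eq:gmmk} with $d\ge Kn$ the matrix $\BX$ has rank $Kn$ almost surely (its law has a density on $\RR^{d\times Kn}$), so Theorem~\ref{thm:NC_general}(b) gives feasibility with probability one.

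For part~(a) I would build $\bbeta_k$ by hand after splitting $\RR^d=\mathcal{S}\oplus\mathcal{S}^\perp$, where $\mathcal{S}=\operatorname{span}\{\bmu_1,\dots,\bmu_K\}$ has dimension $K$ because $\BPi$ has full row rank. Write $\bbeta_k=\bbeta_k^{\mathcal{S}}+\bbeta_k^\perp$. The signal part comes from the dual frame of the means, i.e.\ from the columns of $\BPi^\top(\BPi\BPi^\top)^{-1}$, chosen so that $\langle\bmu_k,\bbeta_k^{\mathcal{S}}\rangle=1$ and $\langle\bmu_{k'},\bbeta_k^{\mathcal{S}}\rangle=-\delta$ for a margin $\delta>0$ and all $k'\ne k$; its size obeys $\|\bbeta_k^{\mathcal{S}}\|\le\sqrt{1+(K-1)\delta^2}/\sigma_{\min}(\BPi)$, which is exactly where $\sigma_{\min}(\BPi)$ and the $\sqrt{K-1}$ of~\eqref{thmeq:gmmk_eq_a} enter. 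With $\BZ_k^{\mathcal{S}},\BZ_k^\perp$ the projections of $\BZ_k$ onto $\mathcal{S},\mathcal{S}^\perp$, the orthogonal part then repairs the within-class equalities exactly by solving $\sigma\,\BZ_k^\perp\bbeta_k^\perp=\br_k$ with $\br_k=-\sigma\,\BZ_k^{\mathcal{S}}\bbeta_k^{\mathcal{S}}$, whose minimum-norm solution is $\bbeta_k^\perp=\tfrac1\sigma(\BZ_k^\perp)^\top(\BZ_k^\perp(\BZ_k^\perp)^\top)^{-1}\br_k$, so that $\BX_k^\top\bbeta_k=\bone_n$ holds by construction. The decisive estimate is the size of $\bbeta_k^\perp$: since $\br_k$ is independent of $\BZ_k^\perp$, one has $\|\bbeta_k^\perp\|^2=\sigma^{-2}\br_k^\top(\BZ_k^\perp(\BZ_k^\perp)^\top)^{-1}\br_k\approx\sigma^{-2}\|\br_k\|^2\cdot\tfrac1n\tr\big((\BZ_k^\perp(\BZ_k^\perp)^\top)^{-1}\big)$, and the inverse-Wishart trace concentrates at $\tfrac{n}{d-K-n}\approx\tfrac{n}{d-n}$, giving $\|\bbeta_k^\perp\|\lesssim\|\bbeta_k^{\mathcal{S}}\|\sqrt{n/(d-n)}$ and hence $\|\bbeta_k\|\lesssim\|\bbeta_k^{\mathcal{S}}\|\sqrt{d/(d-n)}$. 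For $k'\ne k$, $\langle\bx_{k'j},\bbeta_k\rangle=-\delta+\sigma\langle\bz_{k'j},\bbeta_k\rangle$ with $\bz_{k'j}$ independent of $\bbeta_k$, so a union bound over all $(K-1)n$ off-cluster points controls $\max_{k',j}\sigma|\langle\bz_{k'j},\bbeta_k\rangle|\lesssim\sigma\|\bbeta_k\|\sqrt{\log(Kn)}$. Imposing $\delta\gtrsim\sigma\|\bbeta_k\|\sqrt{\log(Kn)}$ yields a quadratic inequality in $\delta$ that admits a positive solution exactly when $\sigma\lesssim\sqrt{\frac{d-n}{d\log(Kn)}}\,\frac{\sigma_{\min}(\BPi)}{\sqrt{K-1}}$, which is~\eqref{thmeq:gmmk_eq_a}; the hypothesis $d-n\ge CK^2\log n$ is what makes $\BZ_k^\perp(\BZ_k^\perp)^\top$ invertible and its inverse trace concentrate, and assembling the $O(n^{-1})$ failure probabilities over the $K$ classes gives the guarantee.

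Part~(b) is handled in the signal-free regime by restricting $\bbeta_k$ to $\mathcal{S}^\perp$, so that $\langle\bmu_{k'},\bbeta_k\rangle=0$ for all $k'$ and~\eqref{featc:event} reduces to a pure-noise problem: find $\bbeta_k\in\mathcal{S}^\perp$ with $\sigma\BZ_k^\perp\bbeta_k=\bone_n$ and $\sigma\BZ_{k'}^\perp\bbeta_k\le\bzero$ for $k'\ne k$, where each $\BZ_k^\perp$ is (after fixing an orthonormal basis of $\mathcal{S}^\perp$) an $n\times(d-\rank(\BPi))$ standard Gaussian block. This is the question of whether $K$ clusters of $n$ i.i.d.\ Gaussian points in dimension $m:=d-\rank(\BPi)$ admit a simultaneous one-versus-rest separation with exact interpolation; by Gordan's theorem of the alternative together with Gaussian concentration (equivalently a Cover/Wendel function-counting estimate), feasibility holds with high probability once $m\gtrsim(K+1)n/2$. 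Here the $n$ equality constraints cost one dimension each and each of the $(K-1)n$ homogeneous inequalities costs half a dimension on average, which is the origin of the $(K+1)/2$ threshold; tracking $\rank(\BPi)\le K$ and the concentration/union-bound corrections yields precisely~\eqref{thmeq:gmmk_eq_b}, and the guarantee is manifestly independent of the SNR.

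\textbf{Main obstacle.} I expect the crux to be the sharp bound on $\|\bbeta_k^\perp\|$ in part~(a). The naive smallest-singular-value estimate $\|\bbeta_k^\perp\|\lesssim\|\br_k\|/\big(\sigma(\sqrt{d-K}-\sqrt n)\big)$ loses a factor $\sqrt{n/(d-n)}$ and would degrade~\eqref{thmeq:gmmk_eq_a}, so one must exploit the independence of $\br_k$ and $\BZ_k^\perp$ to replace the extreme eigenvalue by the averaged inverse-Wishart trace $\tfrac1n\tr\big((\BZ_k^\perp(\BZ_k^\perp)^\top)^{-1}\big)\approx(d-n)^{-1}$, and then control the $(K-1)n$ conditionally independent cross-terms uniformly. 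Balancing this sharp bound against the dual-frame margin is exactly what pins down the exponents in~\eqref{thmeq:gmmk_eq_a}.
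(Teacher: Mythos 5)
Your proposal is correct, and its skeleton---reduce each part to the feasibility of~\eqref{featc:event}, build an explicit $\bbeta_k$ with a margin, and close with a union bound over the $(K-1)n$ off-class points---is the same as the paper's; the genuine difference is the technical route in part (a). The paper parametrizes the feasible set through the null space of the whole noise block: it writes $\bbeta=\BPhi\balpha$ with $\BPhi\in\RR^{d\times(d-n)}$ an orthonormal basis of $\Null(\BZ_k)$, imposes $\BPi\bbeta=\be_k-\gamma(\bone_K-\be_k)$, and takes the minimum-norm $\balpha$, so that $\|\bbeta\|^2$ is a quadratic form in $(\BPi\BPhi\BPhi^{\top}\BPi^{\top})^{-1}$; the only random-matrix input is then the Johnson--Lindenstrauss-type bound of Lemma~\ref{lemma:JLbasis}, $\lambda_{\min}(\BPi\BPhi\BPhi^{\top}\BPi^{\top})\gtrsim\frac{d-n}{d}\sigma_{\min}^2(\BPi)$, which is exactly where $d-n\geq CK^2\log n$ is consumed (one needs $\eps\lesssim 1/K$ in that lemma). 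Your construction selects a different point of the same affine set---note that your $\bbeta_k$ also satisfies $\BZ_k\bbeta_k=0$ and $\BPi\bbeta_k=\be_k-\delta(\bone_K-\be_k)$---namely a dual-frame signal part plus a minimum-norm noise-cancelling part in $\mathcal{S}^{\perp}$, and this forces you to estimate $\br_k^{\top}(\BZ_k^{\perp}(\BZ_k^{\perp})^{\top})^{-1}\br_k$; as you correctly flag in your ``main obstacle'', the extreme-singular-value bound is lossy when $d-n\ll n$, so you genuinely need the independence of $\br_k$ from $\BZ_k^{\perp}$ together with inverse-Wishart trace concentration to recover $\|\bbeta_k^{\perp}\|\lesssim\|\bbeta_k^{\mathcal{S}}\|\sqrt{n/(d-n)}$. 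That works and yields the same threshold~\eqref{thmeq:gmmk_eq_a}, but it is precisely the difficulty the paper's parametrization never meets: there the hard random object is the short-fat $K\times(d-n)$ matrix $\BPi\BPhi$, which is automatically well conditioned, rather than an $n\times(d-K)$ Wishart block. In part (b) the two arguments coincide in substance: both remove the $K$ mean directions, spend $n$ dimensions on the equality constraints, and reduce to finding $\bar{\bv}$ with $\BA\bar{\bv}<0$ for a Gaussian $\BA\in\RR^{(K-1)n\times(d-n-K)}$, which the paper settles by hyperplane separation (Gordan), Gordon's comparison inequality (Proposition~\ref{Thm:Gordontess}), and Lipschitz concentration (Lemma~\ref{lemma:phi_2_concen}); the one step your sketch should make explicit is the rescaling that reconciles the inhomogeneous equalities with the homogeneous inequalities---pick $\bw$ in the null space of the equality block with strict inequalities $\BA\bw<0$, then take $\bbeta_0+t\bw$ with $t$ large so the homogeneous term dominates the fixed particular solution, exactly as the paper remarks. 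Part (c) is identical in both treatments.
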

Theorem~\ref{thm:nc_gmm2} and~\ref{thm:nc_gmmk}(a) provide an answer to Question 2. If the data has a cluster structure, i.e., $\sigma$ is small compared with $\|\bmu\|$, the ${\cal NC}$ is more likely to occur. The proof follows from constructing a vector that satisfies~\eqref{featc:event} with high probability by using the union bound. Theorem~\ref{thm:nc_gmmk}(b) implies that for the GMM, $d\geq Kn$ is not necessary to guarantee the emergence of the ${\cal NC}$ and in fact $d\geq (K+1)n/2$ suffices. The main technique is to first reformulate the feasibility of~\eqref{featc:event} as a problem of finding the maximum of a Gaussian process. Then we apply Gordon's bound to find a sharp upper bound of the Gaussian process and obtain Theorem~\ref{thm:nc_gmmk}(b). The proofs of Theorem~\ref{thm:nc_gmm2} and~\ref{thm:nc_gmmk} are provided in Section~\ref{ss:2nn_fp}.


Note that Theorem~\ref{thm:NC_general} provides a negative answer to Question 1 for two-layer neural networks. However, things become interesting if we turn to three-layer neural networks with the feature map matrix equal to
\begin{equation}\label{eq:three_layer_feature}
\BH(\BX) = \sigma_{\ReLU}\left(\BW_2^\top\sigma_{\ReLU}\left(\BW_1^\top\BX\right)\right)
\end{equation}
where $\BW_1\in \mathbb{R}^{d \times d_1}$ and $\BW_2\in \mathbb{R}^{d_1 \times D}$, and $\BX \in \mathbb{R}^{d \times N}$ represents a dataset of $N$ points. 
The question of whether there exists $\BW_1$ and $\BW_2$ such that $\BH(\BX) = \bar{\BH} \otimes \bone_n^\top$ can be simplified to a two-layer scenario: it suffices to consider a random feature model: each $\BW_1\in\RR^{d\times d_1}$ is assumed to be i.i.d. $\mathcal{N}(0,1/d_1).$ In other words, for a given input $\bx\in\RR^d$, the output through the first layer $\sigma_{\ReLU}(\BW_1^{\top}\bx)\in\RR^{d_1}$ exactly satisfies truncated normal distribution.

Based on Theorem~\ref{thm:NC_general}, to induce the ${\cal NC}$, it suffices to ensure that $\sigma_{\ReLU}(\BW_1^{\top}\BX)$ is of rank $N = Kn$ for a sufficiently large $d_1$, which is guaranteed by the following theorem.
\begin{theorem}[\bf Neural collapse for three layer random feature network]\label{Thm:three_layer_main}
Suppose $\{\bx_1,\cdots,\bx_N\}$ is a given dataset with any pair of them non-parallel and $\|\bx_i\| = 1.$
Given a three-layer network with feature map~\eqref{eq:three_layer_feature} and $\BW_1$ is i.i.d. $\mathcal{N}(0,1/d_1)$, if we have
\[
d_1 \gtrsim \frac{\|\BX\|^4}{\lambda^2_{\min}(\BH_{\infty})}\cdot N \log N
\]
where $\BH_{\infty}$ is the kernel matrix that equals:
\[
[\BH_{\infty}]_{ij} = \E_{\bz\sim\mathcal{N}(0,\I_d)} \left( \sigma_{\ReLU} (\lag \bz, \bx_i\rag) - \sqrt{\frac{2}{\pi}}\right)\left(\sigma_{\ReLU}(\lag \bz, \bx_j\rag) - \sqrt{\frac{2}{\pi}}\right).
\]
Then, with probability at least $1-O(N^{-2})$, the nonnegative matrix $\sigma_{\ReLU}\left(\BW_1^\top\BX\right)\in\RR^{d_1\times N}$ is of full column rank. Hence, Theorem~\eqref{thm:NC_general}(b) implies that neural collapse occurs.
\end{theorem}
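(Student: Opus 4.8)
The plan is to observe that the whole statement reduces to a single random-matrix fact: that the first-layer feature matrix $\BPhi := \sigma_{\ReLU}(\BW_1^\top\BX)\in\RR^{d_1\times N}$ has full column rank with probability at least $1-O(N^{-2})$. Indeed, once $\BPhi$ has rank $N=Kn$ and $d_1\geq N$, its columns form a new dataset in $\RR^{d_1}$ whose Gram matrix has rank $N$, so Theorem~\ref{thm:NC_general}(b) applied with $d\to d_1$ produces a $\BW_2$ realizing the $\mathcal{NC}$ configuration for the remaining two-layer network. Hence it suffices to certify $\lambda_{\min}(\BPhi^\top\BPhi)>0$ for the $N\times N$ Gram matrix.

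To set up the concentration, I would use the positive homogeneity of ReLU: writing each column $\bw_r$ of $\BW_1$ as $\bg_r/\sqrt{d_1}$ with $\bg_r\sim\mathcal{N}(\bzero,\I_d)$ i.i.d., one has $\sigma_{\ReLU}(\lag\bw_r,\bx_i\rag)=\tfrac{1}{\sqrt{d_1}}\sigma_{\ReLU}(\lag\bg_r,\bx_i\rag)$, so that
\[
\BPhi^\top\BPhi=\frac{1}{d_1}\sum_{r=1}^{d_1}\bv_r\bv_r^\top,\qquad \bv_r:=\sigma_{\ReLU}(\BX^\top\bg_r)\in\RR^N,
\]
a normalized sum of i.i.d. rank-one matrices with mean $\E[\bv_r\bv_r^\top]=\BK$, the uncentered ReLU-feature kernel $\BK_{ij}=\E_{\bg}[\sigma_{\ReLU}(\lag\bg,\bx_i\rag)\sigma_{\ReLU}(\lag\bg,\bx_j\rag)]$. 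A direct computation, using that $\E\,\sigma_{\ReLU}(\lag\bg,\bx_i\rag)$ is the same constant for all $i$ because $\|\bx_i\|=1$, shows $\BK-\BH_\infty$ is a nonnegative multiple of $\bone_N\bone_N^\top$; hence $\BK\succeq\BH_\infty$ and $\lambda_{\min}(\BK)\geq\lambda_{\min}(\BH_\infty)$. The pairwise non-parallel hypothesis is exactly what guarantees $\lambda_{\min}(\BH_\infty)>0$, so the target mean matrix is strictly positive definite with a quantitative gap.

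It then remains to show $\|\BPhi^\top\BPhi-\BK\|<\lambda_{\min}(\BH_\infty)$ with the stated probability, after which Weyl's inequality gives $\lambda_{\min}(\BPhi^\top\BPhi)\geq\lambda_{\min}(\BK)-\|\BPhi^\top\BPhi-\BK\|>0$. The key analytic input is that $\bv_r$ is a sub-Gaussian vector with $\|\bv_r\|_{\psi_2}\lesssim\|\BX\|$: the map $\bg\mapsto\sigma_{\ReLU}(\BX^\top\bg)$ is $\|\BX\|$-Lipschitz (ReLU is $1$-Lipschitz and $\|\BX^\top\|=\|\BX\|$), so Gaussian Lipschitz concentration controls its fluctuations. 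Feeding this into a matrix concentration bound (matrix Bernstein, or a sub-Gaussian covariance-estimation bound) for the i.i.d.\ rank-one sum yields a deviation of order $\|\BX\|^2\big(\sqrt{N\log N/d_1}+N\log N/d_1\big)$ at failure probability $O(N^{-2})$; choosing $d_1\gtrsim \|\BX\|^4 N\log N/\lambda_{\min}^2(\BH_\infty)$ forces this below $\lambda_{\min}(\BH_\infty)$ and closes the argument.

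The hard part will be this last concentration step. Because each summand $\bv_r\bv_r^\top$ is only sub-exponential in operator norm (a product of two sub-Gaussians), one must control both the uniform bound $\max_r\|\bv_r\|^2\lesssim N+\|\BX\|^2\log(\cdot)$ (via Gaussian concentration and a union bound over the $d_1$ neurons) and the matrix variance $\lesssim\|\BX\|^2\|\BK\|$, in order to recover exactly the $\|\BX\|^4 N\log N/\lambda_{\min}^2$ scaling rather than a looser one; in particular the nonzero mean of $\bv_r$ must be split off and handled separately, since folding it into the sub-Gaussian norm would inject a spurious $\sqrt{N}$ factor. Establishing $\lambda_{\min}(\BH_\infty)>0$ from the non-parallel assumption is the only other ingredient, and is a standard strict-positive-definiteness property of the ReLU feature kernel.
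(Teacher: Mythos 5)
Your proposal follows essentially the same route as the paper's proof: reduce the theorem to showing $\sigma_{\ReLU}(\BW_1^\top\BX)$ has full column rank so that Theorem~\ref{thm:NC_general}(b) applies, write the Gram matrix as a normalized i.i.d.\ rank-one sum, split off the mean $\sqrt{2/\pi}\bone_N$ (exactly as the paper does, avoiding the spurious $\sqrt{N}$ you flag), bound the sub-Gaussian norm of the centered features by $\|\BX\|$ via the Lipschitz property of $\bg\mapsto\sigma_{\ReLU}(\BX^\top\bg)$, and combine covariance concentration with Weyl's inequality and $\E[\bv\bv^\top]\succeq\BH_\infty$ to keep $\lambda_{\min}$ of the Gram matrix strictly positive. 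The one ingredient you defer as ``standard''---strict positive definiteness of $\BH_\infty$ under the pairwise non-parallel assumption---is precisely what the paper establishes in Lemma~\ref{lemma:tl_exp} (adapted from the ReLU kernel argument of Du et al.), so your citation of it as a known fact is legitimate and the proposal is sound.
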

 In other words, Question 1 has a positive answer (the $\mathcal{NC}$ will occur) as long as the neural network has a depth of more than 2 and the first layer is sufficiently wide with width $d_1\gtrsim N\log N$. The proof of Theorem~\eqref{Thm:three_layer_main} consists of two main steps. First, we prove that $\BH_{\infty}$ is positive definite, i.e., $\lambda_{\min}\left(\BH_{\infty}\right) > 0$. Then, we apply concentration inequalities to control the spectral deviation between $d_1^{-1} \left(\sigma_{\ReLU}\left(\BW_1^\top\BX\right)\right)^{\top} \sigma_{\ReLU}\left(\BW_1^\top\BX\right)$ and $\BH_{\infty}$ to prove $\sigma_{\ReLU}\left(\BW_1^\top\BX\right)$ is full rank. The details are presented in Section~\ref{ss:3layer}.

\vskip0.25cm

\subsection{${\cal NC}$ and generalization}
This section is devoted to Question 3: if ${\cal NC}$ occurs, does it necessarily imply good generalization?
To understand this question, we consider the task of binary classification for a two-layer ReLU network. In particular, throughout the discussion in this section, we will focus on the best misclassification error a simplified two-neuron classifier can achieve in the presence of the ${\cal NC}.$ We need to make some preparations before proceeding to our main results.

\paragraph{Data model:} For the data generative model, we assume they are sampled from GMM with two clusters and the mean vectors are opposite:
\begin{equation}\label{eq:gmm2_rade}
\bx = \xi \bmu + \sigma\bz,~~~\bz\sim\mathcal{N}(0,\I_d)
\end{equation}
where $\xi$ is a Rademacher random variable
and the training data are
\begin{equation}\label{gen:data}
\BX^{\top} = \begin{bmatrix} \bone_n \bmu^\top \\ -\bone_n \bmu^\top \end{bmatrix} + \sigma \BZ\in\RR^{2n\times d}
\end{equation}
where $\pm\bmu \in \mathbb{R}^d$ denotes the class mean and $\BZ \in \mathbb{R}^{2n \times d}$ is a Gaussian random matrix. 

\paragraph{Model under the ${\cal NC}$:}
We now consider the generalization ability for a trained model that exhibits neural collapse:
\begin{equation}\label{eq:gen_model}
f_{\theta}(\bx) = \BW^\top\sigma_{\ReLU}(\BW_1^\top\bx)
\end{equation}
where $\BW_1 \in \mathbb{R}^{d \times D}$ and $\BW \in \mathbb{R}^{D \times 2}$. We denote the $i$-th column of $\BW_1$ by $\bbeta_i$, $1\leq i\leq D$ and the $k$-th column of $\BW$ by $\balpha^{k},~1\leq k\leq 2$. We first try to rewrite~\eqref{eq:gen_model} by exploiting the information of ${\cal NC}$.
 Under the feature collapse ${\cal NC}_1$, we have
\[
\sigma_{\ReLU}(\BW_1^\top \bx_{ki}) = \bar{\bh}_k,~~~\forall~1\leq k\leq 2,~1\leq i\leq n,
\]
where $\bx_{ki}$ is the $i$-th sample in the $k$-th class and $\bar{\bh}_k \in \mathbb{R}^{D}_{+}$ is the mean feature for the class $k$. Based on Theorem~\ref{thm:upfm}(a), we have $\lag \bar{\bh}_1,\bar{\bh}_2\rag=0$ and $\|\bar{\bh}_1\| = \|\bar{\bh}_2\|,$ and thus
\begin{equation}\label{eq:cm_weight}
\BW = [\balpha, -\balpha],~~~\balpha:=\balpha^1 = -\balpha^2 \propto \bar{\bh}_1 -\bar{\bh}_2, ~~~\supp(\bar{\bh}_1) \cap \supp(\bar{\bh}_2) = \emptyset
\end{equation}
since $\BW^{\top}[\bar{\bh}_1,\bar{\bh}_2 ] \propto \I_2 - \BJ_2/2$ and $\BW\propto [\bar{\bh}_1,\bar{\bh}_2](\I_2 - \BJ_2/2) = [\bar{\bh}_1-\bar{\bh}_2, \bar{\bh}_2-\bar{\bh}_1]\in\RR^{D\times 2}$.

We denote $S_1 = \{i : \bh_{1i} \geq 0, \bh_{2i} = 0,~1\leq i\leq D \}$ and $S_2 =\{1,\cdots,D\} \setminus S_1$, and then the classifier becomes
\begin{equation}\label{eq:ncgen_ob}
\begin{aligned}
f_{\theta}(\bx)&=\sum_{i\in S_1} \alpha_{i}\sigma_{\ReLU}(\lag \bbeta_i, \bx\rag) - \sum_{i\in S_2} \alpha_{i}\sigma_{\ReLU}(\lag \bbeta_i, \bx\rag) 
\end{aligned}
\end{equation}
where $\alpha_i > 0$ for $i\in S_1$ and $\alpha_i < 0$ otherwise.
Suppose $\bx$ is sampled from the first class, i.e., $\bx = \bmu + \sigma\bz$, then $f_{\theta}(\bx)$ produces a correct classification if $f_{\theta}(\bmu + \sigma\bz) > 0$ and thus to upper bound the misclassification error, we need to control $\Pr(f_{\btheta}(\mu + \sigma\bz) < 0)$.
This quantity is too complicated to compute exactly because the actual $\bbeta_i$ is unknown and $f_{\btheta}(\cdot)$ involves the sum of $D$ terms. 

\paragraph{\bf Simplification to a two-neuron classifier:} We simplify it by considering the performance of a two-neuron classifier:
\begin{equation}\label{eq:f_reduce}
f(\bx) = \sigma_{\ReLU}(\bbeta_1^{\top}\bx) - \sigma_{\ReLU}(\bbeta_2^{\top}\bx)
\end{equation}
and study whether it is able to correctly classify a given data point. 
This simplification follows from two observations: (a) this two-neuron classifier is a reduced form of~\eqref{eq:ncgen_ob} by setting both $|S_1| = |S_2| = 1$; 
(b) to achieve a good generalization performance on binary classification on the data sampled from GMM, it suffices to find one single hyperplane that is able to separate the two classes. Hence, understanding the generalization of this simple model sheds some light on the general $D$-neuron case.


Our analysis of the two-neuron classifier  relies on our understanding on the output of a single neuron: $g_{\bbeta}(\bx) := \sigma_{\ReLU}(\bbeta^{\top}\bx)$ and consider
\begin{equation}\label{eq:gnc}
{\cal C}_1 : = \{\bbeta: \BX_1^{\top} \bbeta = \bone_n,~~\BX_2^{\top}\bbeta \leq 0\},
~~~~{\cal C}_2 = \{\bbeta: \BX_2^{\top} \bbeta = \bone_n,~~\BX_1^{\top}\bbeta \leq 0\},
\end{equation}
i.e., for any $\bbeta\in{\cal C}_1$, $g_{\bbeta}(\bx)$ maps the first class of training data to 1 and the second class to 0 which means the feature is collapsed. A similar counterpart holds for $\bbeta\in{\cal C}_2.$ In other words, for $\bbeta_k\in {\cal C}_k$, then $f(\bx)$ in~\eqref{eq:f_reduce} achieves the feature variability collapse.

To estimate the misclassification of~\eqref{eq:f_reduce},
we start with the misclassification of $g_{\bbeta}(\bx)$ for $\bbeta\in{\cal C}_k$.
First note that for any $\bbeta\in {\cal C}_1$, it holds that
\begin{align*}
\frac{1}{2}\Big[ \Pr(g_{\bbeta_1}(\bmu+\sigma\bz) \leq 0) + \Pr(g_{\bbeta_1}(-\bmu+\sigma\bz) > 0)\Big] & = \frac{1}{2}\Big[ \Pr( \lag \bmu+\sigma\bz, \bbeta_1\rag \leq 0) + \Pr(\lag -\bmu + \sigma\bz, \bbeta_1\rag > 0)\Big] \\
& = \frac{1}{2}\Pr_{z\sim\mathcal{N}(0,1)}\left( |z| \geq \frac{\lag \bmu,\bbeta_1\rag}{\sigma\|\bbeta_1\|} \right) = \Phi\left( -\frac{\lag \bmu,\bbeta_1\rag}{\sigma\|\bbeta_1\|} \right) 
\end{align*}
where $\Phi(\cdot)$ is the c.d.f. of standard normal distribution. 
Similarly for $\bbeta_2\in {\cal C}_2$, then
\[
\frac{1}{2}\Big[ \Pr(g_{\bbeta_2}(\bmu+\sigma\bz) > 0) + \Pr(g_{\bbeta_2}(-\bmu+\sigma\bz) \leq 0)\Big] = \frac{1}{2}\Pr_{z\sim\mathcal{N}(0,1)}\left( |z| \geq - \frac{\lag \bmu,\bbeta_2\rag}{\sigma\|\bbeta_2\|} \right) = \Phi\left( \frac{\lag \bmu,\bbeta_2\rag}{\sigma\|\bbeta_2\|} \right).
\]

Therefore, under the ${\cal NC}$, the best generalization and misclassification error of~\eqref{eq:f_reduce} is closely related to the global maximum of the following problems:
\begin{equation}\label{eq:nc_gen_opt}
\max_{\bbeta\in{\cal C}_1}~\frac{\lag \bmu, \bbeta\rag}{\|\bbeta\|}~~~\text{ and }~~~
\max_{\bbeta\in{\cal C}_2}~-\frac{\lag \bmu, \bbeta\rag}{\|\bbeta\|}.
\end{equation}
The following theorems provide an estimation of~\eqref{eq:nc_gen_opt} in two different regimes.

\begin{theorem}\label{thm:nc_gen}
Consider the data sampled from~\eqref{gen:data} and 
the following statements hold true:
\begin{enumerate}[(a)]
\item Suppose $d-n+1 > C\eps^{-2}\log n$ and 
  \begin{equation}\label{thmeq:nc_gen_a}
 \frac{\sigma}{\|\bmu\|} \lesssim (1-\eps)\sqrt{\frac{d-n+1}{d \log{n}}}, 
  \end{equation}
 then with probability at least $1-O(n^{-1})$ there exists $\bbeta\in\RR^{d\times D}$ that can induce~\eqref{eq:gnc}. Moreover, we have
\begin{align*}
\min_{\bbeta\in{\cal C}_1} \Phi\left( -\frac{\lag \bmu,\bbeta\rag}{\sigma\|\bbeta\|} \right)\lesssim n^{-2},~~~\min_{\bbeta\in{\cal C}_2} \Phi\left( \frac{\lag \bmu,\bbeta\rag}{\sigma\|\bbeta\|} \right) \lesssim n^{-2}.
\end{align*}

\item For $d \geq 2n \log n$, then the following holds
\begin{equation}\label{eq:error_gen}
\begin{aligned}
& \min_{\bbeta\in{\cal C}_1} \Phi\left( -\frac{\lag \bmu,\bbeta\rag}{\sigma\|\bbeta\|} \right),~~\min_{\bbeta\in{\cal C}_2} \Phi\left( \frac{\lag \bmu,\bbeta\rag}{\sigma\|\bbeta\|} \right) \\ 
& ~~\geq 1- \Phi\left( \left(
 \frac{n }{2d}\left(\frac{s^3 e^{-\frac{1}{2s^2} }}{\sqrt{2\pi}} + s^2+1-(c_1s^2+c_2s)\sqrt{\frac{\log n}{n}}\right) + s^2\right)^{-1/2} \right),
\end{aligned}
\end{equation}
where $s = \sigma/\|\bmu\|$, and $c_1$ and $c_2>0$. 
\end{enumerate}
\end{theorem}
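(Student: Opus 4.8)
The plan is to reduce both parts to the single deterministic functional $\max_{\bbeta\in{\cal C}_k}\lag\bmu,\bbeta\rag/(\sigma\|\bbeta\|)$. Since $\Phi$ is increasing,
\[
\min_{\bbeta\in{\cal C}_1}\Phi\Big(-\tfrac{\lag\bmu,\bbeta\rag}{\sigma\|\bbeta\|}\Big)=\Phi\Big(-\max_{\bbeta\in{\cal C}_1}\tfrac{\lag\bmu,\bbeta\rag}{\sigma\|\bbeta\|}\Big),
\]
and likewise for ${\cal C}_2$, so it suffices to bound this maximum from below in regime (a) and from above in regime (b); I will treat ${\cal C}_1$, the case ${\cal C}_2$ being identical after $\bmu\mapsto-\bmu$. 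Decomposing $\bbeta=\bbeta_{\parallel}+\bbeta_{\perp}$ into its components parallel and orthogonal to $\bmu$ and writing $a:=\lag\bmu,\bbeta\rag$, $s:=\sigma/\|\bmu\|$, one gets the exact identity $\sigma^2\|\bbeta\|^2/a^2=s^2+\sigma^2\|\bbeta_{\perp}\|^2/a^2$, so controlling the misclassification error is equivalent to controlling how small $\|\bbeta_{\perp}\|^2/a^2$ can be over ${\cal C}_1$. The constraints shaping ${\cal C}_1$ are: the first-class equalities $a+\sigma\lag\bz_{1i},\bbeta\rag=1$, which pin every noise inner product $\lag\bz_{1i},\bbeta\rag$ to the common value $(1-a)/\sigma$ (equivalently $\BZ_1\bbeta_{\perp}=\bv$ for an explicit $\bv$ built from $\BZ_1\hat\bmu$), and the second-class inequalities $\lag\bz_{2j},\bbeta\rag\le a/\sigma$.

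For part (a) I would exhibit one admissible $\bbeta$ with large correlation. Feasibility of ${\cal C}_1$ under the stated SNR budget is exactly the $\theta=\pi$ instance of Theorem~\ref{thm:nc_gmm2}, so I take the explicit minimum-norm solution of $\BX_1^\top\bbeta=\bone_n$ produced there, on the same high-probability event verifying $\BX_2^\top\bbeta\le0$. Using the extreme singular values $\sqrt{d}\pm\sqrt{n}$ of the $n\times(d-1)$ Gaussian block of $\BZ_1$ restricted to $\bmu^{\perp}$, together with $\|\BZ_1\hat\bmu\|^2\approx n$, the identity above yields $\lag\bmu,\bbeta\rag/(\sigma\|\bbeta\|)\gtrsim s^{-1}\sqrt{(d-n+1)/d}$. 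Inserting the hypothesis $s\lesssim(1-\eps)\sqrt{(d-n+1)/(d\log n)}$ turns this into $\lag\bmu,\bbeta\rag/(\sigma\|\bbeta\|)\gtrsim\sqrt{\log n}$ with a constant large enough that the Gaussian tail bound $\Phi(-x)\le e^{-x^2/2}$ gives $\Phi(-\cdot)\lesssim n^{-2}$, which is the claim.

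For part (b) I need the harder inequality: a lower bound on $\sigma^2\|\bbeta_{\perp}\|^2/a^2$ uniform over all $\bbeta\in{\cal C}_1$. The baseline comes from the equalities alone, since $\|\bbeta_{\perp}\|^2\ge\|\bv\|^2/\sigma_{\max}^2$ with $\sigma_{\max}^2\approx d$; using $\sigma c=1-a$ for $c=(1-a)/\sigma$ and $\|\BZ_1\hat\bmu\|^2\approx n$ this contributes a term of order $\tfrac{n}{d}\big((1-a)^2/a^2+s^2\big)$. The SNR-sensitive terms come from the second-class inequalities: the small minimum-norm $\bbeta_{\perp}$ is infeasible precisely for the $\approx n(1-\Phi(1/s))$ second-class points whose projection onto $\hat\bmu$ exceeds $1/s$ (those imitating the first class), and enforcing feasibility costs extra orthogonal norm governed by the one-sided second moment $\int_{1/s}^{\infty}(h-1/s)^2\phi(h)\,\diff h\approx \tfrac{2s^3}{\sqrt{2\pi}}e^{-1/(2s^2)}$. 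To make all of this uniform over the adversarial $\bbeta$ I would, as in the proof of Theorem~\ref{thm:nc_gmmk}(b), recast the minimization of $\|\bbeta_{\perp}\|$ under the equality and inequality constraints as an extremum of a Gaussian process and apply Gordon's comparison inequality; in the scalarized problem the second-class inequalities become one-sided (positive-part) Gaussian functionals whose second moments produce the order-one and $s^3e^{-1/(2s^2)}$ contributions, and minimizing the resulting one-dimensional problem over $a$ reproduces $\tfrac{n}{2d}\big(s^3e^{-1/(2s^2)}/\sqrt{2\pi}+s^2+1\big)+s^2$, with concentration of the singular values, Gaussian norms, and the count of imitating points giving the $(c_1 s^2+c_2 s)\sqrt{\log n/n}$ error.

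The main obstacle is exactly this uniform control in part (b). Because $\bbeta$ is chosen after the data are revealed, expectations cannot be taken naively, and the two constraint families interact: the repairs forced by the second-class inequalities must be carried out inside the affine subspace already cut out by the first-class equalities, so their norm cost is not simply additive. Handling this coupling and extracting the sharp constants — in particular the positive-part Gaussian functionals responsible for the order-one and $s^3e^{-1/(2s^2)}$ terms — through the Gaussian min-max machinery is the delicate step; by contrast the spectral and norm concentration inputs are routine.
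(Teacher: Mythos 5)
Your part (a) is essentially correct and coincides with the paper's argument: both exhibit a feasible point lying in the null space of the (centered) noise block $\BZ_1$, lower bound its margin $\lag \bmu,\bbeta\rag/\|\bbeta\|$ by $(1-\eps)\sqrt{(d-n+1)/d}\,\|\bmu\|$ via Johnson--Lindenstrauss, and check the second-class inequalities by a union bound over the $n$ points, exactly as in Theorem~\ref{thm:nc_gmm2}. One inaccuracy: the point constructed there is not the minimum-norm solution of $\BX_1^{\top}\bbeta=\bone_n$; it is the solution of the strengthened system with the extra constraint $\lag\bmu,\bbeta\rag=1$ (hence $\BZ_1\bbeta=0$). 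This does not affect the conclusion, since part (a) only requires one good feasible point.

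Part (b) has a genuine gap, and it sits precisely at the step you flag as ``the delicate step.'' You correctly identify that one needs a lower bound on $\|\bbeta_\perp\|^2/\lag\bmu,\bbeta\rag^2$ uniform over the random, data-dependent set ${\cal C}_1$, and you correctly trace the origins of the $1$, $s^2$, and $s^3e^{-1/(2s^2)}$ contributions (your positive-part second-moment heuristic matches the paper's $\E[-1+sZ]_+^2$ term). But the proposal then defers the uniformity to an unexecuted Gaussian min-max argument, while acknowledging that the coupling between the first-class equalities and the second-class inequalities is unresolved. That deferred step is the entire content of the theorem. Moreover, the comparison principle available in the paper (Proposition~\ref{Thm:Gordontess}, used only for Theorem~\ref{thm:nc_gmmk}(b)) is a one-directional bound on expectations; to run your plan you would need a two-sided, high-probability min-max theorem applicable to a problem in which the constraint set itself is carved out by the same Gaussian matrices entering the objective, and nothing in the proposal shows this can be arranged.

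The paper avoids this machinery entirely by a slack-variable reduction you are missing. Every $\bbeta\in{\cal C}_1$ satisfies, for \emph{its own} values $c=\lag\bmu,\bbeta\rag>0$ and $\bgamma=-\BX_2^{\top}\bbeta\geq 0$, the linear system $\BX_1^{\top}\bbeta=\bone_n$, $\BX_2^{\top}\bbeta=-\bgamma$, $\lag\bmu,\bbeta\rag=c$; hence $\|\bbeta\|$ is at least the norm of the minimum-norm solution of that system, which is explicit through the pseudo-inverse of the $2n\times(d-1)$ Gaussian block $\BZ_{(-1)}$. The only estimate that must hold uniformly in $(c,\bgamma)$ is a single operator-norm bound on $\BZ_{(-1)}$ (this is where $d\geq 2n\log n$ enters), after which the worst case over ${\cal C}_1$ is dominated by $\min_{c>0,\,\bgamma\geq 0}G(c,\bgamma)$ for an explicit random function $G$: the minimization in $\bgamma$ is solved in closed form and produces exactly the positive-part functional $\|[s\BZ_2\be_1-\bone_n]_+\|^2$, and the minimization in $c$ splits into the two cases according to the sign of $n\|\bmu\|-\sigma\bone_n^{\top}\BZ_1\be_1$. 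What remains random are the \emph{fixed} vectors $\BZ_1\be_1,\BZ_2\be_1$, so ordinary (sub-exponential and $\chi^2$) concentration finishes the proof with no uniform-over-$\bbeta$ argument at all. Without this reduction, or a fully executed substitute for it, your part (b) is a plausible plan rather than a proof.
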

Theorem~\ref{thm:nc_gen}(a) focuses on the estimation of~\eqref{eq:nc_gen_opt} in the low-noise regime while Theorem~\ref{thm:nc_gen}(b) concerns the regime when $d$ is sufficiently large and the ${\cal NC}$ always occurs. 
Here we briefly describe the key steps of the proof. The key idea is to approximate the global maximum of~\eqref{eq:nc_gen_opt}. However, it is not straightforward to estimate it exactly. Therefore, we relax~\eqref{eq:nc_gen_opt} by dropping the affine inequality constraints and solving it. Then we consider the sufficient conditions under which the global maximizer to the relaxed problem is also the global maximizer to~\eqref{eq:nc_gen_opt}. This leads to the proof and conclusion of Theorem~\ref{thm:nc_gen}.
For Theorem~\ref{thm:nc_gen}(b) with $d\geq 2n$, it is different from Theorem~\ref{thm:nc_gen}(a) because Theorem~\ref{thm:NC_general} implies that the ${\cal NC}_1$ occurs with probability 1. The feasibility of the ${\cal NC}$ makes it possible to approximate the global maximizer to~\eqref{eq:nc_gen_opt} directly by rewriting the affine constraints in ${\cal C}_1$ and ${\cal C}_2$. The technical details are deferred to Section~\ref{ss:twonn_gen}. With the theorem above, we can obtain a characterization of the generalization performance of a two-neuron classifier in presence of the ${\cal NC}.$

\begin{theorem}[\bf Misclassification of a two-neuron classifier under the ${\cal NC}$]\label{thm:2neuron}
Consider $f(\bx) = g_{\bbeta_1}(\bx) - g_{\bbeta_2}(\bx)$ in~\eqref{eq:f_reduce} where $\bbeta_k\in{\cal C}_k$ for $k=1,2$.
\begin{enumerate}[(a)]
\item Under the assumption of Theorem~\ref{thm:nc_gen}(a), there exists $\bbeta_k\in{\cal C}_k$ such that the misclassification error of $f(\bx)$ in~\eqref{eq:f_reduce} is bounded by $O(n^{-2})$;
\item Under the assumption of Theorem~\ref{thm:nc_gen}(b), the misclassification error of $f(\bx)$ in~\eqref{eq:f_reduce} for any $\bbeta_k\in{\cal C}_k$ is at least 
\[
1 - \Phi\left( \left(
 \frac{n }{2d}\left(\frac{s^3 e^{-\frac{1}{2s^2} }}{\sqrt{2\pi}} + s^2+1-(c_1s^2+c_2s)\sqrt{\frac{\log n}{n}}\right) + s^2\right)^{-1/2} \right)
\]
where $s = \sigma/\|\bmu\|$ measures the noise level in~\eqref{eq:gmm2_rade}.
\end{enumerate}
\end{theorem}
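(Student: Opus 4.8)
The plan is to reduce the balanced misclassification error of the two-neuron classifier to the single-neuron quantities already controlled by Theorem~\ref{thm:nc_gen}, using only an elementary sign analysis of the ReLU difference $f = g_{\bbeta_1}-g_{\bbeta_2}$. Writing the error as
\[
\mathrm{err}(f) = \frac{1}{2}\Big[\Pr\big(f(\bmu+\sigma\bz)\leq 0\big) + \Pr\big(f(-\bmu+\sigma\bz)\geq 0\big)\Big],
\]
the whole argument rests on the fact that $g_{\bbeta_1},g_{\bbeta_2}\geq 0$. Indeed, if $g_{\bbeta_1}(\bx)>0$ and $g_{\bbeta_2}(\bx)=0$ then $f(\bx)>0$, which yields the inclusion $\{f(\bx)\leq 0\}\subseteq\{g_{\bbeta_1}(\bx)\leq 0\}\cup\{g_{\bbeta_2}(\bx)>0\}$; conversely $g_{\bbeta_1}(\bx)=0$ forces $f(\bx)=-g_{\bbeta_2}(\bx)\leq 0$, giving $\{g_{\bbeta_1}(\bx)\leq 0\}\subseteq\{f(\bx)\leq 0\}$, together with their symmetric statements for $\{f(\bx)\geq 0\}$. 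These sandwiching inclusions convert the two-neuron error into the single-neuron probabilities $\Pr(g_{\bbeta_1}(\bmu+\sigma\bz)\leq 0)=\Phi(-\langle\bmu,\bbeta_1\rangle/(\sigma\|\bbeta_1\|))$ and $\Pr(g_{\bbeta_2}(\bmu+\sigma\bz)>0)=\Phi(\langle\bmu,\bbeta_2\rangle/(\sigma\|\bbeta_2\|))$ computed just before Theorem~\ref{thm:nc_gen}.

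For part (a) I would condition on the high-probability event that $\mathcal{C}_1,\mathcal{C}_2$ are nonempty (guaranteed by Theorem~\ref{thm:nc_gen}(a)), and choose $\bbeta_1\in\mathcal{C}_1$ and $\bbeta_2\in\mathcal{C}_2$ to be the maximizers of the two programs in~\eqref{eq:nc_gen_opt}. Applying the union-bound inclusion to both classes gives
\[
\mathrm{err}(f)\leq \Phi\left(-\frac{\langle\bmu,\bbeta_1\rangle}{\sigma\|\bbeta_1\|}\right) + \Phi\left(\frac{\langle\bmu,\bbeta_2\rangle}{\sigma\|\bbeta_2\|}\right),
\]
and Theorem~\ref{thm:nc_gen}(a) bounds each term by $O(n^{-2})$, proving the claim. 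For part (b) I would use instead the reverse inclusions $\{g_{\bbeta_1}(\bx)\leq 0\}\subseteq\{f(\bx)\leq 0\}$ and $\{g_{\bbeta_2}(\bx)\leq 0\}\subseteq\{f(\bx)\geq 0\}$, which hold for every feasible pair $\bbeta_k\in\mathcal{C}_k$. This produces the lower bound
\[
\mathrm{err}(f)\geq \frac{1}{2}\left[\Phi\left(-\frac{\langle\bmu,\bbeta_1\rangle}{\sigma\|\bbeta_1\|}\right) + \Phi\left(\frac{\langle\bmu,\bbeta_2\rangle}{\sigma\|\bbeta_2\|}\right)\right],
\]
and since Theorem~\ref{thm:nc_gen}(b) lower-bounds $\min_{\bbeta\in\mathcal{C}_1}\Phi(-\langle\bmu,\bbeta\rangle/(\sigma\|\bbeta\|))$ and $\min_{\bbeta\in\mathcal{C}_2}\Phi(\langle\bmu,\bbeta\rangle/(\sigma\|\bbeta\|))$ by the displayed quantity $L$ uniformly over the feasible sets, both terms are at least $L$, hence $\mathrm{err}(f)\geq L$.

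Since all the probabilistic and geometric content is carried by Theorem~\ref{thm:nc_gen}, the only thing to get right here is the bookkeeping forced by the fact that $f=g_{\bbeta_1}-g_{\bbeta_2}$ does not split additively across neurons. The main (and relatively minor) obstacle is to verify that the regime where both pre-activations are positive is correctly absorbed into the union-bound event for the upper bound, and that the one-sided inclusions lose no constant for the lower bound; both reduce to the single sign observation above. I would also double-check the boundary conventions at $f=0$ so that, after averaging over the two symmetric classes, the bounds line up exactly with the single-neuron error expressions.
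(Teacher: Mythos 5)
Your proposal is correct and follows essentially the same route as the paper's proof: decompose the balanced error into the two class-conditional events, use the nonnegativity of the ReLU outputs to sandwich these events between single-neuron sign events, and then invoke Theorem~\ref{thm:nc_gen}(a) for the upper bound (choosing the optimizers of~\eqref{eq:nc_gen_opt}) and Theorem~\ref{thm:nc_gen}(b) uniformly over ${\cal C}_1,{\cal C}_2$ for the lower bound. The only cosmetic differences are that your upper bound unions two single-neuron events per class where the paper uses the single sharper inclusion $\{f(\bx)\le 0\}\subseteq\{g_{\bbeta_2}(\bx)>0\}$ (both give $O(n^{-2})$), and your explicit handling of the tie event $f(\bx)=0$ is actually more careful than the paper's.
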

The theorem above implies that in the low-noise regime and $d-n+1\gtrsim \log n$, there exists a two-neuron classifier that achieves the ${\cal NC}$ and also enjoys an excellent generalization bound. For $d\geq 2n\log n$ and also the ${\cal NC}$ occurs, any two-neuron classifier that achieves the ${\cal NC}$ does not have a small misclassification error if the noise level $\sigma/\|\bmu\|$ is large. This provides a partial answer to Question 3.
\begin{proof}[\bf Proof of Theorem~\ref{thm:2neuron}]
The misclassification error of a two-neuron classifier in~\eqref{eq:f_reduce} is given by
\[
\frac{1}{2} \Big[ \Pr( g_{\bbeta_1}(\bmu+\sigma\bz) < g_{\bbeta_2}(\bmu + \sigma\bz) ) + \Pr( g_{\bbeta_1}(-\bmu+\sigma\bz) > g_{\bbeta_2}(-\bmu + \sigma\bz) )\Big].
\]

An upper bound of the error is given by
\begin{align*}
& \frac{1}{2} \Big[ \Pr( g_{\bbeta_2}(\bmu+\sigma\bz) > 0 ) + \Pr( g_{\bbeta_1}(-\bmu + \sigma\bz) > 0)\Big] \\
& = \frac{1}{2} \Big[ \Pr(\sigma \lag \bbeta_1, \bz\rag > \lag \bmu,\bbeta_1\rag) + \Pr(\sigma\lag \bbeta_2,\bz\rag \geq -\lag \bmu, \bbeta_2\rag) \Big] = \frac{1}{2} \Big[ \Phi\left(- \frac{\lag \bmu,\bbeta_1\rag}{\|\bbeta_1\|} \right) + \Phi\left(\frac{\lag \bmu,\bbeta_2\rag}{\|\bbeta_2\|} \right) \Big] \lesssim n^{-2}.
\end{align*}
Theorem~\ref{thm:nc_gen}(a) implies in the low noise regime, there exists $\bbeta_k\in{\cal C}_k$ such that the corresponding two-neuron classifier enjoys a very small misclassification error. 

On the other hand, a lower bound is given by
\begin{align*}
& \frac{1}{2} \Big[ \Pr( g_{\bbeta_1}(\bmu+\sigma\bz) < g_{\bbeta_2}(\bmu + \sigma\bz) ) 
+ \Pr( g_{\bbeta_1}(-\bmu+\sigma\bz) > g_{\bbeta_2}(-\bmu + \sigma\bz) )\Big] \\
& \geq \frac{1}{2} \Big[ \Pr( g_{\bbeta_1}(\bmu+\sigma\bz) \leq 0 ) 
+ \Pr( g_{\bbeta_2}(-\bmu + \sigma\bz) ) \leq 0)\Big] \\
& = \frac{1}{2} \Big[ \Pr( \lag \bmu+\sigma\bz, \bbeta_1\rag \leq 0 ) 
+ \Pr( \lag -\bmu+\sigma\bz, \bbeta_2\rag \leq 0)\Big] = \frac{1}{2}\left[\Phi\left( -\frac{\lag \bmu,\bbeta_1\rag}{\sigma\|\bbeta_1\|}\right) + \Phi\left( \frac{\lag \bmu,\bbeta_2\rag}{\sigma\|\bbeta_2\|} \right)\right].
\end{align*}
Theorem~\ref{thm:nc_gen}(b) implies that as noise level $\sigma$ increases, even if $d$ is large and ${\cal NC}$ occurs (recall that if $d > 2n,$ the ${\cal NC}$ occurs), any $\BW_1$ that induces ${\cal NC}$ will not achieve a misclassification error smaller than~\eqref{eq:error_gen}. 
For example, if $s$ is large, i.e., $s \gtrsim \sqrt{n/d}$, then~\eqref{eq:error_gen} is roughly approximated by
\[
1 - \Phi\left( \sqrt{ \frac{d}{n }} \frac{1}{s^{3/2}} \right),
\]
which is close to $1/2$. For any $\bbeta_k\in {\cal C}_k$, it holds that the misclassification error is at least $1 -\Phi\left( \sqrt{ \frac{d}{n }} \frac{1}{s^{3/2}} \right)$.
\end{proof}

\section{Experiments 
}\label{s:numerics}
\subsection{${\cal NC}_1$ for two-layer neural networks}
This subsection aims to verify the sufficient conditions for inducing the $\mathcal{NC}$ derived in Theorem~\ref{thm:nc_gmm2} and~\ref{thm:nc_gmmk}. For both theorems, we verify our bounds by sampling data from the GMM in~\eqref{eq:gmmk}.
We solve the following linear programming to examine the feasibility of the ${\cal NC}$:
\begin{equation}\label{numer_fea1}
\text{Find}~\bbeta~\text{such that}~ \left(\bone_n\bmu_1^\top + \sigma \BZ_1\right)\bbeta = \bone_n, ~~\left(\bone_n\bmu_k^\top + \sigma \BZ_k\right)\bbeta \leq 0,~~(k \neq 1).
\end{equation}
Note that here we only consider the linear feasibility problem regarding the first class because a similar conclusion can be drawn for other classes. For this test, the goal is to study the feasibility and its dependence on $d,n,K$ and $\sigma$. 

For $K=2$, we let $n = 300$ and $d/n$ vary from 1 to 2. To simplify our discussion, we consider the mean vectors having the same norm $\|\bmu_1\| = \|\bmu_2 \| = \|\bmu\|=1$ and denote their angle by $\theta$.
For each pair of $(d,\sigma)$, we run $10$ experiments and solve~\eqref{numer_fea1}, and if the linear program is feasible, then we count it as a successful instance and we plot the successful rate of each set of parameters. From Figure~\ref{fig:collapse_tc}, we can see that Theorem~\ref{thm:nc_gmm2} and~\ref{thm:nc_gmmk} does not exactly match the phase transition but provides a good approximation. For lower $\sigma$ and larger $\theta$, then white regions are clearly larger, showing that well-clustered data are more likely to induce the ${\cal NC}$. For the case when the signal-to-noise ratio of the GMM model is low, e.g., for $\sigma$ is large or for $\theta=0$, we can see $d/n=3/2 = (K+1)/2$ provides an accurate description of the phase transition, which is guaranteed by the Gordon's bound.

\begin{figure}[h!]
  \centering
   \includegraphics[width=160mm]{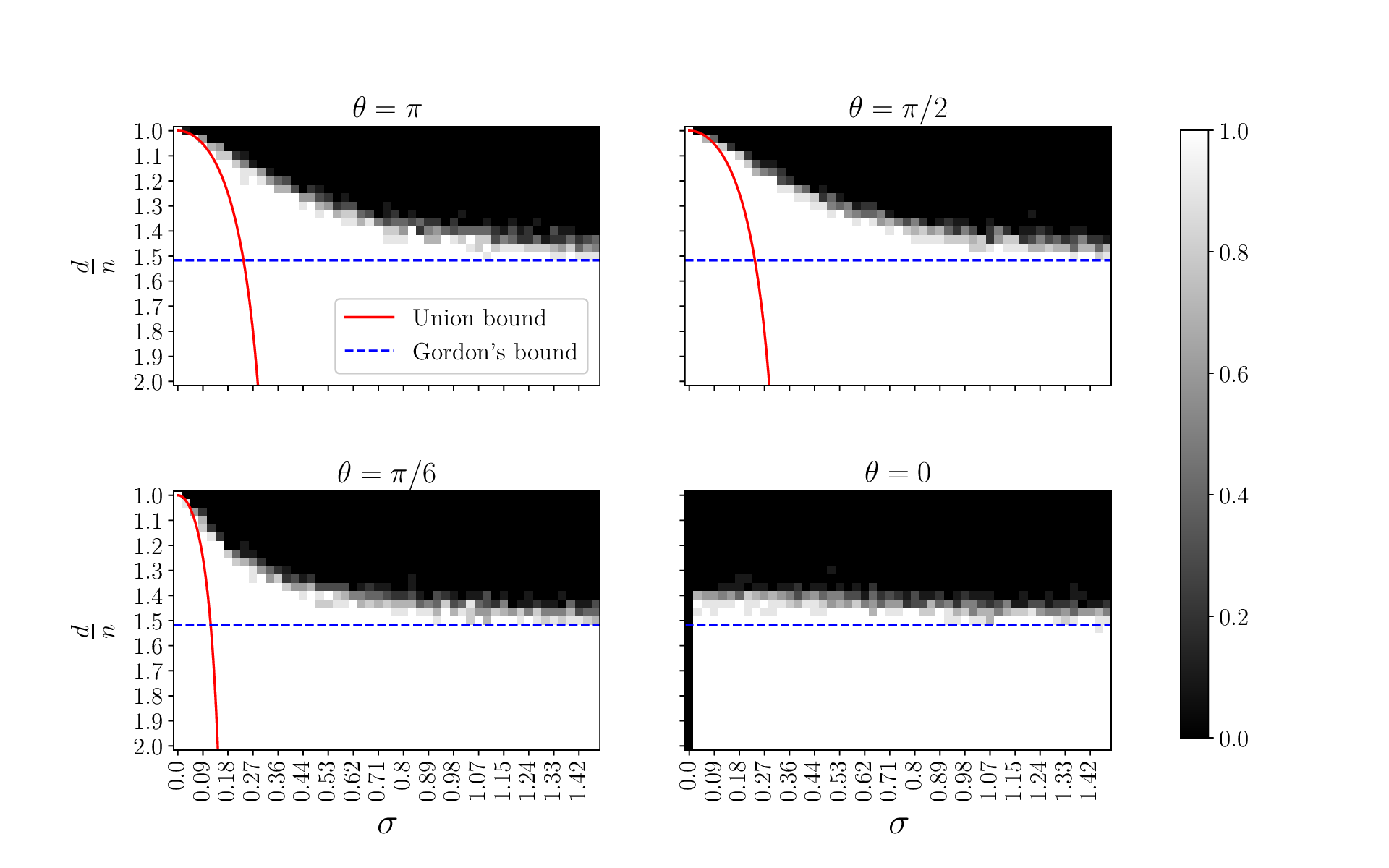}
  \caption{The ${\cal NC}$ feasibility plot for a two-class GMM with mean vectors $\bmu_1$ and $\bmu_2$, and $\theta$ is their angle. The legends denote the sufficient conditions provided by Theorem~\ref{thm:nc_gmm2} and~\ref{thm:nc_gmmk} represented respectively by red solid line (union bound) and blue dashed line (Gordon's bound). 
}
  \label{fig:collapse_tc}
\end{figure}

\begin{figure}[h]
  \centering
   \includegraphics[width=130mm]{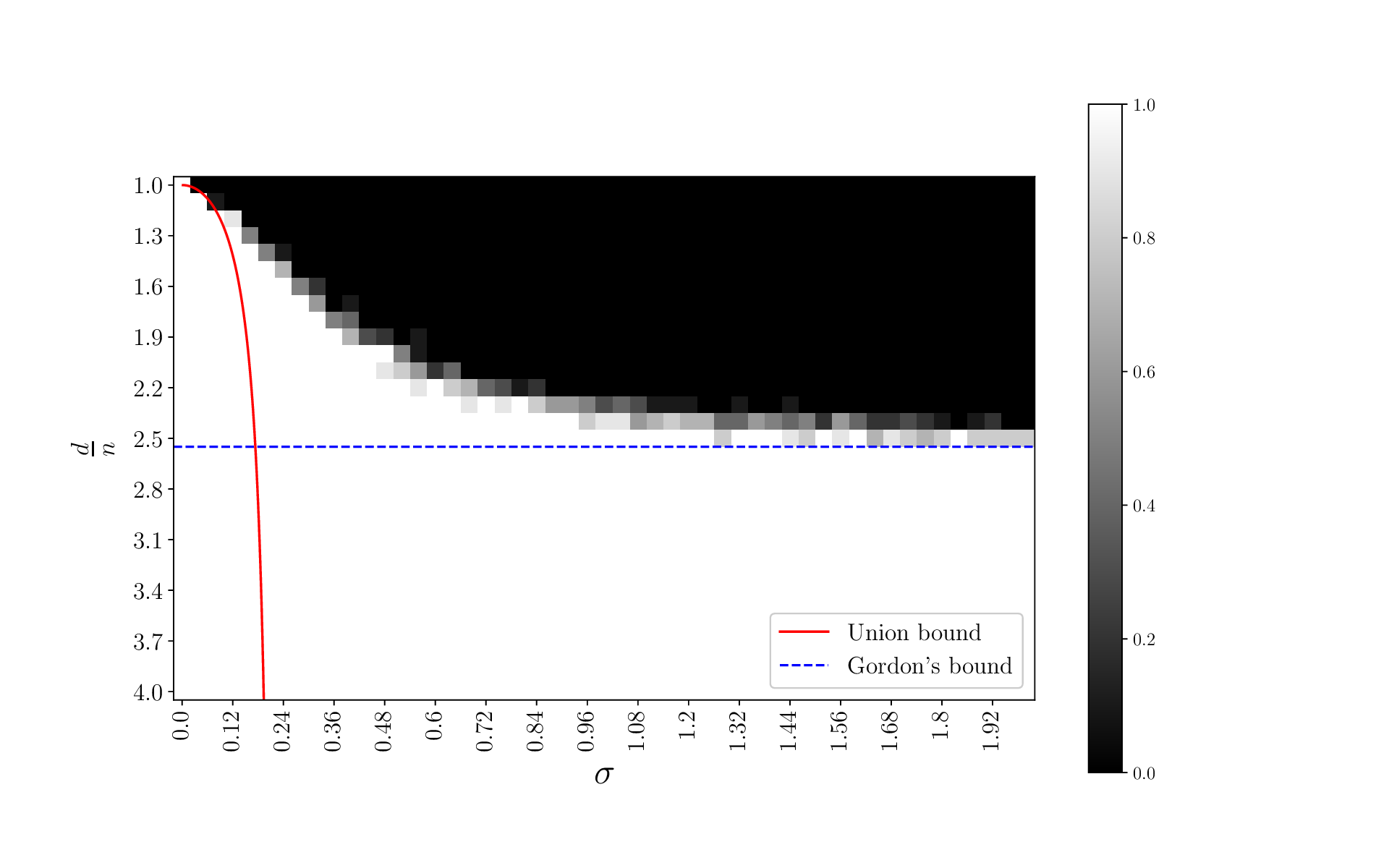}
  \caption{${\cal NC}_1$ feasibility plot for GMM with four clusters and $\bmu_k = \be_k$,~$1\leq k\leq 4$.}
  \label{fig:collapse_mc}
\end{figure}

For $K\geq 2$, the setup of experiments is similar to the $K=2$ case: $K=4,n = 250$, $\bmu_k = \be_k$, and $d/n$ ranges from 1 to 4. For each $(d,\sigma)$, 10 experiments are carried out. Figure~\ref{fig:collapse_mc} shows a similar phase transition plot as in Figure~\ref{fig:collapse_tc}. In the high and low $\sigma$ regime, Gordon's bound (blue dashed line) and union bound (red solid line) approximate the phase transition boundary respectively. Overall, our characterization given by Theorem~\ref{thm:nc_gmm2} and~\ref{thm:nc_gmmk} is not able to exactly capture the regime when $d/n$ is larger than $1$ and below $(K+1)/n$. The further improvements of the bound will rely on a much more refined analysis to understand the feasibility of~\eqref{numer_fea1}
\subsection{${\cal NC}$ for two-layer neural network}
Theorem~\ref{thm:nc_gmm2} and \ref{thm:nc_gmmk} only provide an answer to whether a neural collapse configuration exists for a two-layer neural network. Note that the objective function~\eqref{def:dnn2} is non-convex for the training of two-layer neural networks with $\ReLU$ activation. This subsection is devoted to exploring whether such a network could converge to a neural collapse configuration by SGD. Similar to the previous subsection, we sample data with $K=2$ from GMM under different $d$ and $\sigma$ and train two-layer $\ReLU$ neural networks to do the classification task. We train each network for $10^5$ epochs under the cross-entropy loss~\eqref{def:dnn2} with activation regularization: we use plain SGD starting with learning rate $0.1$ and divide the rate by $10$ at the $1/3$ and $2/3$ of the first epoch respectively and the regularization parameters are $\lambda_{W}=10^{-3}$ and $ \lambda_{H} = 10^{-6}$.
We fix $\bmu_1=-\bmu_2=\be_1$ and let $d/n \in \left[1.1,1.3,1.5,2,3,4 \right]$ and $\sigma \in \left[0.18,0.36,0.53,0.8,1.07,1.42\right]$ which correspond to the grid in Figure~\ref{fig:collapse_tc}. We calculate the commonly adopted $\mathcal{NC}_1$ metric to measure the degree of feature collapse for each network after training.
\begin{equation}\label{eq:sampleconvergencemetric}
 \mathcal{NC}_1 := \frac{1}{K} \Tr\left({\BSigma_W\BSigma_B^{\dagger}}\right),
\end{equation}
where 
\[
\BSigma_W := \frac{1}{N} \sum_{k=1}^K\sum_{i=1}^{n} (\bh_{ki}-\bar{\bh}_k) (\bh_{ki}-\bar{\bh}_k)^{\top},~~ \BSigma_{B} := \frac{1}{K}\sum_{k=1}^K (\bar{\bh}_k-\bh_G)(\bar{\bh}_k-\bh_G)^{\top}
\]
and
\[
\bh_G := \frac{1}{K}\sum_{k=1}^{K} \bar{\bh}_{k},\quad \bar{\bh}_k := \frac{1}{n}\sum_{i=1}^{n} \bh_{ki},~~1\leq k\leq K.
\]
\begin{figure}[h]
  \centering
   \includegraphics[width=130mm]{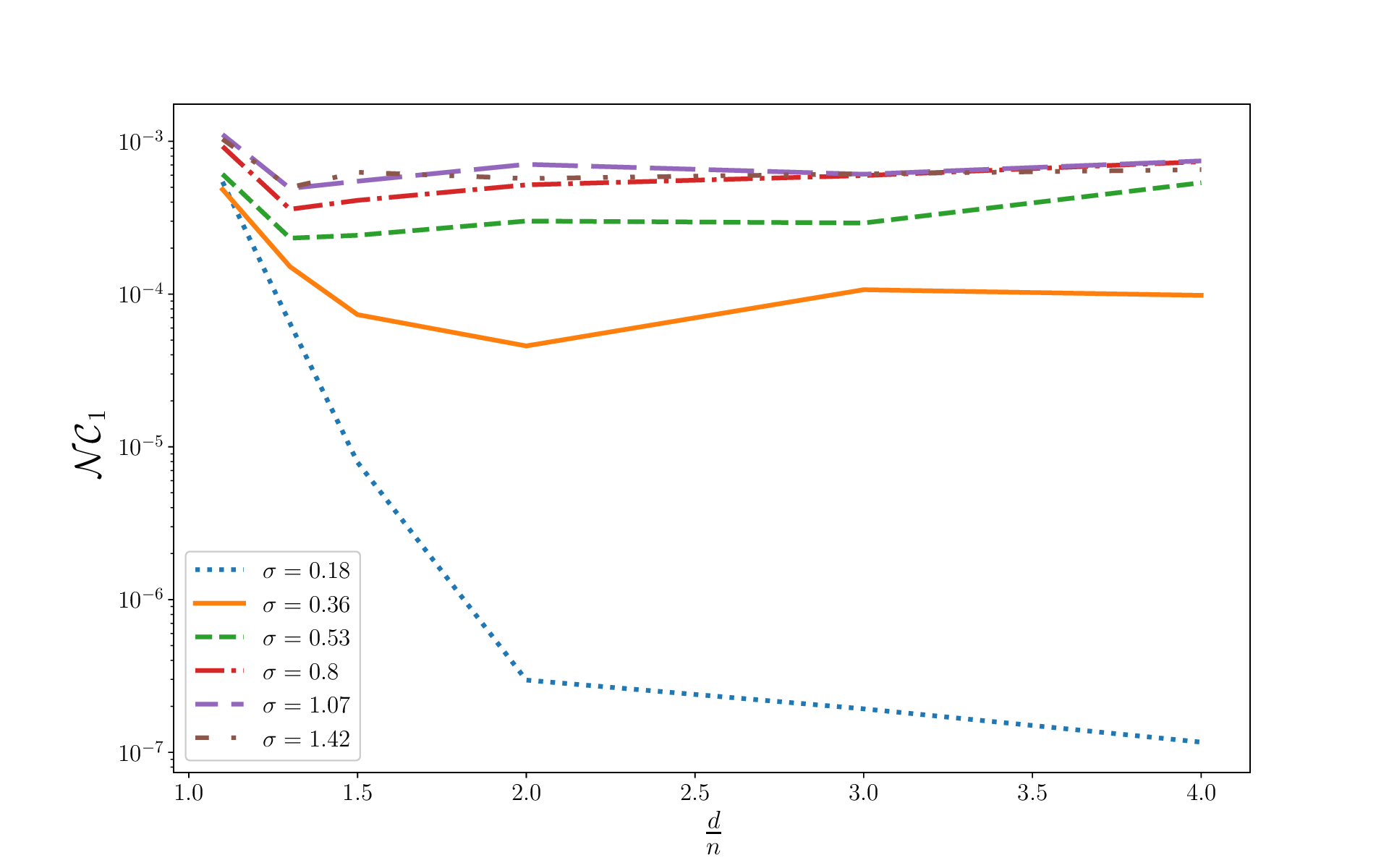}
  \caption{${\cal NC}_1$ plot for two-layer ReLU networks after training for $10^6$ epochs with varying $d$ and variance $\sigma$. The legend denotes the variance $\sigma$ in GMM models that generate data. }
  \label{fig:twol_collapse_mc}
\end{figure}

We observe that when data is well clustered ($\sigma =0.18$), the $\mathcal{NC}_1$ drops as the data dimension $d$ increases. In particular, the $\mathcal{NC}_1$'s are all below $10^{-4}$ except for the case when $d/n=1.1$, which is consistent with our characterization in Figure~\ref{fig:collapse_tc} and under this noise level we find that large $d$ facilitates the feature collapse. However, as $\sigma$ increases, larger $d$ does not yield to smaller ${\cal NC}_1$ anymore. Especially, for cases where $\sigma \geq 0.8$ (SNR is close or smaller than $1$), we find the ${\cal NC}_1$ almost all stay at the same level of magnitude. We think a higher $\sigma$ makes the gradient less aligned among the SGD iterates. Also, increasing the dimension of the data means more neurons need to be aligned to induce feature collapse, which could result in slow convergence to neural collapse configurations or getting stuck at local minima. We verify this analysis by plotting the following quantities together with $\mathcal{NC}_1$ along training epochs, which measure the convergence of mean and weight to neural collapse configurations~\eqref{eq:barH_ce} in Figures~\ref{fig:twol_collapse_mc_ana1} and~\ref{fig:twol_collapse_mc_ana2},
\begin{equation}\label{def:rela_error}
  \begin{aligned}
\mathcal{NC}_{2,\bar{\BH }} &= \left\|\frac{\bar{\BH}^\top\bar{\BH}}{\left\|\bar{\BH}^\top\bar{\BH}\right\|_F}-\frac{1}{\sqrt{K}} \BI_K \right\|_F, \\
\mathcal{NC}_{2,\BW} &= \left\|\frac{\BW^\top\BW}{\left\|\BW^\top\BW\right\|_F}-\frac{1}{\sqrt{K-1}} \BC_K \right\|_F, \\
\mathcal{NC}_{3} &= \left\|\frac{\BW^\top\bar{\BH}}{\left\|\BW^\top\bar{\BH}\right\|_F}- \frac{1}{\sqrt{K-1}} \BC_K \right\|_F, \\
  \end{aligned}
\end{equation}
where $\BC_K = \I_K - \BJ_K/K$ is defined in~\eqref{def:ck}

\begin{figure}[h]
  \centering
   \includegraphics[width=160mm]{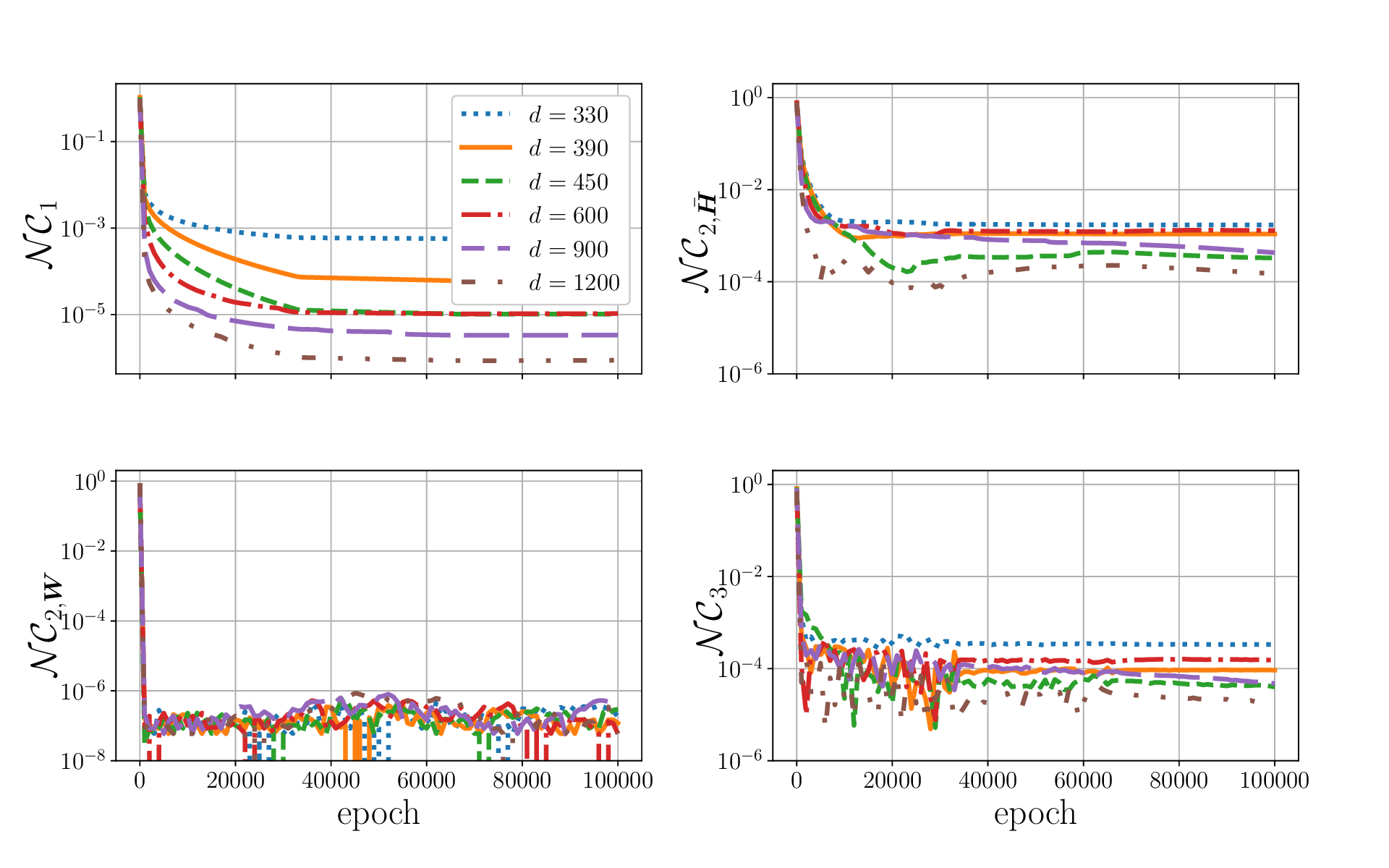}
  \caption{${\cal NC}_1$-${\cal NC}_3$ of two-layer networks along the training epochs in Figure~\ref{fig:twol_collapse_mc} along the training epochs with $\sigma=0.18$. The legend denotes the dimension $d$ of the training data. Upper left plot: $\mathcal{NC}_1$. The left three plots record the alignment of mean feature matrix $\bar{\BH}$ and weight of classifier $\BW$ to orthogonal frame and ETF (described by~\eqref{eq:barH_ce}) respectively measured by the relative error~\eqref{def:rela_error} under Frobenius norm.}
  \label{fig:twol_collapse_mc_ana1}
\end{figure}

\begin{figure}[h]
  \centering
   \includegraphics[width=160mm]{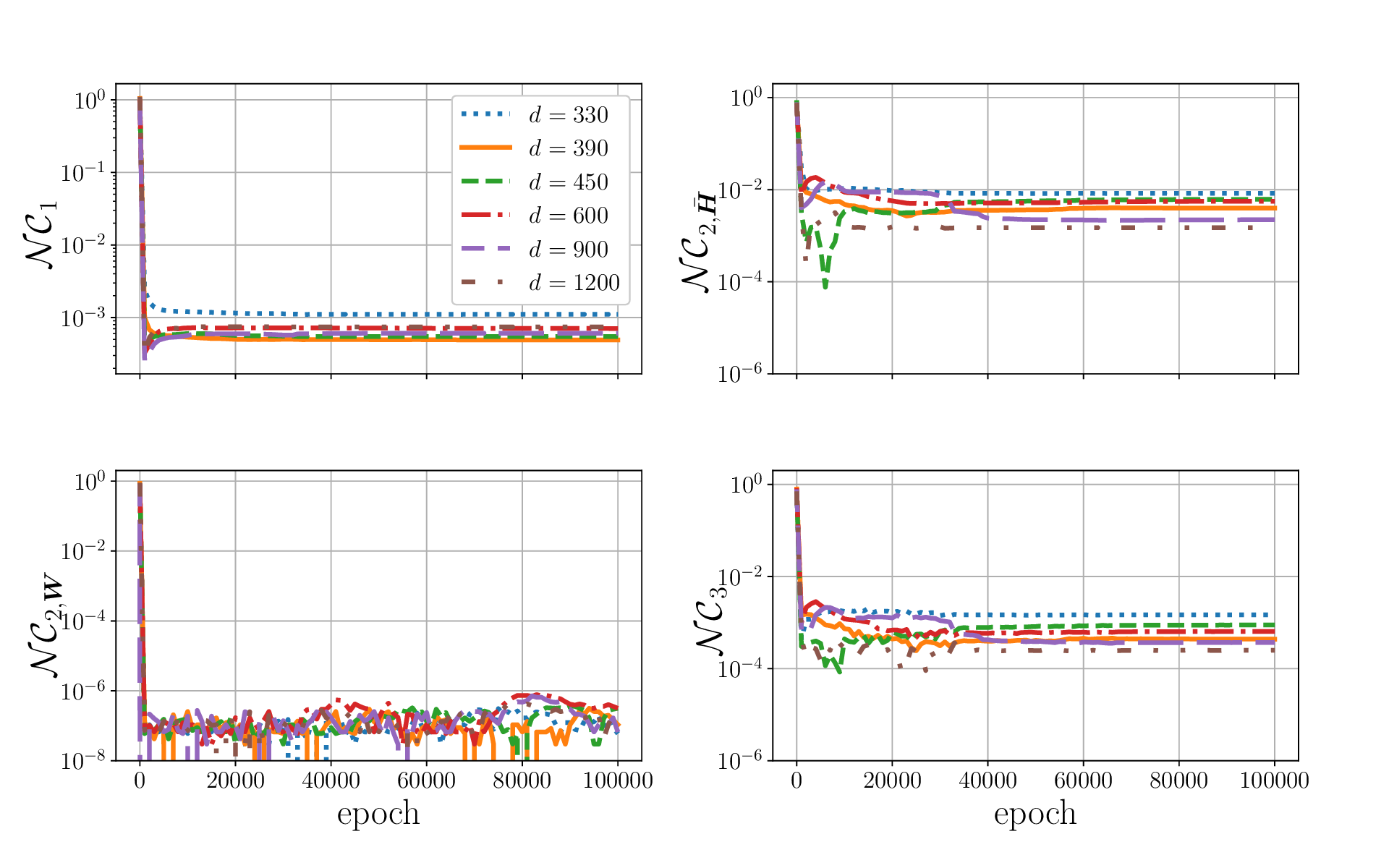}
  \caption{${\cal NC}_1$-${\cal NC}_3$ of two-layer networks along the training epochs in Figure~\ref{fig:twol_collapse_mc} with $\sigma=1.42$.}
  \label{fig:twol_collapse_mc_ana2}
\end{figure}

Figures~\ref{fig:twol_collapse_mc_ana1} and~\ref{fig:twol_collapse_mc_ana2} respectively depict the change of $\mathcal{NC}_{1-3}$ metrics of networks shown in Figure~\ref{fig:twol_collapse_mc} when $\sigma$ equals to $0.18$ and $1.42$. We observe that all the metrics converge for all the networks after $10^5$ training epochs, which implies the stabilization of the training process. The major difference between the final trained parameters and the neural collapse configuration comes from $\bar{\BH}$ ($\mathcal{NC}_{2,\bar{\BH}}$) and the concentration of feature vectors around the class mean feature vectors ($\mathcal{NC}_1$) while $\mathcal{NC}_{2,\BW}$ drops to the level below $10^{-6}$ for all the networks ($\mathcal{NC}_{2,\BW}$ actually becomes $0$ under single-precision floating-point format at some points). Although adding dimension in general makes $\mathcal{NC}_{2,\bar{\BH}}$ smaller, it also makes the decay of $\mathcal{NC}_{1}$ much harder in the case of $\sigma=1.42$, as $\mathcal{NC}_1$ quickly becomes stable for all networks in Figure~\ref{fig:twol_collapse_mc_ana2}. Additionally, by comparing Figures~\ref{fig:twol_collapse_mc_ana1} and~\ref{fig:twol_collapse_mc_ana2}, $\mathcal{NC}_1$ and $\mathcal{NC}_{2,\bar{\BH}}$ both become higher for $\sigma = 1.42$, which implies that the convergence becomes slower as the noise level rises. This observation leads us to conclude that when the GMM model has a low noise level, SGD could approach a neural collapse configuration as the global minimizer when it exists. However, when the noise level is high, it takes a very long time for the SGD to get close to a neural collapse configuration.

\subsection{${\cal NC}$ for three-layer neural network}
As shown in Theorem~\ref{Thm:three_layer_main}, for a three-layer neural network with the first layer randomly initialized, ${\cal NC}$ should emerge if the width of first layer $d_1\gtrsim N\log N$: 
$\BW_1 \in \mathbb{R}^{d_1 \times d}$ to be Gaussian matrix containing i.i.d entries from $\mathcal{N}(0,1/\sqrt{d_1})$. 
To better understand our theoretical result, we train three-layer neural networks with the weights on the first-layer ($\BW_1$) fixed. The dataset is FMnist containing $n=500$ samples from each class with $K=10$. 

 We train each network under the regularized cross-entropy loss~\eqref{def:dnn3} with the same setting in terms of training epoch, regularization parameters, and stepsize schedule as those in the previous subsection. In Figure~\ref{fig:tlnn}, we plot again the metrics $\mathcal{NC}_{1-3}$ along the training epochs.
\begin{figure}[h!]
  \centering
   \includegraphics[width=160mm]{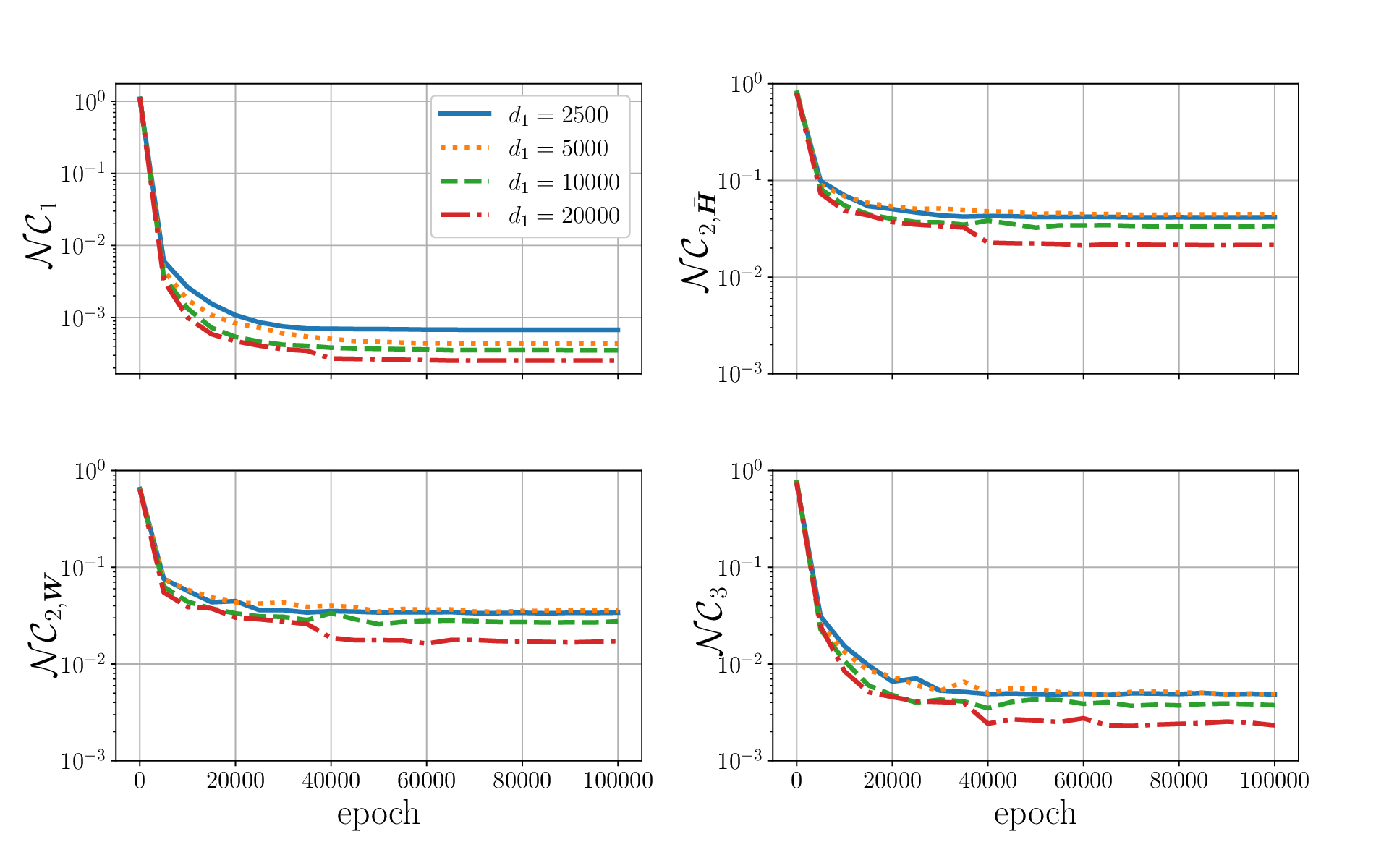}
  \caption{${\cal NC}_1$-${\cal NC}_3$ of three-layer networks along the training epoch. The legend denotes the width $d_1$ of the first-layer weight. }
  \label{fig:tlnn}
\end{figure} 

As $d_1$ increases, the feature collapse and the convergence of $\bar{\BH}$ and $\BW$ to the ${\cal NC}_2$ characterized in~\eqref{eq:barH_ce} are stronger in the terminal phase of training. The relative error proposed in~\eqref{def:rela_error} shrinks as the width $d_1$ increases but it is still at the order of $10^{-3}$ for $d_1=2\cdot 10^{4}$. This implies despite the global optimality of ${\cal NC}$ in the regularized ERM, it may still take much time for the actual training process to achieve ${\cal NC}.$
Similar to cases presented in the previous subsection, the slow convergence is likely due to the nonconvex nature of the objective function and the random feature training. The gap between the experiments and theory  exhibited in this section calls for further studies of the convergence of first-order iterative algorithms to the neural collapse configuration. Our code for all the experiments above is available on \href{https://github.com/WanliHongC/Relu-NC}{Github}.

\section{Proofs}\label{s:proof}

\subsection{Unconstrained positive feature models}\label{ss:upfm}

\begin{proof}[\bf Proof of Theorem~\ref{thm:upfm}(a)]

Consider
\[
R_N(\BW,\BH) = \frac{1}{N} \ell_{CE}(\BW^{\top}\BH,\BY) + \frac{\lambda_W}{2}\|\BW\|_F^2 + \frac{\lambda_H}{2}\|\BH\|_F^2
\]
subject to $\BH\geq 0$ and $\BY$ is the label matrix defined in~\eqref{def:Y}. This optimization problem is nonconvex. The proof idea is: we will find a convex relaxation of $R_N(\BW,\BH)$, then find the corresponding global minimizer, and then prove that the solutions to the original nonconvex problem and convex relaxation are identical. Note that $R_N(\BW,\BH)$ only depends on $\BZ: = \BW^{\top}\BH$, $\BU:=\BW^{\top}\BW$, and $\BV:= \BH^{\top}\BH$. Therefore, a convex relaxation is given by
\[
\min~ \frac{1}{N} \ell_{CE}(\BZ,\BY) + \frac{\lambda_W}{2}\Tr(\BU) + \frac{\lambda_H}{2}\Tr(\BV)
\]
subject to
\[
\BQ: = \begin{bmatrix}
\BU & \BZ \\
\BZ^{\top} & \BV
\end{bmatrix} \succeq 0,~~~\BV\geq 0
\]
where $\BU\in\RR^{K\times K}$ and $\BV\in\RR^{N\times N}.$ 
We claim that
\[
\BU = \frac{a^2}{b} \BC_K,~~\BV = b\BY^{\top}\BY,~~\BZ = a\BC_K\BY
\]
for some $a > 0$ and $b> 0$ where $\BC_K$ is the centering matrix in~\eqref{def:ck}.
Then it holds that
\begin{equation}\label{eq:kktQ}
\BQ = \begin{bmatrix}
\BU & \BZ \\
\BZ^{\top} & \BV
\end{bmatrix} = \begin{bmatrix}
\frac{a^2}{b}\BC_K & a \BC_K \BY \\
a\BY^{\top}\BC_K & b \BY^{\top}\BY
\end{bmatrix} = \frac{1}{b}
\begin{bmatrix}
a\BC_K \\
b\BY^{\top}
\end{bmatrix}
\begin{bmatrix}
a\BC_K \\
b\BY^{\top}
\end{bmatrix}^{\top}
\end{equation}
is exactly rank-$K$.

The Lagrangian is
\[
L(\BU,\BV,\BZ,\BS,\BB) = \frac{1}{N} \ell_{CE}(\BZ,\BY) + \frac{\lambda_W}{2}\Tr(\BU) + \frac{\lambda_H}{2}\Tr(\BV) - \left\lag \BS, \begin{bmatrix}
\BU & \BZ \\
\BZ^{\top} & \BV
\end{bmatrix}\right\rag - \frac{1}{2}\left\lag \BV,\BB\right\rag
\]
where $\BB\geq 0$ and $\BS\in\RR^{(K+N)\times (K+N)}$ is positive semidefinite:
\[
\BS 
= \begin{bmatrix}
\BS_{11} & \BS_{12} \\
\BS_{21} & \BS_{22}
\end{bmatrix} \succeq 0.
\]
The dimension of the blocks of $\BS$ matches that of $\BU,\BV,$ and $\BZ.$

Setting $\nabla_{\BU,\BV,\BZ} L(\BU,\BV,\BZ,\BS,\BB)=0$ leads to
\[
\lambda_W \I_K = 2 \BS_{11},~~\lambda_H \I_{N} = 2\BS_{22} + \BB,~~2\BS_{12} = \frac{1}{N} (\BP - \BY)
\]
where
\[
\frac{\pa \ell_{CE}(\BZ,\BY)}{\pa \bz_{ki}} = \frac{1}{N} (\bp_{ki} - \be_k),~~~
\bp_{ki} = \frac{\exp(\bz_{ki})}{\lag \exp(\bz_{ki}), \bone_K\rag}.
\]
We have the dual variable
\[
\BS = \frac{1}{2}
\begin{bmatrix}
\lambda_W\I_K & N^{-1}(\BP-\BY) \\
N^{-1}(\BP-\BY)^{\top} & \lambda_H\I_N - \BB 
\end{bmatrix}.
\]
To ensure $\BQ$ is a global minimizer, then there exist $\BS\succeq 0$ and $\BB\geq 0$ such that the following KKT condition holds
\begin{equation}\label{eq:kkt}
\BQ:= \begin{bmatrix}
\BU & \BZ \\
\BZ^{\top} & \BV
\end{bmatrix}\succeq 0,~~\BS\BQ = 0,~~\lag \BB, \BV\rag = 0.
\end{equation}
Under $\BZ = a \BC_K\BY$, we have
\[
\BP = \bar{\BP}\BY = \bar{\BP}\otimes\bone_n^{\top},\qquad \bar{\BP} = \frac{\BJ_K + (e^a - 1)\I_K}{K - 1 + e^a}
\]
and
\[
\I_K - \bar{\BP} = \frac{K\I_K - \BJ_K}{K-1+e^a} = \frac{K}{K-1+e^a}(\I_K - \BJ_K/K).
\]
As a result, we have
\[
\BS = \frac{1}{2}
\begin{bmatrix}
\lambda_W\I_K & -\frac{K}{N(K-1+e^a)}\BC_K\BY \\
-\frac{K}{N(K-1+e^a)} \BY^{\top}\BC_K & \lambda_H\I_N -\BB
\end{bmatrix}.
\]
Moreover, we choose
\begin{equation}\label{eq:kktB}
\BB = t (\BJ_N - \I_K\otimes \BJ_n) = t (\BJ_K -\I_K)\otimes \BJ_n.
\end{equation}
It remains to verify that~\eqref{eq:kkt} holds for some positive $a$, $b$, and $t$. Note that $\BQ\succeq 0$ and $\BB\geq 0$ follow from its construction~\eqref{eq:kktQ}. For $\lag \BB,\BV\rag$, we have
\[
\lag \BB, \BV\rag = bt \lag \BJ_N - \I_K\otimes\BJ_n, \BY^{\top}\BY\rag = bt (\lag \BJ_N, \I_K\otimes\BJ_n\rag - \|\I_K\otimes\BJ_n\|_F^2) =0,~~\forall b \geq 0,t\geq 0
\]
where
$\BY^{\top}\BY = (\I_K\otimes\bone_n)(\I_K\otimes\bone_n^{\top}) = \I_K\otimes\BJ_n.$ It suffices to ensure $\BS\succeq 0$ and $\BS\BQ = 0$.
Due to the factorization of $\BQ$ in~\eqref{eq:kktQ}, we have
\begin{align*}
& \BS \BQ = 0 \Longleftrightarrow
\begin{bmatrix}
\lambda_W\I_K & -\frac{K}{N(K-1+e^a)}\BC_K\BY \\
-\frac{K}{N(K-1+e^a)} \BY^{\top}\BC_K & \lambda_H\I_N -\BB
\end{bmatrix} 
\begin{bmatrix}
a \BC_K \\
b\BY^{\top}
\end{bmatrix} = 0,
\end{align*}
which is equivalent to
\begin{align*}
0& = a \lambda_W \BC_K - \frac{b\BC_K}{K-1+e^a},  \\
0& = -\frac{Ka}{N(K-1+e^a)} \BY^{\top} \BC_K + b\lambda_H\BY^{\top} - b \BB\BY^{\top} \\
& = -\frac{Ka}{N(K-1+e^a)} (\BY^{\top} - K^{-1} \BJ_{N\times K}) + b\lambda_H\BY^{\top} - bnt(\BJ_{N\times K} - \BY^{\top})\end{align*}
where $\BY = \I_K\otimes\bone_n^{\top},$ $\BY\BY^{\top} = n\I_K = (N/K)\I_K$ and
\[
t^{-1}\BB \BY^{\top} = (\BJ_N - \I_K\otimes \BJ_n)(\I_K\otimes\bone_n) = n\BJ_{N\times K} - n\I_K\otimes \bone_n = n(\BJ_{N\times K} - \BY^{\top}).
\]
To make $\BS\BQ=0$ hold, we need to have
\begin{align*}
 \frac{b}{K-1+e^a} & = a\lambda_W, \\
\frac{Ka}{N(K-1+e^a)} & = b\lambda_H + bnt, \\
\frac{a}{N(K-1+e^a)} & = bnt
\end{align*}
where $n = N/K.$
Then the second and third equations determine $t$:
\begin{equation}\label{eq:kktt}
K = \frac{\lambda_H}{nt} + 1\Longleftrightarrow \lambda_H + nt - nt K = 0 \Longleftrightarrow t = \frac{\lambda_H}{n(K-1)}.
\end{equation}
The coefficients $a$ and $b$ are the solution to 
\begin{equation}\label{eq:kktab}
 \frac{b}{K-1+e^a} = a\lambda_W, ~~\frac{a}{N(K-1+e^a)} = \frac{b\lambda_H}{K-1}~~ \Longrightarrow \frac{a}{b} =\sqrt{\frac{nK}{K-1}\cdot \frac{\lambda_H}{\lambda_W}}.
\end{equation}
Then
\[
a = \log \left( (K-1) \left[ \sqrt{ \frac{1}{nK(K-1)}\cdot\frac{1}{\lambda_H\lambda_W} } - 1\right]\right),~~~
b = \sqrt{\frac{K-1}{nK}\cdot\frac{\lambda_W}{\lambda_H}} a.
\]
Finally, we verify $\BS\succeq 0$:
\begin{align*}
\BS & = \frac{1}{2}\begin{bmatrix}
\lambda_W\I_K & -\frac{K}{N(K-1+e^a)}\BC_K\BY \\
-\frac{K}{N(K-1+e^a)} \BY^{\top}\BC_K & \lambda_H\I_N -\frac{\lambda_H}{n(K-1)} (\BJ_K-\I_K)\otimes\BJ_n
\end{bmatrix} 
\end{align*}
where
\[
\frac{K}{N(K-1+e^a)} = \frac{bnKt}{a} = Kn \cdot \sqrt{\frac{K-1}{nK}\cdot\frac{\lambda_W}{\lambda_H}} \cdot \frac{\lambda_H}{n(K-1)} = \sqrt{\frac{K\lambda_W\lambda_H }{n(K-1)}}.
\]
Note that $\BS_{11} = \frac{\lambda_W}{2}\I_K\succ 0$, and thus $\BS\succeq 0 $ equals $\BS_{22} - \BS_{21}\BS_{11}^{-1}\BS_{12} \succeq 0$, i.e.,
\begin{align*}
2(\BS_{22} - \BS_{21}\BS_{11}^{-1}\BS_{12}) 
& = \lambda_H\I_N -\frac{\lambda_H}{n(K-1)} (\BJ_K-\I_K)\otimes\BJ_n - \frac{K^2}{N^2(K-1+e^a)^2\lambda_W} \BY^{\top}\BC_K\BY \\
& = \lambda_H\I_N -\frac{\lambda_H}{n(K-1)} (\BJ_K-\I_K)\otimes\BJ_n - \frac{K\lambda_H }{n(K-1)} \left( \I_K \otimes\BJ_n - \frac{\BJ_N}{K} \right) \\
& = \lambda_H\I_N - \frac{\lambda_H }{n} \left( \I_K \otimes\BJ_n \right)\succeq 0
\end{align*}
where 
\[
\BY^{\top}\BC_K\BY = \BY^{\top}\BY - \frac{1}{K}\BY^{\top}\BJ_K\BY = \I_K \otimes\BJ_n - \frac{\BJ_N}{K}.
\]

To summarize, we have verified that~\eqref{eq:kktQ} with coefficients~\eqref{eq:kktt} and~\eqref{eq:kktab} is the unique global minimizer where the uniqueness comes from the strict complementary slackness. Therefore, the global minimizer $(\BW,\BH)$ is given by 
\[
\BW^{\top}\BW = \frac{a^2}{b}\BC_K, \BH^{\top}\BH = b\BY^{\top}\BY \Longrightarrow \BW = \frac{a}{\sqrt{b}}\BC_K,~~\BH = \sqrt{b} \BY
\]
where $a$ is strictly positive if 
\[
(K-1) \left[ \sqrt{ \frac{1}{nK(K-1)}\cdot\frac{1}{\lambda_H\lambda_W} } - 1\right] > 1 \Longleftrightarrow 
 \sqrt{ \frac{K-1}{nK}\cdot\frac{1}{\lambda_H\lambda_W} } > K \Longleftrightarrow \sqrt{ \frac{n(K-1)}{K} } > N \lambda_Z.
\]
\end{proof}

\begin{proof}[\bf Proof of Theorem~\ref{thm:upfm}(b)]
Under $\ell_2$-loss function and unconstrained positive feature model, the empirical risk function is
\[
R_N(\BH_+,\BW) = \frac{1}{2N}\|\BW^{\top}\BH_+ -\BY\|_F^2 + \frac{\lambda_W}{2}\|\BW\|_F^2 + \frac{\lambda_H}{2} \|\BH_+\|_F^2
\]
where $\BH_+$ is a nonnegative matrix. 
We note that
\[
\min_{\BW,\BH} R_N(\BW,\BH) \leq \min_{\BW,\BH_+} R_N(\BW,\BH_+).
\]
Without any constraint on $\BH_+$, the minimization of $R_N(\BH,\BW)$ is exactly singular value thresholding. Let $\BZ = \BW^{\top}\BH$, and then
\[
\min_{\BW^{\top}\BH=\BZ} \frac{\lambda_W}{2}\|\BW\|_F^2 + \frac{\lambda_H}{2} \|\BH\|_F^2 = \sqrt{\lambda_W\lambda_H}\|\BZ\|_*.
\]
Thus $R_N(\BW,\BH)$ equals
\[
\min_{\BZ}\frac{1}{2N}\|\BZ - \BY\|_F^2 + \lambda\|\BZ\|_*
\]
and the global minimum for $R_N(\BW,\BH)$ is attained at
\[
\BZ = \BW^{\top}\BH = \frac{(\sqrt{n} - N\lambda)_+}{\sqrt{n}} \BY = (1 - \sqrt{n}K\lambda)_+\BY
\]
where $\BY$ is a binary matrix satisfying $\BY\BY^{\top} = n\I_K$ and $\lambda = \sqrt{\lambda_W\lambda_H}$. Given $\BZ$,~\cite[Lemma A.3]{ZDZ21} implies that the global minimizer is given by $(\BH,\BW)$ that satisfies
\begin{equation*}
 \begin{aligned}
 & \BH^{\top}\BH = \sqrt{\frac{\lambda_W}{\lambda_H}}\frac{(1- \sqrt{n}K\lambda)_+}{\sqrt{n}} \BY^{\top}\BY = \sqrt{\frac{\lambda_W}{\lambda_H}}\frac{(1- \sqrt{n}K\lambda)_+}{\sqrt{n}}\I_K\otimes\BJ_n, \\
 & \BW^{\top}\BW = \sqrt{\frac{\lambda_H}{\lambda_W}}n^{1/2} (1 - \sqrt{n}K\lambda)_+ \I_K,
 \end{aligned}
\end{equation*}
and we can select a nonnegative $\BH$ to be the global minimizer for~\eqref{def:upfm_l2}, for example,
\[
\BH= \bar{\BH} \otimes \bone_n^\top,~~\bar{\BH}=\left(\frac{\lambda_W}{n\lambda_H}\right)^{\frac{1}{4}}\begin{bmatrix} (1 - \sqrt{n}K\lambda)_+^{\frac{1}{2}} \BI_{K} \\ \bzero_{D-K,K} \end{bmatrix}.
\]

\end{proof}


\subsection{Two-layer neural network}\label{ss:2nn_fp}

\subsubsection{Proof of Theorem~\ref{thm:nc_gmm2}: ${\cal NC}$ for GMM with two clusters for $2n\geq d > n$}\label{sss:nc_gmm2}

\begin{proof}[\bf Proof of Theorem~\ref{thm:nc_gmm2}]
For now, we first try to establish a condition such that 
\begin{equation}\label{eq:feas_two_cluster}
\lag \bmu_1, \bbeta\rag + \sigma \BZ_1\bbeta = 1,~~~\lag \bmu_1,\bbeta\rag = 1,~~~\lag \bmu_2, \bbeta\rag + \sigma \BZ_2\bbeta \leq 0,
\end{equation}
is feasible, i.e., the set is non-empty. Once it is done, the same result also applies to the second cluster. The second equality constraint makes the argument cleaner without compromising much.
With~\eqref{eq:feas_two_cluster}, we have
\[
\BZ_1\bbeta = 0,~~~\lag \bmu_1,\bbeta\rag = 1,~~~\lag \bmu_2, \bbeta\rag + \sigma \BZ_2\bbeta \leq 0,
\]
which means $\bbeta$ is in the null space of $\BZ_1\in\RR^{n\times d}.$

Let $\BPhi \in \mathbb{R}^{d\times (d-n)}$ be a partial orthonormal matrix with columns being the basis of the null space of $\BZ_1$, which belongs to the Grassmannian $G_{d,d-n}.$
It is easy to see that $\bbeta$ has the following form:
\[
\bbeta = \BPhi\left(\frac{\BPhi^\top\bmu_1}{\|\BPhi^\top\bmu_1 \|^2} + \bv\right)
\]
where $\bv$ is perpendicular to $\BPhi^{\top}\bmu_1$, i.e., $\lag\bv,\BPhi^\top \bmu_1 \rag = 0.$

With the representation of $\bbeta$, it remains to ensure the third constraint in~\eqref{eq:feas_two_cluster} satisfies. Note that for any fixed $\bbeta$, 
\[
\lag \bmu_2,\bbeta\rag\bone_n + \sigma\BZ_2\bbeta \sim \mathcal{N}(\lag \bmu_2,\bbeta\rag\bone_n, \sigma^2 \|\bbeta\|^2).
\]
To guarantee a high probability of the event $\lag \bmu_2,\bbeta\rag\bone_n + \sigma\BZ_2\bbeta \leq 0$, we need to maximize the ratio of $\lag \bmu_2,\bbeta\rag$ over $\|\bbeta\|$ subject to the constraints on $\bbeta.$ Therefore, we define
\begin{align*}
f(\bv):=\frac{\lag \bmu_2,\bbeta(\bv)\rag}{\|\bbeta(\bv)\|} 
= \frac{\lag \BPhi^\top\bmu_1/\|\BPhi^\top\bmu_1 \|^2 + \bv, \BPhi^{\top}\bmu_2\rag}{ \sqrt{1/\|\BPhi^\top\bmu_1 \|^2 + \|\bv\|^2} }
\end{align*}
subject to $\lag \bv, \BPhi^{\top}\bmu_1\rag = 0.$ 
By applying Lemma~\ref{lemma:featc_opt} with $\bv_1:= \BPhi^\top \bmu_1/\|\BPhi^{\top}\bmu_1\|^2$ and $ \bv_2:= \BPhi^\top \bmu_2$, we have
\begin{equation}
 f^* := \min_{\lag \bv,\BPhi^\top\bmu_1\rag=0} f(\bv)
 =
  \begin{cases} 
 - \|\BPhi^{\top}\bmu_2\| \sqrt{1 - \cos^2\widetilde{\theta}}, ~~ &\text{if} ~~ \cos\widetilde{\theta} \geq 0, \\ 
 -\|\BPhi^\top\bmu_2\|, ~~ &\text{if} ~~ \cos\widetilde{\theta} < 0,
  \end{cases}
\end{equation} 
where $\widetilde{\theta}$ is the angle between $\BPhi^{\top}\widetilde{\bmu}_1$ and $\BPhi^{\top}\widetilde{\bmu}_2.$


Therefore, it suffices to have i.i.d. $n$ samples from $\mathcal{N}(f^*,\sigma^2)$ such that all of them are nonpositive.
By taking union bound over $n$ samples, we have
\begin{equation}\label{featc:noise_bound}
2\sigma \sqrt{\log{n}} + f^* \leq 0 
\end{equation}
with probability at least $1-O(n^{-1})$. It remains to estimate $f^*$.
From the lemma~\ref{lemma:JLangle}, 
\[
|\cos \widetilde{\theta} - \cos\theta|\leq \frac{4\eps}{(1-\eps)^2}.
\]
For $\cos\theta< -4\eps/(1-\eps)^2$, then $\cos\widetilde{\theta} < 0$, and then
\[
-f^* \geq (1-\eps)\sqrt{\frac{d-n}{d}}\|\bmu_2\|
\]
where the bound on $\|\BPhi^{\top}\bmu_2\| \geq (1-\eps) \sqrt{(d-n)/d}\|\bmu_2\|$ follows from the Johnson-Lindenstrauss lemma.
For $\cos\theta> -4\eps/(1-\eps)^2$ and $|\cos\widetilde{\theta}| \leq |\cos\theta| + 4\eps/(1-\eps)^2$, then
\begin{align*}
-f^* & = \sqrt{\|\BPhi^{\top}\bmu_2\|^2 - |\lag \BPhi^{\top}\bmu_1, \BPhi^{\top}\bmu_2\rag|^2/\|\BPhi^{\top}\bmu_1\|^2} \\
& = \|\BPhi^{\top}\bmu_2\| \sqrt{1- \cos^2\widetilde{\theta}} \geq (1-\eps)\sqrt{\frac{d-n-1}{d}}\|\bmu_2\| \sqrt{ 1- \left( |\cos\theta| + \frac{4\eps}{(1-\eps)^2}\right)^2 }.
\end{align*}
All bounds above hold with probability at least $1-C\exp(-\eps^2 (d-n-1)).$ 

For $d\geq 2n$, we notice that $\BX^{\top}$ is full row rank for any $\sigma>0$, and thus one can always find a $\bbeta$ such that $\BX_1^{\top}\bbeta = 1$ and $\BX_2^{\top}\bbeta \leq 0.$ 
\end{proof}

\subsubsection{Proof of Theorem~\ref{thm:nc_gmmk}(a): ${\cal NC}$ for GMM with multiple classes for $Kn\geq d > n$}

\begin{proof}[\bf Proof of Theorem~\ref{thm:nc_gmmk}(a)]

To ensure the neural collapse occurs, we first show that for each fixed $k$, there exists $\bbeta$ such that 
\[
(\bone_n\bmu_k^{\top} + \sigma\BZ_k) \bbeta = \bone_n,~~ \lag\bmu_k, \bbeta \rag= 1,~~ \lag\bmu_{k'}, \bbeta \rag= -\gamma~~(k'\neq k, \gamma>0).
\]
These equations reduce to the set such that
\[
\BZ_k \bbeta = 0,~~~\BPi \bbeta = \be_k - \gamma(\bone_K - \be_k) 
\]
where $\BPi\in\RR^{K\times d}$ and $\BZ_k \in\RR^{n\times d}.$ 
Let $\BPhi$ be an $d\times (d-n)$ orthogonal matrix whose columns span the null space of $\BZ_k$. Let
$\bbeta = \BPhi\balpha$
where $\balpha\in\RR^{d-n}$ and $\BPhi\in\RR^{d\times (d-n)}$ is a random projection matrix, and $\BZ_k \BPhi = 0.$
Since $\BPi$ is assumed to be of rank-$K$, 
\begin{equation}\label{mc:feabb}
\BPi\BPhi \balpha = \be_k - \gamma(\bone_K - \be_k)
\end{equation}
and $\BPi\BPhi\in\RR^{K\times (d-n)}$. Under $d-n>K$, then $\BPi\BPhi$ is of rank $K$ with probability 1. Here we choose
\begin{align*}
\balpha & = \BPhi^{\top}\BPi^{\top} (\BPi\BPhi \BPhi ^{\top}\BPi^{\top})^{-1}(\be_k - \gamma(\bone_K - \be_k)), \\
\bbeta & = \BPhi\BPhi^{\top}\BPi^{\top} (\BPi\BPhi \BPhi ^{\top}\BPi^{\top})^{-1}(\be_k - \gamma(\bone_K - \be_k)),
\end{align*}
where
\[
\|\bbeta\|^2 = (\be_k - \gamma(\bone_K - \be_k))^{\top} (\BPi\BPhi \BPhi ^{\top}\BPi^{\top})^{-1}(\be_k - \gamma(\bone_K - \be_k)).
\]
Note that for each given $\gamma$, 
\[
(\bone_n\bmu_{k'}^{\top} + \sigma\BZ_{k'})\bbeta \sim \mathcal{N}(-\gamma, \sigma^2\|\bbeta\|^2\I_n)
\]
for any $k'\neq k$, and $\lag \bmu_{k'}, \bbeta\rag = -\gamma.$
We proceed to determine the choice of $\gamma > 0$ by minimizing
\[
\min_{\gamma>0} -\frac{\gamma}{\|\bbeta\|} \Longleftrightarrow \max_{\gamma>0} F(\gamma)
\]
where
\begin{equation}
F(\gamma):= \frac{\gamma^2}{\|\bbeta\|^2}= \frac{\gamma^2}{(\be_k - \gamma(\bone_K - \be_k))^{\top} (\BPi\BPhi \BPhi ^{\top}\BPi^{\top})^{-1}(\be_k - \gamma(\bone_K - \be_k)) }.
\end{equation}
Then we know that $(\bone_n\bmu_{k'}^{\top} +\sigma\BZ_{k'})\bbeta \leq 0$ holds if
\[
2\sigma\sqrt{\log (Kn)} \leq F(\gamma)
\]
with probability at least $1-O((Kn)^{-1}).$

By Lemma~\ref{lemma:JLbasis}, it holds with probability at least $1- c\exp(-\eps^2 (d-n))$ that
\[
\lambda_{\min}(\BPi\BPhi \BPhi ^{\top}\BPi^{\top}) \geq \left((1-\eps)^2-4K\eps\right)\frac{d-n}{d}\sigma^2_{\min}(\BPi)
\]
where $\sigma_{\min}(\BPi)$ is the smallest singular value of $\BPi$. Therefore, we have
\[
\|\bbeta\|^2 \leq \frac{(1+(K-1)\gamma^2)d}{\left((1-\eps)^2-4K\eps\right)(d-n)\sigma^2_{\min}(\BPi)},
\]
which implies
\[
\sup_{\gamma>0}F(\gamma) \geq \sup_{\gamma>0} \frac{\gamma^2}{\|\bbeta\|^2} = \frac{\left((1-\eps)^2-4K\eps\right)(d-n)\sigma^2_{\min}(\BPi) }{(K-1) d}.
\]
Then $(\bone_n\bmu_{k'}^{\top} + \sigma\BZ_{k'})\bbeta \leq 0 $
is implied by
\[
\sigma <\frac{1}{2} \sqrt{\frac{\left((1-\eps)^2-4K\eps\right)(d-n)\sigma^2_{\min}(\BPi) }{(K-1)d \log (Kn)}}
\]
with probability at least $1 - c\exp(-\eps^2(d-n)) - (Kn)^{-1}.$ Note that we need to choose $\eps < 1/(10K)$, i.e., $d-n \geq CK^2 \log Kn$ such that the probability is at least $1-O(n^{-1}).$
\end{proof}

\subsubsection{Proof of Theorem~\ref{thm:nc_gmmk}(b): ${\cal NC}$ for GMM with multiple classes under $d/n>(K+1)/2$}

\begin{proof}[\bf Proof of Theorem~\ref{thm:nc_gmmk}(b)]
In this subsection, we present a tighter bound for the linear feasibility when $d/n>(K+1)/2$ holds. For class $k$, we consider the set of $\bbeta$'s such that
\begin{equation}
 (\bone_n \bmu_k^\top + \sigma\BZ_k)\bbeta = \bone_n, ~~\lag\bmu_k, \bbeta\rag = 1, ~~ \lag \bmu_{k'}, \bbeta\rag \bone_n + \sigma\BZ_{k'}\bbeta\leq 0.
\end{equation}
We consider $\bbeta$ in the following form:
\begin{equation}
\bbeta = \BPhi \left( \bv + \frac{\BPhi^{\top}\bmu_k}{\|\BPhi^{\top}\bmu_k\|^2} \right)
\end{equation}
where $\BPhi\in\RR^{d\times (d-n)}$ represents the orthogonal basis of the null space of $\BZ_k$ and 
\[
\lag \bv, \BPhi^{\top}\bmu_{k}\rag = 0,~~~1\leq k\leq K.
\]
Such $\bv\in\RR^{d-n}$ exists if $d-n\geq K+1$.
Now for any $k'\neq k$, it holds that
\[
\lag \bmu_{k'}, \bbeta\rag \bone_n + \sigma\BZ_{k'}\bbeta = \left(\lag\BPhi^{\top}\bmu_{k'}, \BPhi\bmu_{k} \rag \bone_n+ \sigma \frac{\BZ_k \BPhi\BPhi^{\top}\bmu_k}{\|\BPhi^{\top}\bmu_k\|^2}\right) + \sigma \BZ_{k'} \BPhi \bv.
\]
To have $\lag \bmu_{k'}, \bbeta\rag \bone_n + \sigma\BZ_{k'}\bbeta < 0$ it suffices to find $\bv$ such that $\BZ_{k'}\BPhi\bv < 0$ for $k'$, since we can arbitrarily rescale the term $\BZ_{k'}\BPhi\bv$ by rescaling the norm of $\bv$.
The effective dimension of all $\bv$ is $d-n-K.$ Therefore, this problem is equivalent to finding $\bar{\bv}\in\RR^{d-n-K}$ such that
\[
\BA\bar{\bv} < 0
\]
where $\BA \in \RR^{(K-1)n\times (d-n-K)}$ is a Gaussian random matrix.
The next proposition provides sufficient conditions for constructing a high probability bound.

We aim to search for $\bar{\bv}\in\RR^{d-n-K}$ such that $\BA \bar{\bv}<0$ holds entrywisely. By hyperplane separation theorem, if the convex set $\{\BA^{\top}\bs: \|\bs\|_1=1,~\bs\geq 0,~\bs\in\RR^{(K-1)n}\}$ does not contain $0$, then there exists $\bar{\bv}$ s.t. 
  \[
 \lag \bar{\bv}, \BA^{\top}\bs\rag < 0,~\forall \|\bs\|_1=1,~\bs\geq 0 \Longleftrightarrow \BA^{\top}\bar{\bv} < 0.
  \]
 Hence it suffices to show that $\min_{\bs\geq 0,\|\bs\|_1=1} \|\BA^{\top}\bs\| > 0$ holds with high probability. Note that
  \begin{align*}
    &\min_{\bs\geq 0,\|\bs\|_1=1}\max_{\|\bu\|=1} \lag \bu,\BA^{\top}\bs \rag = \min_{\bs\geq 0,\|\bs\|_1=1} \|\BA^{\top}\bs\| = \min_{\bs\geq 0} \frac{\|\BA^{\top}\bs\| }{\|\bs\|_1} = \min_{\bs\geq 0} \frac{\|\BA^{\top}\bs\| }{\|\bs\|_2} \frac{\|\bs\|_2}{\|\bs\|_1} \\
  & \geq \frac{1}{\sqrt{(K-1)n}} \min_{\bs \geq 0} \frac{\|\BA^{\top}\bs\| }{\|\bs\|_2} = \frac{1}{\sqrt{(K-1)n}} \min_{\bs \geq 0,\|\bs\|=1} \|\BA^{\top}\bs\| \\
  & = \frac{1}{\sqrt{(K-1)n}} \min_{\bs \geq 0,\|\bs\|=1}\max_{\|\bu\|=1} \lag\bu,\BA^\top\bs \rag.
  \end{align*}
 Applying Proposition~\ref{Thm:Gordontess}, we have,
  \begin{equation}
 \mathbb{E}\min_{\bs \geq 0,\|\bs\|=1}\max_{\|\bu\|=1} \lag\bu,\BA^\top\bs \rag \geq \sqrt{d-n-K} - \sqrt{(K-1)n/2}.
  \end{equation}
 Therefore, we have
  \[
 \E\min_{\bs\geq 0,\|\bs\|_1=1} \|\BA^{\top}\bs\| \geq \frac{1}{\sqrt{(K-1)n}} \E\min_{\bs \geq 0,\|\bs\|=1} \|\BA^{\top}\bs\| \geq \frac{\sqrt{d-n-K} - \sqrt{(K-1)n/2}}{\sqrt{(K-1)n}}.
  \]
By Lemma~\ref{lemma:phi_2_concen}, we have
  \[
 \Pr\left(\min_{\bs \geq 0,\|\bs\|=1} \|\BA^{\top}\bs\| \leq \sqrt{d-n-K} - \sqrt{(K-1)n/2} - t\right) \leq \exp(-t^2/2).
  \]
 As a result, by taking $t=\sqrt{2\log{n}}$, it holds with probability at least $1-1/n$ that
  \begin{equation*}
    \begin{aligned}
  &\min_{\bs\geq 0,\|\bs\|_1=1} \|\BA^{\top}\bs\| \geq \frac{\sqrt{d-n-K} - \sqrt{(K-1)n/2}- \sqrt{2\log n}}{\sqrt{(K-1)n}}.
    \end{aligned}
  \end{equation*}
 Therefore, to ensure the existence of an ${\cal NC}$ solution, it suffices to have the RHS above positive:
  \[
 d- n-K \geq \frac{(K-1)n}{2} + 2\log n + 2\sqrt{(K-1)n\log n}.
  \]
\end{proof}

\subsection{Proof of Theorem~\ref{Thm:three_layer_main}: neural collapse for three-layer neural network}\label{ss:3layer}

Next, we consider the feasibility of inducing the neural collapse of a three-layer network with the first layer having random Gaussian weight, i.e. the format of the output is given by
\begin{equation}\label{eq:three_layernn}
f(\BW_1,\BW_2,\BW_3) = \BW_3^\top\sigma_{\ReLU}\left(\BW_2^\top\sigma_{\ReLU}\left(\BW_1^\top\BX\right)\right)
\end{equation}
where $\BW_1\in \mathbb{R}^{d \times d_1}, \BW_2\in \mathbb{R}^{d_1 \times d_2}, \BW_3\in \mathbb{R}^{d_2 \times K}$ and $\BX \in \mathbb{R}^{d \times N}$ is some general data.
One natural question is whether neural collapse occurs for a three-layer neural network. 
From Theorem~\ref{thm:NC_general}(b), we know that if there exists $\BW_1$ such that $\sigma_{\ReLU}(\BW_1^{\top}\BX)\in \RR^{d_1\times N}$ is rank $N$ with $d_1\geq N$, then there exists $\BW_2$ and $\BW_3$ such that the neural collapse occurs. 

We will show that by setting $\BW_1$ to be a Gaussian random matrix with each entry being i.i.d Gaussian random variables, i.e., $\sigma_{\ReLU}(\BW_1^{\top}\BX)$ gives the ReLU random feature of $N$ data points in $\RR^d$, then the neural collapse occurs with high probability if $d_1$ is sufficiently large.

Given $N$ points $\{\bx_i\}_{i=1}^N$, we define that
\[
[\BH_{\infty}]_{ij} : = \E_{\bz\sim\mathcal{N}(0,\I_d)} \left( \sigma_{\ReLU} (\lag \bz, \bx_i\rag) - \sqrt{\frac{2}{\pi}}\right)\left(\sigma_{\ReLU}(\lag \bz, \bx_j\rag) - \sqrt{\frac{2}{\pi}}\right)
\]
as the kernel matrix. Provided that $\bx_i$ is not parallel to any $\bx_j$, then it can be shown that $\BH_{\infty}(\BX)$ is full rank with $\lambda_{\min}(\BH_{\infty}) > 0$ as shown in the lemma below.

\begin{lemma}\label{lemma:tl_exp} 
Let $\BX \in \mathbb{R}^{d \times N}$ be a data matrix with columns $\bx_i \nparallel \bx_j$ for any $(i,j)$ pair. The kernel matrix
  \[
 \BH_{\infty}: = \mathbb{E}_{\bz \sim \mathcal{N}(0,\BI_d)}\sigma(\BX^{\top}\bz) \sigma(\BX^{\top}\bz)^{\top} - \frac{2}{\pi}\BJ_N
  \]
 has a strictly positive smallest singular value $\lambda_{\min}(\BH_{\infty}) > 0.$
\end{lemma}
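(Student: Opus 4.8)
The plan is to recognize $\BH_\infty$ as a covariance matrix and thereby reduce the claim $\lambda_{\min}(\BH_\infty)>0$ to the linear independence (in $L^2$) of the ReLU ridge functions $\bz\mapsto\sigma_{\ReLU}(\langle\bz,\bx_i\rangle)$, which I would then settle by exploiting the non-smoothness of ReLU. First I would note that, since $\E_{\bz\sim\mathcal{N}(0,\I_d)}\sigma_{\ReLU}(\langle\bz,\bx_i\rangle)=\sqrt{2/\pi}\,\|\bx_i\|=\sqrt{2/\pi}$ (using $\|\bx_i\|=1$), the matrix $\BH_\infty$ is exactly the covariance of the random vector $\bphi(\bz):=(\sigma_{\ReLU}(\langle\bz,\bx_1\rangle),\dots,\sigma_{\ReLU}(\langle\bz,\bx_N\rangle))^\top$. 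Hence $\BH_\infty\succeq 0$ automatically, and for every $\bc\in\RR^N$,
\[
\bc^\top\BH_\infty\bc=\Var\!\left(\sum_{i=1}^N c_i\,\sigma_{\ReLU}(\langle\bz,\bx_i\rangle)\right).
\]
Since $\BH_\infty$ is symmetric and positive semidefinite, its smallest singular value equals its smallest eigenvalue, so it suffices to rule out $\bc^\top\BH_\infty\bc=0$ for $\bc\neq 0$.

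Next, I would argue by contradiction: suppose $\bc\neq 0$ makes the variance above vanish. Then $g(\bz):=\sum_{i}c_i\,\sigma_{\ReLU}(\langle\bz,\bx_i\rangle)$ is almost surely equal to some constant $C$; because $g$ is continuous and the Gaussian law has full support, in fact $g(\bz)=C$ for \emph{all} $\bz\in\RR^d$. Now fix any index $i$ with $c_i\neq 0$ and set $H_i:=\{\bz:\langle\bz,\bx_i\rangle=0\}$. The pairwise non-parallelism is used here: for each $j\neq i$ the hyperplane $H_j$ is distinct from $H_i$, so $H_i\cap H_j$ is a proper subspace of $H_i$; since a $(d-1)$-dimensional space cannot be covered by finitely many proper subspaces, I can choose $\bz_0\in H_i$ with $\langle\bz_0,\bx_j\rangle\neq 0$ for every $j\neq i$.

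Finally I would restrict $g$ to the line $t\mapsto\bz_0+t\bx_i$ and read off a derivative jump at $t=0$. For each $j\neq i$, the argument $\langle\bz_0,\bx_j\rangle+t\langle\bx_i,\bx_j\rangle$ is nonzero at $t=0$ and keeps its sign for small $t$, so $t\mapsto\sigma_{\ReLU}(\langle\bz_0,\bx_j\rangle+t\langle\bx_i,\bx_j\rangle)$ is affine near $t=0$ and contributes no kink. The $i$-th term, however, equals $c_i\,\sigma_{\ReLU}(t\|\bx_i\|^2)$, whose right derivative minus left derivative at $t=0$ is $c_i\|\bx_i\|^2=c_i\neq 0$. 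Thus $t\mapsto g(\bz_0+t\bx_i)$ has a genuine kink at $t=0$, contradicting $g\equiv C$. Therefore $c_i=0$ for every $i$, forcing $\bc=0$, so $\BH_\infty\succ 0$ and $\lambda_{\min}(\BH_\infty)>0$.

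The routine parts are the covariance reformulation and the mean computation; the step I expect to be the main obstacle is making the derivative-jump argument fully rigorous, in particular justifying the selection of $\bz_0\in H_i$ avoiding all the other hyperplanes (which is precisely where pairwise non-parallelism enters) and verifying that the remaining $N-1$ terms are smooth in a neighborhood of $\bz_0$ so that only the $i$-th term can produce the kink.
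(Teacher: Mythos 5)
Your proof is correct and follows essentially the same route as the paper's: reduce positive definiteness of $\BH_\infty$ to linear independence of the centered ReLU ridge functions, pick a point on the hyperplane $\{\bz : \langle \bz,\bx_i\rangle = 0\}$ that avoids all the other hyperplanes (possible precisely because of pairwise non-parallelism and the fact that a subspace is not a finite union of proper subspaces), and use the ReLU kink there to force $c_i=0$. The only cosmetic difference is that you detect the kink via a one-dimensional derivative jump along the line $t\mapsto \bz_0+t\bx_i$, whereas the paper compares limits of full gradients from the two sides of the hyperplane; note also that both you and the paper take $\sqrt{2/\pi}$ as the mean of $\sigma_{\ReLU}(\langle \bz,\bx_i\rangle)$ for unit $\bx_i$ (the true value is $1/\sqrt{2\pi}$), an inherited constant slip that affects only the centering convention, not the linear-independence argument that both proofs actually establish.
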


\begin{proof}[\bf Proof of Lemma~\ref{lemma:tl_exp}] The main idea for this proof is adopted from~\cite[Theorem 3.1]{DZPS18}. Define the feature map $\phi_{\bx}(\bw):=\sigma(\bw^\top \bx) - \sqrt{2/\pi}$,~~$\bw\sim\mathcal{N}(0,\I_d)$, which is a continuous function w.r.t. $\bw\in\RR^d$ and the gradient $\nabla_{\bw}\sigma(\bw^{\top}\bx_i)$ is continuous everywhere except 
 \[
 D_i := \left\{\bw: \lag \bx_i,\bw\rag = 0 \right\}.
  \]
 
Now to prove $\BH_{\infty}$ is strictly positive definite, it is equivalent to show $\phi_{\bx_i}(\bw)$ is linearly independent, i.e., for any $\{a_i\}_{i=1}^N$ such that
  \[
  \sum_{i=1}^{N} a_i \phi_{\bx_i}\left(\bw \right) = 0,~~\forall \bw\sim\mathcal{N}(0,\I_d),
  \]
 then $a_i = 0$ holds for $1\leq i\leq N.$
For every $\{a_i\}_{i=1}^N$, we define 
\[
f(\bw):=\sum_{i=1}^{N} a_i \phi_{\bx_i}\left(\bw \right).
\]
Assume $f(\bw) = 0$ holds for $\bw\sim\mathcal{N}(0,\I_d).$
By definition, $f(\bw)$ is a continuous function on $\mathbb{R}^{d}$ and it means $f(\bw)\equiv 0$ and $\nabla f(\bw) \equiv \bzero_d$. 

Under $\bx_i \nparallel \bx_j$ for all $(i,j)$ pair, it holds $D_i \not\subset \cup_{j \neq i} D_j $, i.e., there exists $\bz \in D_i \setminus \cup_{j \neq i} D_j$ such that $\lag \bx_i,\bz\rag = 0$ and $\lag \bx_j,\bz\rag \neq 0$ for any $j\neq i.$
 Since $D_i$'s are closed sets, there exists $r_0>0$ such that for any $r\leq r_0$, ${\cal B}(\bz,r_0) \cap \left(\cup_{j \neq i} D_j\right) = \emptyset$. In other words, $\phi_{\bx_j}(\bw)$ is differentiable w.r.t. $\bw$ insides $ {\cal B}(\bz,r_0)$ for all $j \neq i$. For $i$, the ball ${\cal B}(\bz,r_0)$ contains two disjoint parts:
  \[
 {\cal B}^+(\bz,r):= \left\{\bw: \lag \bx_i,\bw \rag>0 \right\}\cap {\cal B}(\bz,r),~~{\cal B}^-(\bz,r):= \left\{\bw: \lag\bx_i,\bw\rag < 0 \right\}\cap {\cal B}(\bz,r)
  \]
 and $\bz$ is on the boundary of both ${\cal B}^+(\bz,r)$ and ${\cal B}^-(\bz,r).$

Therefore, there exist two sequences $\left\{\bw_{\ell}^+ \right\} \subseteq {\cal B}^+(\bz,r) $ and $\left\{\bw_{\ell}^- \right\}\subseteq {\cal B}^-(\bz,r)$ such that $\lim_{\ell \to \infty} \bw_{\ell}^+ = \lim_{\ell \to \infty} \bw_{\ell}^- = \bz$. Due to the continuous differentiability of $\phi_{\bx_j}(\bw)$ in ${\cal B}(\bz, r)$ for $j\neq i$, we have
  \begin{equation}
 \lim_{\ell \to \infty} \nabla \phi_{\bx_j}(\bw_{\ell}^+) = \lim_{\ell \to \infty} \nabla \phi_{\bx_j}(\bw_{\ell}^-) = \nabla \phi_{\bx_j}(\bz).
  \end{equation}
 For $i$, we note that $\phi_{\bx_i}(\bw)$ is not differentiable at $\bz\in D_i$ while the gradient exists on ${\cal B}^+(\bz,r)$ and ${\cal B}^-(\bz,r).$ It holds
  \begin{equation}
 \lim_{\ell\to \infty} \nabla \phi_{\bx_i}(\bw_{\ell}^+) = \lim_{\ell \to \infty} \bw_{\ell} = \bz,~~ \lim_{\ell\to \infty} \nabla \phi_{\bx_i}(\bw_{\ell}^-) = 0
  \end{equation}
 where 
  \[
  \phi_{\bx_i}(\bw) = 
  \begin{cases}
 \lag \bx_i, \bw\rag - \sqrt{\frac{2}{\pi}}, & \bw\in {\cal B}^+(\bz,r), \\
 - \sqrt{\frac{2}{\pi}}, & \bw\in {\cal B}^-(\bz,r).
  \end{cases}
  \]
 Therefore, 
  \begin{align*}
  0 =& \lim_{\ell \to \infty} \nabla f(\bw^+_{\ell}) - \lim_{\ell \to \infty} \nabla f(\bw^-_{\ell})\\
 =&\lim_{\ell \to \infty} \sum_{j \neq i}^{N}a_j (\nabla\phi_{\bx_j}(\bw^+_{\ell}) - \nabla\phi_{\bx_j}(\bw^-_{\ell})) + a_i\lim_{\ell \to \infty} \left(\nabla\phi_{\bx_i}(\bw^+_{\ell})-\nabla\phi_{\bx_i}(\bw^-_{\ell})\right) = a_i\bz.
  \end{align*}
Note that $\bz\neq 0$ and it implies $a_i = 0.$
\end{proof}


\begin{proof}[\bf Proof of Theorem~\ref{Thm:three_layer_main}]
By Theorem~\ref{thm:NC_general}(b), when $d_1 \geq N$, it suffices to have 
$\sigma_{\ReLU}(\BW_1^\top\BX) \in \mathbb{R}^{d_1 \times N}$ to be rank $N$ to induce neural collapse. Without loss of generality, We assume each $\bx_i$ is a unit vector in $\RR^d$. 
Here we define $\bvarphi_k$ as
\[
\bvarphi_k = \sigma_{\ReLU}(\BX^{\top}\bz_k) = 
\begin{bmatrix}
\sigma_{\ReLU}(\lag \bx_1,\bz_k\rag) \\
\vdots \\
\sigma_{\ReLU}(\lag \bx_N,\bz_k\rag)
\end{bmatrix},~~~\E \bvarphi_k = \sqrt{\frac{2}{\pi}}\bone_d,
\]
which is exactly the $k$-th row of $\sigma_{\ReLU}(\BW_1^{\top}\BX)$ and it is a sub-gaussian random vector.
To guarantee that $\sigma_{\ReLU}(\BW_1^{\top}\BX)$ is full rank, it suffices to have
\[
\left\| \frac{1}{d_1}\sum_{k=1}^{d_1}(\bvarphi_k -\E\bvarphi_k)(\bvarphi_k-\E \bvarphi_k)^{\top} - \BH_{\infty} \right\| < \lambda_{\min}(\BH_{\infty}).
\]
 The first step is directly implied by Lemma~\ref{lemma:tl_exp}, i.e. $\lambda_{\min}(\BH_{\infty})> 0$. 
 The estimation reduces to the covariance matrix estimation, which can be done by computing the sub-gaussian norm of the centered $\bvarphi_k$. For any $\bv \in \mathbb{S}^{N-1}$ and define 
\[
f(\bz):= \left\lag \bv,\sigma_{\ReLU}\left(\BX^\top\bz\right)-\sqrt{\frac{2}{\pi}}\bone_{N} \right\rag,
\]
which is a random variable depending on $\bz.$
Then for any $\bv$, it holds
\begin{equation}
\begin{aligned}
\left|f(\bz_1)-f(\bz_2) \right| \leq \left\|\sigma_{\ReLU}\left(\BX^\top\bz_1\right)-\sigma_{\ReLU}\left(\BX^\top\bz_2\right)\right\| \leq \left\|\BX\right\| \|\bz_1-\bz_2\|.
\end{aligned}
\end{equation}
Then by~\cite[Theorem 2.26]{W19}, we know that $f(\bz) - \E f(\bz)$ is a subgaussian random variable with 
\[
\E e^{\lambda \lag \bv, \sigma_{\ReLU}(\BX^{\top}\bz) - \E \sigma_{\ReLU}(\BX^{\top}\bz)\rag} \leq e^{\lambda^2\|\BX\|^2/2},~~\forall\bv\in \mathbb{S}^{N-1},~\lambda>0.
\] 
Therefore, the random vector $\sigma(\BX^{\top}\bz) - \E \sigma(\BX^{\top}\bz) $ has a sub-gaussian norm bounded by $\left\|\BX\right\|$.

By~\cite[Exercise 4.7.3]{V18} or~\cite[Theorem 6.5]{W19}, it holds that 
\[
 \left\|\frac{1}{d_1}\sum_k (\bvarphi_k - \E \bvarphi_k)(\bvarphi_k - \E \bvarphi_k)^{\top}-\BH_{\infty} \right\| \lesssim \|\BX\|^2 \sqrt{\frac{N}{d_1}} < \lambda_{\min}(\BH_{\infty})
\]
with probability at least $1-2e^{-N}$
provided that
$d_1 \gtrsim \|\BX\|^4 N\log N/\lambda^2_{\min}(\BH_{\infty}).$ 

Note that
\begin{align*}
& \frac{1}{d_1}\sum_k \bvarphi_k\bvarphi_k^{\top} - \E \bvarphi_k\bvarphi_k^{\top} = \frac{1}{d_1}\sum_k \bvarphi_k\bvarphi_k^{\top} -(\BH_{\infty} + \frac{2}{d_1\pi}\BJ_N) \\
& = \frac{1}{d_1}\sum_{k=1}^{d_1} (\bvarphi_k - \E \bvarphi_k)(\bvarphi_k - \E \bvarphi_k)^{\top}-\BH_{\infty} + \frac{1}{d_1}\sqrt{\frac{2}{\pi}} \left( \bone_N \sum_{k=1}^{d_1} (\bvarphi_k - \E\bvarphi_k)^{\top} + \sum_{k=1}^{d_1} (\bvarphi_k-\E\bvarphi_k)\bone_N^{\top} \right)
\end{align*}
where $\E\bvarphi_k = \sqrt{2/\pi}\bone_{d_1}.$ 
It suffices to estimate $\| d_1^{-1}\sum_k \bvarphi_k - \E\bvarphi_k\|$.
We know that
\[
\E e^{\lambda \sum_k \lag \bv, \sigma_{\ReLU}(\BX^{\top}\bz_k) - \E \sigma_{\ReLU}(\BX^{\top}\bz_k)\rag} \leq e^{\lambda^2d_1\|\BX\|^2/2},~~\forall\bv\in \mathbb{S}^{N-1},~\lambda>0
\]
and thus $d_1^{-1} \sum_k \lag \bv, \sigma_{\ReLU}(\BX^{\top}\bz_k) - \E \sigma_{\ReLU}(\BX^{\top}\bz_k)\rag$ is $\|\BX\|^2/d_1$-subgaussian. Thus
\begin{align*}
\Pr\left( \left\| \frac{1}{d_1}\sum_k \sigma_{\ReLU}(\BX^{\top}\bz_k) - \E\bvarphi_k\right\| \geq t\right) \leq 2\exp\left(-\frac{d_1t^2}{2\|\BX\|^2}\right).
\end{align*}
By picking $t = 2\|\BX\| \sqrt{d_1^{-1}\log N}$, we have
\begin{align*}
\left\| \frac{1}{d_1}\sum_k \bvarphi_k\bvarphi_k^{\top} - \E \bvarphi_k\bvarphi_k^{\top} \right\| \lesssim\|\BX\|^2 \sqrt{\frac{N}{d_1}} + \|\BX\| \sqrt{\frac{N\log N}{d_1}} \lesssim \lambda_{\min}(\BH_{\infty})
\end{align*}
under $d_1\gtrsim \|\BX\|^4 N\log N/\lambda^2_{\min}(\BH_{\infty}).$ Therefore, the smallest eigenvalue of $d_1^{-1}\sum_k \bvarphi_k\bvarphi_k^{\top}$ is at least $\lambda_{\min}(\BH_{\infty})/2$, implying that $\sigma_{\ReLU}(\BW_1^{\top}\BX)\in\RR^{d_1\times N}$ is of full column rank.
\end{proof}

\subsection{Two-layer neural network: best generalization under ${\cal NC}$}\label{ss:twonn_gen}

\subsubsection{Proof of Theorem~\ref{thm:nc_gen}(a)}
\begin{proof}[\bf Proof of Theorem~\ref{thm:nc_gen}(a)]

We consider the minimization for the term involving $\bbeta$, 
\begin{equation}\label{eq:genob}
\min_{\bbeta} \Pr\left({\bbeta}^\top\bmu+\sigma{\bbeta}^\top\bz \leq 0 \right)~~~\text{s.t.}~~\left(\bone_n \bmu^\top + \sigma \BZ_1\right) \bbeta = \bone_n,~~\left(-\bone_n \bmu^\top + \sigma \BZ_2\right) \bbeta \leq 0. 
\end{equation} 
The difficulty of optimization mainly arises from incorporating the inequality constraints. For now, we drop the inequality constraints and consider the following simplified version that only involves the equality constraints, then the minimization is equivalent to solving the following maximization program. 
\begin{equation}\label{eq:genoblow}                   
 \max_{\bbeta\in \mathbb{R}^d} \frac{\lag\bmu,\bbeta\rag}{\|\bbeta \|}~~~\text{s.t.}~~\left(\bone_n \bmu^\top + \sigma \BZ_1\right) \bbeta = \bone_n. 
\end{equation}
Note that for any $\bbeta$ satisfying $(\bone_n\bmu^{\top} + \sigma\BZ_1)\bbeta = \bone_n$, it holds that
\[
(\I_n - \BJ_n/n)\BZ_1\bbeta = 0,~~~ \left\lag \bmu + \frac{\sigma\BZ_1^{\top}\bone_n}{n}, \bbeta\right\rag = 1.
\] 
Therefore, we let $\BPhi$ be the orthonormal basis of the null space of $(\I_n - \BJ_n/n)\BZ_1$. Since $\BZ_1\in\RR^{n\times d}$ is a Gaussian random matrix, $\BPhi$ is a random matrix sampled from $G_{d,d-n+1}.$

Therefore, we have the following representation for $\bbeta$: 
\[
\bbeta(\bv) = \BPhi\left(\frac{\BPhi^\top\widehat{\bmu}}{\|\BPhi^\top\widehat{\bmu} \|^2} + \bv\right)
\]
where
\[
\widehat{\bmu} = \bmu + \frac{\sigma\BZ_1^\top \bone_n}{n},~~ \lag\bv,\BPhi^\top \widehat{\bmu} \rag = 0.
\]

Now we can see the maximization of~\eqref{eq:genoblow} is equivalent to maximizing
\begin{align*}
f(\bv):= \frac{\lag \bmu,\bbeta(\bv)\rag}{\|\bbeta(\bv)\|} = \frac{\lag\BPhi^\top \bmu, \BPhi^\top\widehat{\bmu}/\|\BPhi^\top \widehat{\bmu} \|^2 + \bv \rag}{\sqrt{1/\|\BPhi^\top \widehat{\bmu}\|^2 + \|\bv\|^2}}
\end{align*}
over $\bv$ such that $\lag \bv, \BPhi^{\top}\widehat{\bmu}\rag=0.$

We apply Lemma~\ref{lemma:featc_opt} to $-f(\bv)$ with $\bv_1:= \BPhi^\top \widehat{\bmu}/ \| \BPhi^\top \widehat{\bmu}\|^2 $ and $\bv_2:= -\BPhi^\top \bmu$. Then we obtain
\begin{equation}\label{eq:genln0}
 f^* = \max_{\lag \bv,\BPhi^\top\widehat{\bmu}\rag=0} f(\bv)
 =
  \begin{cases} 
  \|\BPhi^\top \bmu \|\cdot|\sin\widetilde{\theta}|,~~ & \text{if} ~~ \cos\widetilde{\theta} \geq 0, \\ 
  \|\BPhi^\top\bmu\|, ~~ &\text{if}~~ \cos\widetilde{\theta} < 0,
  \end{cases}
\end{equation} 
where $\widetilde{\theta}$ is the angle between $\BPhi^{\top}\bmu$ and $-\BPhi^{\top}\widehat{\bmu}.$ 

Now we aim to identify a regime for $\sigma$ such that the global maximum of~\eqref{eq:genln0} matches the minimum of~\eqref{eq:genob}, i.e. the inequality constraints hold with high probability. 
It occurs if 
\[
\cos\widetilde{\theta} < 0,~~~ -\lag \bmu, \bbeta^*\rag \bone_n + \sigma\BZ_2\bbeta^* \leq 0
\]
where $\lag \bmu, \bbeta^*\rag/\|\bbeta^*\| = \|\BPhi^{\top}\bmu\|.$
For the first constraint, it holds
\[
-\lag \BPhi^\top \bmu,\BPhi^\top \widehat{\bmu} \rag = -\|\BPhi^{\top}\bmu\|^2 - \frac{\sigma}{n} \lag \BZ_1^{\top}\bone_n, \BPhi^{\top}\bmu\rag \sim\mathcal{N}( -\|\BPhi^{\top}\bmu\|^2, \sigma^2 \|\BPhi^{\top}\bmu\|^2/n).
\]
Therefore, 
\[
\Pr(\cos\widetilde{\theta} < 0) = \Pr(Z < \sqrt{n} \sigma^{-1} \|\BPhi^{\top}\bmu\| ),
\]
which means $\cos\widetilde{\theta} < 0$ holds with probability at least $1-O(n^{-1})$ if 
\begin{equation}\label{eq:genln1}
\sigma \sqrt{\log n} < \sqrt{n}\|\BPhi^{\top}\bmu\|.
\end{equation}
For the second inequality to hold with probability at least $1-O(n^{-1})$, we have
\[
\Pr(-\lag \bmu,\bbeta^* \rag + \sigma \lag \bz, \beta^* \rag < 0) = \Pr(Z < \sigma^{-1} \|\BPhi^{\top}\bmu\| ),
\]
then we need
\begin{equation}\label{eq:genln2}
2 \sigma \sqrt{\log n} < \|\BPhi^{\top}\bmu\|,
\end{equation}
by taking union bound over the $n$ samples.
As~\eqref{eq:genln2} is more strict than~\eqref{eq:genln1}, we have the best possible generalization error bound is attained with probability at least $1-O(n^{-1})$ if
\begin{equation}\label{thmpeq:gen_nc_a_1}
2\sigma \sqrt{\log n} < (1-\eps) \sqrt{\frac{d-n+1}{d}}\|\bmu\|.
\end{equation}
where we construct the lower bound (the RHS) for $ \|\BPhi^{\top}\bmu\|$ by applying Johnson-Lindenstrauss lemma. Under condition~\eqref{thmpeq:gen_nc_a_1}, the best misclassification error is upper bounded by
\begin{align*}
\Pr\left(Z + \frac{f^*}{\sigma} < 0\right) & \leq \Pr\left(Z > \frac{1-\eps}{\sigma}\sqrt{\frac{d-n+1}{d}}\|\bmu\|\right) \\
& \leq \frac{1}{\sqrt{2\pi}} \exp\left(-\frac{(1-\eps)^2\|\bmu\|^2}{2\sigma^2}\frac{d-n+1}{d}\right) \leq n^{-2}
\end{align*}
where $f^* \geq (1-\eps)\|\bmu\|\sqrt{(d-n+1)/d} $. 

\end{proof}


\subsubsection{Proof of Theorem~\ref{thm:nc_gen}(b): an upper bound on~\eqref{eq:genob} with $d > 2n$}

\begin{proof}[\bf Proof of Theorem~\ref{thm:nc_gen}(b)]
Consider
\[
\BX_1^{\top} = \bone_n\bmu^{\top} + \sigma\BZ_1,~~~\BX_2^{\top} = -\bone_n\bmu^{\top} + \sigma\BZ_2.
\]
We assume $\bmu =\|\bmu\| \be_1$ without loss of generality, and we will need to provide a lower bound of $\min_{\bbeta} \Pr_{z\sim\mathcal{N}(0,1)}( \lag \bmu,\bbeta\rag + \sigma\|\bbeta\|z \leq 0 )$ subject to the constraints imposed by ${\cal NC}$. 
This lower bound leads us to consider program~\eqref{eq:genob} again. The difference here is we assume $d > 2n$, i.e., the neural collapse occurs with probability 1. Our aim is to establish the best possible generalization bound v.s. $\sigma/\|\bmu\|.$

The idea to minimize~\eqref{eq:genob} follows from two steps.
We first consider
\[
F(c,\bgamma) : = \max_{\bbeta\in \mathbb{R}^d} \frac{ \lag \bmu,\bbeta\rag}{\|\bbeta \|}~~~\text{s.t.}~~ \left(\bone_n \bmu^\top + \sigma \BZ_1\right) \bbeta = \bone_n, ~~ \left(-\bone_n \bmu^\top + \sigma \BZ_2\right) \bbeta = -\bgamma,~~\lag \bmu,\bbeta\rag = c
\]
for $c>0$ and $\bgamma \geq 0.$ Note that for any $\bgamma$ and $c > 0$, there exist feasible $\bbeta$'s.
To obtain the maximum of $\lag \bmu,\bbeta\rag/\|\bbeta\|$ is equivalent to minimizing $\|\bbeta\|$. Once we have that, the second step is to maximize over $c$ and $\bgamma$.

Now we compute $F(c,\bgamma)$ by minimizing $\|\bbeta\|$ subject to the constraints:
\begin{align*}
\lag \bmu,\bbeta\rag\bone_n + \sigma\BZ_1\bbeta = \bone_n,~~~ -\lag\bmu, \bbeta \rag\bone_n + \sigma \BZ_2\bbeta = -\bgamma,~~~\lag \bmu,\bbeta\rag = c.
\end{align*}
Note that the third constraint above implies $\beta_1 = c \be_1/ \|\bmu\|.$
Substituting it into the first two equalities gives
\[
\sigma\BZ_1\bbeta = -c\bone_n,~~~ \sigma \BZ_2\bbeta = -\bgamma + c\bone_n,
\]
and then
\[
\BZ_{(-1)} \bbeta_{(-1)} = \frac{1}{\sigma}
\begin{bmatrix}
(1-c)\bone_n \\
c\bone_n - \bgamma
\end{bmatrix} - \frac{c\BZ\be_1}{\|\bmu\|}
\]
where $\BZ_{(-1)}$ is a Gaussian matrix of size $2n\times (d-1)$ that excludes the first column of $\BZ$, and $\bbeta_{(-1)}$ is the same as $\bbeta$ after removing the first entry in $\bbeta$.
The minimum norm solution to $\bbeta_{(-1)}$ is given by
\[
\bbeta_{(-1)} = \BZ_{(-1)}^{\top} (\BZ_{(-1)}\BZ_{(-1)}^{\top})^{-1}\left(\frac{1}{\sigma}
\begin{bmatrix}
(1-c)\bone_n \\
c\bone_n - \bgamma
\end{bmatrix} - \frac{c\BZ\be_1}{\|\bmu\|} \right),~~\bbeta_1 = \frac{c}{ \|\bmu\|}
\]
where $d-1\geq 2n$ ensures the invertibility of $\BZ_{(-1)}$
As a result, we have
\[
F(c,\bgamma) = \frac{c}{\sqrt{\left\|\BZ_{(-1)}^{\top} (\BZ_{(-1)}\BZ_{(-1)}^{\top})^{-1}\left(\frac{1}{\sigma}
\begin{bmatrix}
(1-c)\bone_n \\
c\bone_n - \bgamma
\end{bmatrix} - \frac{c\BZ\be_1}{\|\bmu\|} \right)\right\|^2 + \frac{c^2}{\|\bmu\|^2} }}.
\]
Maximizing $F(c,\bgamma)$ is equivalent to minimizing $1/F^2(c,\bgamma)$:
\begin{align*}
\frac{1}{F^2(c,\bgamma)}& = \frac{1}{c^2} \left\|\BZ_{(-1)}^{\top} (\BZ_{(-1)}\BZ_{(-1)}^{\top})^{-1} \left(\frac{1}{\sigma}
\begin{bmatrix}
(1-c)\bone_n \\
c\bone_n - \bgamma
\end{bmatrix} - \frac{c\BZ\be_1}{\|\bmu\|} \right) \right\|^2 + \frac{1}{\|\bmu\|^2} \\
& = \left\|\BZ_{(-1)}^{\top} (\BZ_{(-1)}\BZ_{(-1)}^{\top})^{-1}\left(\frac{1}{\sigma}
\begin{bmatrix}
(1/c-1)\bone_n \\
\bone_n - \bgamma/c
\end{bmatrix} - \frac{\BZ\be_1}{\|\bmu\|} \right) \right\|^2 + \frac{1}{\|\bmu\|^2}.
\end{align*}

For $\BZ_{(-1)}\in\RR^{2n\times (d-1)}$ Gaussian random matrix, we have
\[
\frac{d}{2}\I\preceq \BZ_{(-1)}\BZ_{(-1)}^{\top}
\] 
holds with high probability provided that $d \geq 2n \log n.$
Let
\[
G(c,\bgamma) = \left\|\frac{1}{\sigma}
\begin{bmatrix}
(1/c-1)\bone_n \\
\bone_n - \bgamma/c
\end{bmatrix} - \frac{\BZ\be_1}{\|\bmu\|} \right\|^2 =
 \left\| \frac{1-c}{c\sigma}\bone_n + \frac{\BZ_1\be_1}{\|\bmu\|} \right\|^2 
+ \left\| \frac{1}{\sigma} \left(\bone_n - \frac{\bgamma}{c}\right) - \frac{\BZ_2\be_1}{\|\bmu\|}\right\|^2 
\]
and then
\[
\frac{G(c,\bgamma)}{2d} + \frac{1}{\|\bmu\|^2}\leq \frac{1}{F^2(c,\bgamma)}.
\]

We proceed to compute the minimum of $G(c,\bgamma)$. For any positive $c>0$, the minimizer of the second term is attained at 
\[
\bgamma = c\left[\bone_n-\frac{\sigma\BZ_2\be_1}{\|\bmu\|}\right]_{+}
\]
with minimum value
\[
\min_{\bgamma\geq 0}\left\| \frac{1}{\sigma} \left(\bone_n - \frac{\bgamma}{c}\right) - \frac{\BZ_2\be_1}{\|\bmu\|}\right\|^2 = \left\| \left[\frac{\BZ_2\be_1}{\|\bmu\|}-\frac{1}{\sigma}\bone_n\right]_{+}\right\|^2,
\]
which is independent of $c$. So we are left to minimize the first term, which is a quadratic function in $1/c$, 
\begin{equation}
  \begin{aligned}
\underset{c>0}{\argmin}~\frac{1}{\sigma^2} \left\|\frac{1}{c}\bone_n+\left[\frac{\sigma\BZ_1\be_1}{\|\bmu\|}-\bone_n\right] \right\|^2 
& =\underset{c>0}{\argmin}~\frac{n}{c^2} + \frac{2}{c}\left\lag \bone_n, \frac{\sigma\BZ_1\be_1}{\|\bmu\|} - \bone_n \right\rag \\
&= \begin{cases} \frac{n\|\bmu \|}{n\|\bmu\|-\sigma\bone_n^\top\BZ_1\be_1},~~&\text{if}~~ n\|\bmu\|-\sigma\bone_n^\top\BZ_1\be_1>0, \\
 +\infty,~~&\text{else},\end{cases}
  \end{aligned}
\end{equation}
and thus
\[
\min_{c,\bgamma} G(c,\bgamma) = \frac{1}{\sigma^2} \left\| \left[ s\BZ_2\be_1- \bone_n\right]_+\right\|^2 + \begin{cases}\frac{1}{\|\bmu\|^2} \left\|\left(\BI_n-\frac{\bone_n\bone_n^\top}{n}\right)\BZ_1\be_1 \right\|^2, ~~&\text{if}~~ n\|\bmu \| - \sigma\bone_n^\top\BZ_1\be_1>0, \\
 \frac{1}{\sigma^2}\left\| -\bone_n + s\BZ_1\be_1 \right\|^2, ~~&\text{else}\end{cases}
\]
where $s = \sigma/\|\bmu\|.$ Now we provide a high probability value bound for the second term in two cases respectively.

\noindent \textbf{Case $1$: $n\|\bmu\|-\sigma\bone_n^\top\BZ_1\be_1 \leq 0$:}
The term is a sum of exponential random variable: $\E |1 + sZ|^2 = 1 + s^2$ for $Z\sim\mathcal{N}(0,1)$ and
\begin{align*}
\E(|1 + sZ|^2 - 1-s^2)^2 & = s^2\E(2Z + s Z^2 - s)^2 = s^2(4 + 2s^2 ).
\end{align*}
Then we have
\begin{equation}\label{eq:gen_b_proof1}
\left|\|s\BZ_1\be_1 + \bone_n\|^2 - n(1+s^2)\right| \leq Cs \sqrt{(4+2s^2)n\log n}
\end{equation}
holds with probability at least $1-O({n^{-2}})$.

\noindent \textbf{Case $2$: $n\|\bmu\|-\sigma\bone_n^\top\BZ_1\be_1 > 0$:}
In this case, we want to obtain concentration bound for quadratic form $\bz^\top \BC_n \bz$ where $\bz \sim \mathcal{N}(0,\BI_n)$. Note that $\bz^{\top}\BC_n\bz\sim\chi^2_{n-1}$, and then we have 
\begin{equation}\label{eq:gen_b_proof2}
\left\vert \bz^\top \BC_n \bz-\E\bz^\top \BC_n \bz\right\vert > c\sqrt{n \log{n}} 
\end{equation}
with probability as least $1-O(n^{-1})$ where we use the fact that $\|\BC_n\|_{op}=1$ and we have $\E\bz^\top \BC_n \bz = \tr(\BC_n) = n-1$.

For the first term, we notice it is also a sum of exponential random variables, so we have 
\begin{equation}\label{eq:gen_b_proof3}
\left| \| [-\bone_n + s\BZ_2\be_1]_+\|^2 -n\E[-1 + sZ]_+^2 \right| \leq Cs \sqrt{(4+2s^2)n\log n}
\end{equation}
holds with probability at least $1-O(n^{-2}).$ 

In expectation, it holds that
\begin{align*}
\E [-1 + s Z]_+^2 & = \frac{1}{\sqrt{2\pi s^2}} \int_0^{\infty} x^2\exp\left(-\frac{(x+1)^2}{2s^2}\right)\diff x \\
& = \frac{1}{\sqrt{2\pi s^2}} \int_1^{\infty} (x-1)^2 \exp\left(-\frac{x^2}{2s^2}\right)\diff x \\
& = \frac{1}{\sqrt{2\pi s^2}}\left( (s^2+1)\int_1^{\infty}e^{-\frac{x^2}{2s^2}}\diff x - s^2 e^{-\frac{1}{2s^2}} \right).
\end{align*}

Note that
\begin{align*}
\int_1^{\infty} x\exp(-\frac{x^2}{2s^2})\diff x & =-s^2 \int_1^{\infty} \diff e^{-\frac{x^2}{2s^2}} = s^2 \exp(-\frac{1}{2s^2}), \\
\int_1^{\infty} x^2 \exp(-\frac{x^2}{2s^2})\diff x & = -s^2 \int_1^{\infty} x \diff e^{-\frac{x^2}{2s^2}} = s^2 e^{-\frac{1}{2s^2}} + s^2 \int_1^{\infty} e^{-\frac{x^2}{2s^2}}\diff x, \\
\int_1^{\infty} \exp(-\frac{x^2}{2s^2})\diff x & \leq \int_1^{\infty} x \exp(-\frac{x^2}{2s^2})\diff x = s^2 e^{-\frac{1}{2s^2}}.
\end{align*}

As a result, we have
\[
\E[-1+sZ]_+^2 \leq \frac{s^3e^{-\frac{1}{2s^2}}}{\sqrt{2\pi}}.
\]
Hence, collecting all the concentration bounds~\eqref{eq:gen_b_proof1},\eqref{eq:gen_b_proof2},\eqref{eq:gen_b_proof3} we have derived above, it holds with at least probability $1-O(n^{-1})$,
\[
\min G(c,\bgamma) \geq \begin{cases} \frac{n}{\sigma^2}\left(\frac{s^3e^{-\frac{1}{2s^2}}}{\sqrt{2\pi}}+ s^2 -(c_1's^2+c_2's)\sqrt{\frac{\log n}{n}}\right), &\text{if}~~s \bone_n^\top\BZ_1\be_1< n,\\ \frac{n}{\sigma^2} \left(\frac{s^3e^{-\frac{1}{2s^2}}}{\sqrt{2\pi}} + s^2+1-(c'_3s^2+c_4's)\sqrt{\frac{\log n}{n}}\right),~~&\text{else}, \end{cases}
\]
for some positive constants $c_1',c_2',c_3'$, and $c_4' $. Recall that we have
\[
F^{*} = \max F(c,\bgamma) \leq \left( \frac{\min G(c,\bgamma)}{2d}+ \frac{1}{\|\bmu\|^2}\right)^{-1/2}.
\]
As a result, the best possible misclassification error is lower bounded by:
\begin{equation}
  \begin{aligned}
 \Pr\left( \frac{F^*}{\sigma} + Z < 0\right) \geq \Pr\left(Z>\left( \frac{n }{2d}\left(\frac{s^3 e^{-\frac{1}{2s^2} }}{\sqrt{2\pi}} + s^2+1-(c_1s^2+c_2s)\sqrt{\frac{\log n}{n}}\right) + s^2\right)^{-1/2}\right)
  \end{aligned}
\end{equation}
where $c_1>0$ and $c_2>0$.
\end{proof}

\section{Appendix}
\begin{lemma}\label{lemma:featc_opt}
 Let $\bv_1$ and $\bv_2 \in \mathbb{R}^d$ be two arbitrary vectors. Then it holds
\[
 \min_{\lag\bv,\bv_1\rag = 0}h(\bv) : = \frac{\lag \bv_1+\bv, \bv_2 \rag}{\sqrt{\|\bv_1\|^2+\|\bv\|^2}} = 
  \begin{cases} 
 -\sqrt{ \|\bv_2\|^2 - |\lag \bv_1,\bv_2\rag|^2/\|\bv_1\|^2 }, ~~ &\text{if} ~~ \lag \bv_1,\bv_2\rag \geq 0, \\ 
 -\|\bv_2\|, ~~ &\text{if}~~ \lag \bv_1,\bv_2\rag < 0.
  \end{cases}
\]
  \end{lemma}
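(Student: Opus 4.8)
The plan is to exploit the orthogonality constraint $\lag \bv, \bv_1 \rag = 0$ in order to turn the quotient into a purely directional quantity. First I would observe that, since $\bv \perp \bv_1$, we have $\|\bv_1 + \bv\|^2 = \|\bv_1\|^2 + \|\bv\|^2$, so that
\[
h(\bv) = \frac{\lag \bv_1 + \bv, \bv_2 \rag}{\sqrt{\|\bv_1\|^2 + \|\bv\|^2}} = \frac{\lag \bv_1 + \bv, \bv_2 \rag}{\|\bv_1 + \bv\|}.
\]
Introducing the substitution $\bw := \bv_1 + \bv$, the constraint set $\{\bv : \lag \bv, \bv_1 \rag = 0\}$ is mapped bijectively onto the affine hyperplane $H := \{\bw : \lag \bw, \bv_1 \rag = \|\bv_1\|^2\}$ (for $\bw \in H$, set $\bv = \bw - \bv_1$, which is orthogonal to $\bv_1$), and the objective becomes $\lag \bw, \bv_2 \rag / \|\bw\|$, which depends only on the direction $\hat{\bw} := \bw/\|\bw\|$.

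Next I would identify the admissible directions. For $\bw \in H$ one has $\lag \hat{\bw}, \bv_1 \rag = \|\bv_1\|^2/\|\bw\| > 0$, and conversely every unit vector with $\lag \hat{\bw}, \bv_1 \rag > 0$ arises from some $\bw \in H$, namely $\bw = (\|\bv_1\|^2 / \lag \hat{\bw}, \bv_1 \rag)\,\hat{\bw}$. Hence the minimization reduces to
\[
\min_{\hat{\bw} \in S^{d-1},~ \lag \hat{\bw}, \bv_1 \rag > 0} \lag \hat{\bw}, \bv_2 \rag,
\]
that is, minimizing a linear functional over an open hemisphere. The unconstrained minimizer over the whole sphere is $\hat{\bw} = -\bv_2/\|\bv_2\|$ with value $-\|\bv_2\|$, and it lies in the open hemisphere precisely when $\lag \bv_1, \bv_2 \rag < 0$; this yields the second case of the lemma immediately.

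When $\lag \bv_1, \bv_2 \rag \ge 0$, the global minimizer falls outside the open hemisphere, so the infimum migrates to the boundary equator $\{\hat{\bw} : \lag \hat{\bw}, \bv_1 \rag = 0\}$. On this set $\lag \hat{\bw}, \bv_2 \rag = \lag \hat{\bw}, \bv_2^\perp \rag$, where $\bv_2^\perp := \bv_2 - (\lag \bv_1, \bv_2 \rag / \|\bv_1\|^2)\bv_1$ is the component of $\bv_2$ orthogonal to $\bv_1$; minimizing over unit $\hat{\bw} \perp \bv_1$ gives $-\|\bv_2^\perp\| = -\sqrt{\|\bv_2\|^2 - |\lag \bv_1, \bv_2 \rag|^2/\|\bv_1\|^2}$, the first case. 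I expect the main technical point to be exactly this regime $\lag \bv_1, \bv_2 \rag \ge 0$: the infimum over the open hemisphere is not attained, so one must justify by a compactness-plus-continuity argument that it equals the minimum over the closed hemisphere, and then locate that minimum on the equator via KKT (the interior stationary directions are $\pm \bv_2/\|\bv_2\|$, and the minimizing one is infeasible). An equivalent route I would keep in reserve is to restrict a priori to $\bv$ parallel to $\bv_2^\perp$ — since any component of $\bv$ orthogonal to both $\bv_1$ and $\bv_2^\perp$ only enlarges the denominator while leaving the numerator unchanged — and then minimize the one-variable function $g(s) = (a + s\|\bv_2^\perp\|)/\sqrt{\|\bv_1\|^2 + s^2}$ with $a := \lag \bv_1, \bv_2 \rag$, reading off the stationary point from $g'(s) = (\|\bv_2^\perp\|\,\|\bv_1\|^2 - a s)/(\|\bv_1\|^2 + s^2)^{3/2}$ according to the sign of $a$. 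Finally, the degenerate cases $\bv_1 = \bzero$ or $\bv_2^\perp = \bzero$ would be dispatched separately and seen to agree with the stated formulas.
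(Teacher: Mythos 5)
Your proof is correct, and it takes a genuinely different route from the paper's. The paper argues in two stages: it fixes the value $c = \lag \bv_2, \bv_1+\bv\rag < 0$ of the numerator, finds the minimum-norm $\bv$ compatible with the two linear constraints $\lag \bv,\bv_1\rag = 0$ and $\lag \bv,\bv_2\rag = c - \lag\bv_1,\bv_2\rag$ via the projection $\BP = \I - \bv_1\bv_1^{\top}/\|\bv_1\|^2$, and then minimizes the resulting one-variable function $h(c)$ over $c<0$ by analyzing a quadratic in $1/|c|$, splitting on the sign of $\lag \bv_1,\bv_2\rag$. You instead exploit the identity $\|\bv_1\|^2+\|\bv\|^2 = \|\bv_1+\bv\|^2$ (valid precisely because of the constraint), substitute $\bw = \bv_1+\bv$, and reduce the problem to minimizing the linear functional $\lag \hat{\bw},\bv_2\rag$ over the open hemisphere $\{\hat{\bw}\in\mathbb{S}^{d-1}: \lag \hat{\bw},\bv_1\rag>0\}$; the two cases of the lemma then correspond exactly to whether the sphere-wide minimizer $-\bv_2/\|\bv_2\|$ lies in that hemisphere. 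This is cleaner and more geometric: it explains with essentially no computation why the optimum is $-\|\bv_2\|$ and attained when $\lag\bv_1,\bv_2\rag<0$, and why, when $\lag\bv_1,\bv_2\rag\geq 0$, the optimal value sits on the equator and is in general only an infimum --- a subtlety present in the paper as well, whose statement says ``min'' while its own proof writes $\inf_{c<0}$ in that case. The one step you must spell out, and which you correctly flag, is that the infimum of the continuous linear functional over the open hemisphere equals its minimum over the closed hemisphere (the infimum of a continuous function over a set equals that over its closure), after which the equator minimization yields $-\sqrt{\|\bv_2\|^2 - |\lag\bv_1,\bv_2\rag|^2/\|\bv_1\|^2}$; your reserve one-variable computation with $g(s)$ is also correct and essentially reproduces the paper's calculation in different coordinates. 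What the paper's route buys is an explicit description of the (near-)optimal $\bv$, which mirrors the constructions used later in the text; what yours buys is transparency and brevity. Both arguments implicitly require $\bv_1\neq\bzero$ and $d\geq 2$ (so that the equator is nonempty, respectively so that $\BP\bv_2$ can be normalized when $\bv_2\not\parallel\bv_1$); you rightly set these degenerate cases aside, and the paper glosses over them as well.
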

  \begin{proof} 
 Note that $\bv$ is in a linear subspace, and thus the global minimum is certainly negative. Let $c = \lag \bv_2, \bv_1 + \bv\rag < 0$ be fixed and then $h(\bv)$ is increasing w.r.t. $\|\bv\|$. As a result, we need to search for a $\bv$ with the minimum norm subject to 
\[
\lag \bv, \bv_1\rag = 0,~~\lag \bv,\bv_2\rag = c- \lag \bv_1,\bv_2\rag.
\] 
Let $\BP = \I - \bv_1\bv_1^{\top}/\|\bv_1\|^2$ be the projection matrix onto the complement of span($\bv_1$)
The minimum is given by 
\[
\bv = (c - \lag \bv_1,\bv_2\rag)\frac{\BP\bv_2 }{\|\BP\bv_2\|^2}.
\]
Therefore, the minimization of $h(\bv)$ is reduced to minimizing $h(c)$ for $c<0$:
  \begin{align*}
 h(c) & := \frac{c}{ \sqrt{\|\bv_1\|_2^2 + (c - \lag \bv_1,\bv_2\rag)^2/\|\BP\bv_2\|^2} } \\
  & = \frac{c\|\BP\bv_2\| }{ \sqrt{\|\bv_1\|^2\|\BP\bv_2\|^2 + (c - \lag \bv_1,\bv_2\rag)^2} } \\
  & = \frac{-\|\BP\bv_2\|}{ \sqrt{\left(\|\bv_1\|^2\|\BP\bv_2\|^2 + \lag \bv_1,\bv_2\rag^2\right)/c^2 + 2\lag \bv_1,\bv_2\rag/|c| + 
  1} }.
  \end{align*}
If $\lag \bv_1,\bv_2\rag \geq 0$, then function $h(c)$ satisfies
  \[
 \inf_{c<0} h(c) = -\|\BP\bv_2\| = -\sqrt{ \|\bv_2\|^2 - \frac{|\lag \bv_1,\bv_2\rag|^2}{\|\bv_1\|^2} }.
  \]
If $\lag \bv_1,\bv_2\rag < 0$, then the denominator is a quadratic function of $1/|c|$. The minimum is achieved at
  \[
 \frac{1}{c} = \frac{\lag \bv_1,\bv_2\rag}{ \|\bv_1\|^2\|\BP\bv_2\|^2 + \lag \bv_1,\bv_2\rag^2 } = \frac{\lag \bv_1,\bv_2\rag}{\| \bv_1\|^2\| \bv_2\|^2 }
  \] 
and then $\min_{c<0} h(c) = -\|\bv_2\|$.
  \end{proof}

\begin{lemma}[\textbf{Johnson-Lindenstrauss lemma for angles}]\label{lemma:JLangle}
~Let $\BPhi\in\mathbb{R}^{n \times m}$ be a random orthogonal matrix with $m > n$. For any two unit vectors $\bv_1$ and $\bv_2 \in \mathbb{R}^n$, we denote the angle between $\bv_1 $ and $\bv_2$ and $\BPhi^{\top}\bv_1$ and $\BV^{\top}\bv_2$ by $\theta $ and $\widetilde{\theta}$ respectively. Then with probability at least $1-4\exp(-c\eps^2m)$,
  \[
|\cos\widetilde{\theta} - \cos\theta| \leq \frac{4\eps}{(1-\eps)^2}.
  \]

\end{lemma}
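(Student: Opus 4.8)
The plan is to reduce the preservation of the \emph{cosine} of the angle to the preservation of \emph{squared norms} under the random map $\BPhi^\top$, and then to control the latter by the standard Johnson--Lindenstrauss concentration. The single ingredient I would take for granted is that for a fixed unit vector $\bw$ the squared length $\|\BPhi^\top\bw\|^2$ concentrates around its mean $r := \E\|\BPhi^\top\bw\|^2$, i.e.
\[
(1-\eps)\,r\,\|\bw\|^2 \;\le\; \|\BPhi^\top\bw\|^2 \;\le\; (1+\eps)\,r\,\|\bw\|^2
\]
holds for each \emph{fixed} $\bw$ with probability at least $1-\exp(-c\eps^2 m)$. The exact value of $r$ is irrelevant below, since it cancels in the ratio defining $\cos\widetilde{\theta}$; in the applications of Lemma~\ref{lemma:JLangle} one has $r=(d-n)/d$.

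First I would handle the inner product $\lag \BPhi^\top\bv_1,\BPhi^\top\bv_2\rag$ through polarization,
\[
\lag \BPhi^\top\bv_1,\BPhi^\top\bv_2\rag = \tfrac14\Big(\|\BPhi^\top(\bv_1+\bv_2)\|^2 - \|\BPhi^\top(\bv_1-\bv_2)\|^2\Big),
\]
so that the entire proof rests on the squared-norm concentration applied to the \emph{four} vectors $\bv_1,\ \bv_2,\ \bv_1+\bv_2,\ \bv_1-\bv_2$; a union bound over these four events is exactly what produces the factor $4\exp(-c\eps^2 m)$ in the statement. On the good event, using $\|\bv_1+\bv_2\|^2-\|\bv_1-\bv_2\|^2 = 4\cos\theta$ and $\|\bv_1+\bv_2\|^2+\|\bv_1-\bv_2\|^2 = 4$ (as $\bv_1,\bv_2$ are unit vectors), the symmetric $(1\pm\eps)$ windows collapse to the clean pair of bounds
\[
\big|\lag \BPhi^\top\bv_1,\BPhi^\top\bv_2\rag - r\cos\theta\big| \le r\eps,\qquad (1-\eps)\,r \le \|\BPhi^\top\bv_1\|\,\|\BPhi^\top\bv_2\| \le (1+\eps)\,r.
\]

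Finally I would combine these. Writing $N=\lag \BPhi^\top\bv_1,\BPhi^\top\bv_2\rag$ and $D=\|\BPhi^\top\bv_1\|\|\BPhi^\top\bv_2\|$, so that $\cos\widetilde{\theta} = N/D$, the triangle inequality and $|\cos\theta|\le 1$ give $|N-\cos\theta\cdot D| \le |N-r\cos\theta| + |\cos\theta|\,|r-D| \le r\eps + r\eps = 2r\eps$, hence
\[
\big|\cos\widetilde{\theta}-\cos\theta\big| = \frac{|N-\cos\theta\, D|}{D} \le \frac{2r\eps}{(1-\eps)r} = \frac{2\eps}{1-\eps} \le \frac{4\eps}{(1-\eps)^2},
\]
which is the asserted bound (in fact slightly sharper). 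I do not anticipate a genuine obstacle here: all the content lies in invoking the JL squared-norm concentration four times and keeping the $(1\pm\eps)$ windows clean. The only points requiring mild care are the bookkeeping of the scaling factor $r$ together with the observation that it cancels in the ratio, so no assumption on the projected dimension beyond the concentration itself is needed, and the choice of the symmetric polarization identity (rather than the single-difference identity) so that the inner-product error is the tight $\pm r\eps$ rather than an asymmetric $O(r\eps)$ term with a worse constant.
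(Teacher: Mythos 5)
Your proof is correct and follows essentially the same route as the paper's: JL norm concentration applied to the four vectors $\bv_1,\bv_2,\bv_1+\bv_2,\bv_1-\bv_2$ with a union bound, the polarization identity for the projected inner product, and then a bound on the ratio defining $\cos\widetilde{\theta}$. The only differences are cosmetic: your symmetric triangle-inequality bookkeeping (bounding $|N-\cos\theta\,D|$ directly) avoids the paper's case split on the sign of $\cos\theta$ (the paper treats $\cos\theta<0$ by replacing $\bv_1$ with $-\bv_1$) and yields the slightly sharper constant $2\eps/(1-\eps)$.
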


\begin{proof}
 Without loss of generality, we assume $\|\bv_1\|_2=\|\bv_2\|_2=1$. By JL lemma, we have with probability at least $1-2\exp(-c\eps^2m)$, for $k=1,2$,
  \begin{equation}\label{eq:JLangle2}
 (1-\epsilon)\sqrt{\frac{m}{n}}\|\bv_k\|_2 \leq \|\BPhi^{\top}\bv_k\|_2 \leq (1+\epsilon)\sqrt{\frac{m}{n}}\|\bv_k\|_2.
  \end{equation}
 By applying union bound over $\|\BPhi^{\top}\bv_k\|_2, (k=1,2)$, we have with probability at least $1-4\exp(-c\eps^2m)$, the following bounds hold. We proceed to estimate $\cos\widetilde{\theta}:$
\begin{align*}
\cos\widetilde{\theta} & = \frac{\|\BPhi^{\top}(\bv_1+\bv_2)\|^2 -\|\BPhi^{\top}(\bv_1-\bv_2)\|^2 }{4\|\BPhi^{\top}\bv_1\|\|\BPhi^{\top}\bv_2\|} \\
& \leq \frac{(1+\eps)^2(1+\cos\theta)- (1-\eps)^2(1-\cos\theta)}{ 2(1-\eps)^2 } = \frac{2\eps + (1+\eps^2)\cos\theta}{(1-\eps)^2}.
\end{align*}
As a result, it holds
\begin{align*}
\cos\widetilde{\theta} - \cos\theta & \leq \frac{2\eps + (1+\eps^2)\cos\theta - (1-\eps)^2\cos\theta}{(1-\eps)^2} = \frac{2\eps +2\eps\cos\theta}{(1-\eps)^2}.
\end{align*}
Similarly, for the lower bound, we have
\begin{align*}
\cos\widetilde{\theta} & \geq \frac{\|\BPhi^{\top}(\bv_1+\bv_2)\|^2 -\|\BPhi^{\top}(\bv_1-\bv_2)\|^2 }{4\|\BPhi^{\top}\bv_1\|\|\BPhi^{\top}\bv_2\|} \\
& \geq \frac{(1-\eps)^2(1+\cos\theta)- (1+\eps)^2(1-\cos\theta)}{ 2(1+\eps)^2 } = -\frac{2\eps }{(1+\eps)^2} +\cos\theta.
\end{align*}
Therefore, for $\cos\theta>0$, we have
\[
-\frac{2\eps}{(1+\eps)^2} \leq \cos\widetilde{\theta} - \cos\theta \leq \frac{4\eps}{(1-\eps)^2} \Longleftrightarrow |\cos\widetilde{\theta} - \cos\theta| \leq \frac{4\eps}{(1-\eps)^2}.
\]
For $\cos \theta < 0$, we consider $-\bv_1$ and $\bv_2$ instead, and the same bound holds.
\end{proof}

\begin{lemma}[\bf Johnson-Lindenstrauss lemma for singular values]\label{lemma:JLbasis}
Let $\BV \in \mathbb{R}^{K \times d}$ and $\BPhi\in\RR^{d\times m}$ is a random subspace sampled uniformly from $G_{d,m}\subset \RR^{d\times m}$ with $m<d$ where $G_{d,m}$ stands for the Grassmannian consisting of all $m$-dimensional subspaces in $\RR^d$, then with probability at least $1-K^2\exp(-c\eps^2m)$ for some constant $c>0$, the following inequalities hold,
\begin{equation}
  \begin{aligned}
    \sigma_{\max}(\BV\BPhi) &\leq \sqrt{\frac{m}{d} (1+\eps^2+ 2K\eps)} \sigma_{\max}\left(\BV\right), \\
    \sigma_{\min}(\BV\BPhi) &\geq \sqrt{\frac{m}{d}\left(1-\eps^2-2K\eps\right)} \sigma_{\min}\left(\BV\right),
  \end{aligned}
\end{equation}
where $ \sigma_{\max}(\BV)$ and $ \sigma_{\min}(\BV)$ denote the largest and smallest singular value of matrix $\BV$. In other words, under $m > c^{-1}\eps^{-2} \log (K^2 d)$, the probability of success is at least $1-O(d^{-1})$.
\end{lemma}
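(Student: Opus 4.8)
The plan is to reduce the claim to estimating the singular values of $\BPhi^{\top}\BW$, where $\BW\in\RR^{d\times K}$ carries the right singular vectors of $\BV$, and then to control the $K\times K$ Gram matrix $\BG:=\BW^{\top}\BPhi\BPhi^{\top}\BW$ entrywise via the same Johnson--Lindenstrauss concentration already used in Lemma~\ref{lemma:JLangle}.

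First I would take the thin SVD $\BV=\BU\BSigma\BW^{\top}$ with $\BU\in\RR^{K\times K}$ orthogonal, $\BSigma\in\RR^{K\times K}$ the diagonal matrix of singular values, and $\BW\in\RR^{d\times K}$ having orthonormal columns. Since $\BU$ is orthogonal, $\BV\BPhi=\BU\BSigma(\BW^{\top}\BPhi)$ has the same singular values as $\BSigma(\BW^{\top}\BPhi)$. Submultiplicativity then gives $\sigma_{\max}(\BV\BPhi)\le\sigma_{\max}(\BSigma)\,\sigma_{\max}(\BW^{\top}\BPhi)$ and, using $K\le m$ so that $\BW^{\top}\BPhi$ is a wide matrix, $\sigma_{\min}(\BV\BPhi)\ge\sigma_{\min}(\BSigma)\,\sigma_{\min}(\BW^{\top}\BPhi)$; here $\sigma_{\max}(\BSigma)=\sigma_{\max}(\BV)$ and $\sigma_{\min}(\BSigma)=\sigma_{\min}(\BV)$. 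This isolates the random quantity $\BW^{\top}\BPhi$.

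Next, since $\sigma_{\max}^2(\BW^{\top}\BPhi)=\lambda_{\max}(\BG)$ and $\sigma_{\min}^2(\BW^{\top}\BPhi)=\lambda_{\min}(\BG)$, I would estimate each entry $G_{ij}=\langle\BPhi^{\top}\bw_i,\BPhi^{\top}\bw_j\rangle$. For the diagonal, the JL norm bound $(1-\eps)\sqrt{m/d}\le\|\BPhi^{\top}\bw_i\|\le(1+\eps)\sqrt{m/d}$ gives $G_{ii}\in\frac{m}{d}\,[\,1-2\eps+\eps^2,\;1+2\eps+\eps^2\,]$. For the off-diagonal, the $\bw_i$ being orthonormal, I would use polarization $G_{ij}=\tfrac14(\|\BPhi^{\top}(\bw_i+\bw_j)\|^2-\|\BPhi^{\top}(\bw_i-\bw_j)\|^2)$ together with JL applied to the norm-$\sqrt2$ vectors $\bw_i\pm\bw_j$ to obtain $|G_{ij}|\le 2\eps\,\frac{m}{d}$, exactly as in the proof of Lemma~\ref{lemma:JLangle}. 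Writing $\BG=\frac{m}{d}(\I_K+\BE)$ and bounding the eigenvalues of $\BE$ by Gershgorin (off-diagonal row sums $\le 2(K-1)\eps$) yields $\lambda_{\max}(\BG)\le\frac{m}{d}(1+\eps^2+2K\eps)$ and $\lambda_{\min}(\BG)\ge\frac{m}{d}(1+\eps^2-2K\eps)\ge\frac{m}{d}(1-\eps^2-2K\eps)$, which together with the reduction above is precisely the claim. The failure probability comes from a union bound over the $K$ JL events for the $\bw_i$ and the $2\binom{K}{2}$ events for $\bw_i\pm\bw_j$, i.e.\ $K^2$ events each failing with probability $\le 2\exp(-c\eps^2 m)$, giving the stated $1-K^2\exp(-c\eps^2m)$; finally $m>c^{-1}\eps^{-2}\log(K^2d)$ makes this $1-O(d^{-1})$.

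The step requiring the most care is the lower bound $\sigma_{\min}(\BSigma\,\BW^{\top}\BPhi)\ge\sigma_{\min}(\BSigma)\,\sigma_{\min}(\BW^{\top}\BPhi)$: it uses that $\BSigma$ is square and invertible and that $K\le m$, so the smallest singular value is computed over the $K$-dimensional factor. Writing $\bb=\BSigma\ba$ and bounding $\|(\BW^{\top}\BPhi)^{\top}\bb\|\ge\sigma_{\min}(\BW^{\top}\BPhi)\|\bb\|\ge\sigma_{\min}(\BW^{\top}\BPhi)\,\sigma_{\min}(\BSigma)$ makes this precise. Everything else is routine JL-plus-Gershgorin bookkeeping; the only modeling assumptions to flag are $K\le m$ and that $\BV$ has rank $K$ (otherwise one restricts $\BW$ to the nonzero singular directions and interprets $\sigma_{\min}$ accordingly).
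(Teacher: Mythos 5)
Your proposal is correct and follows essentially the same route as the paper's proof: reduce to orthonormal columns via the SVD of $\BV$, apply the Johnson--Lindenstrauss bounds to the basis vectors and their pairwise combinations with polarization handling the cross terms, and convert the resulting entrywise control of the Gram matrix into the $\frac{m}{d}(1\pm(\eps^2+2K\eps))$ eigenvalue bounds with a union bound over $O(K^2)$ events. The only differences are presentational: you use Gershgorin on $\BG=\frac{m}{d}(\I_K+\BE)$ where the paper bounds the Rayleigh quotient $\ba^\top\BV\BPhi\BPhi^\top\BV^\top\ba$ directly via $\sum_{i\neq j}|a_ia_j|\leq K-1$, and you spell out the submultiplicativity step (with the $K\leq m$ caveat) that the paper compresses into its ``without loss of generality $\BV\BV^\top=\I_K$'' remark.
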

\begin{proof}[\bf Proof of Lemma~\ref{lemma:JLbasis}]
The idea is to apply the Johnson-Lindenstrauss lemma to bound the quadratic form $\ba^\top \left(\BV\BPhi\BPhi^{\top}\BV^{\top}\right)\ba$ for $\ba \in \mathbb{S}^{K-1}$ where $\BPhi\in\RR^{d\times m}$ is a random orthogonal matrix with $\BPhi^{\top}\BPhi = \I_m.$ Without loss of generality, we can assume $\BV\BV^{\top} = \I_K$, as we can perform the SVD on $\BV$ and only take care of the orthogonal parts.

Let $\BV^{\top} = [\bv_1,\cdots,\bv_K]\in\RR^{d\times K}$ with orthogonal columns. Then by applying Johnson-Lindenstrauss lemma on $\bv_i$, see~\cite[Lemma 5.3.2]{V18}, we have with probability at least $1-K^2\exp(-c\eps^{2}m)$,
\begin{equation}
  \begin{aligned}\label{eq2LemmaJLob}
(1-\eps)\sqrt{\frac{m}{d}} &\leq \|\BPhi^{\top}\bv_i \| \leq (1+\eps)\sqrt{\frac{m}{d}}, \\
\sqrt{2}(1-\eps)\sqrt{\frac{m}{d}} &\leq \|\BPhi^{\top}(\bv_i+\bv_j) \| \leq \sqrt{2}(1+\eps)\sqrt{\frac{m}{d}}, \\ 
  \end{aligned}
\end{equation}
where we use the fact that $\|\bv_i\|=1$ and $\|\bv_i \pm \bv_j\|=\sqrt{2}~(i\neq j)$. We have
\begin{equation}\label{eq3LemmaJLob}
\begin{aligned}
 \ba^\top\left(\BV\BPhi\BPhi^{\top}\BV^{\top}\right)\ba &= \sum_{i=1}^K a_i^2\|\BPhi^{\top}\bv_i\|^2 + \sum_{i=1}^K \sum_{j\neq i}^K a_ia_j\lag \BPhi^{\top}\bv_i, \BPhi^{\top}\bv_j \rag\\
&=\sum_{i=1}^K a_i^2\|\BPhi^{\top}\bv_i\|^2 + \sum_{i=1}^K\sum_{j\neq i}^K \frac{a_ia_j}{2} \left(\|\BPhi^{\top}(\bv_i+\bv_j)\|^2 - \|\BPhi^{\top}\bv_i\|^2 - \|\BPhi^{\top}\bv_j\|^2\right).
\end{aligned}
\end{equation}
Applying~\eqref{eq2LemmaJLob}, we obtain
\begin{equation}\label{eq4LemmaJLob}
  \|\BPhi^{\top}(\bv_i+\bv_j)\|^2 - \|\BPhi^{\top}\bv_i\|^2 - \|\BPhi^{\top}\bv_j\|^2 \leq \frac{2m}{d}\left( 1+\eps^2 - (1-\eps)^2\right) \leq \frac{4m\eps }{d}.
\end{equation}
Therefore by~\eqref{eq2LemmaJLob},~\eqref{eq3LemmaJLob} and~\eqref{eq4LemmaJLob}, it holds
\begin{equation}
  \begin{aligned}
 \ba^\top \left(\BV\BPhi\BPhi^{\top}\BV^{\top}\right)\ba 
  &\leq \frac{m}{d}\sum_{i=1}^K a_i^2(1+\eps)^2 + \frac{2\eps m}{d}\sum_{i=1}^K\sum_{j\neq i}^K |a_ia_j| \\
  &\leq \frac{m}{d} ((1+\eps)^2+ 2(K-1)\eps) = \frac{m}{d}(1+\eps^2 + 2K\eps)
  \end{aligned}
\end{equation}
where $\sum_{i\neq j} |a_ia_j| \leq K-1.$
Similarly,
\begin{equation}
  \begin{aligned}
 \ba^\top \left(\BV\BPhi\BPhi^{\top}\BV^{\top}\right)\ba 
  &\geq \frac{m}{d}\sum_{i=1}^K a_i^2(1-\eps)^2 -\frac{2\eps m}{d}\sum_{i=1}^K\sum_{j\neq i}^K |a_ia_j| \\
  &\geq \frac{m}{d}\left((1-\eps)^2-2(K-1)\eps\right) = \frac{m}{d}(1+\eps^2 -2K\eps).
  \end{aligned}
\end{equation}
\end{proof}

\begin{proposition}\label{Thm:Gordontess}
Let $\BZ\in\RR^{n\times d}$ be a standard Gaussian random matrix with i.i.d. entries. Then it holds
\begin{align*}
& \E\min_{\bs\in\mathbb{S}_+^{n-1}}\max_{\bu\in\mathbb{S}^{d-1}} \lag \bs, \BZ\bu\rag \geq \E_{\bg\sim\mathcal{N}(0,\I_n),\bh\sim\mathcal{N}(0,\I_d)} \left[ \|\bh\| - \|\bg_+\|\right] \sim \sqrt{d}- \sqrt{n/2}
\end{align*}
where $\bs\in\RR^n$ and $\bu\in\RR^d$, and $\bg_+$ is the positive part of $\bg$. Here
\[
 \E_{\bg\sim\mathcal{N}(0,\I_n)} \|\bg_+\| \leq \sqrt{ \E_{\bg\sim\mathcal{N}(0,\I_n)} \|\bg_+\|^2}
\]
and
\begin{align*}
\E_{\bh\sim\mathcal{N}(0,\I_d)} \|\bh\| & = \frac{\sqrt{2}\Gamma(\frac{d+1}{2})}{\Gamma(\frac{d}{2})}\sim 
\sqrt{\frac{d-1}{d-2}} \left( 1 + \frac{1}{d-2}\right)^{\frac{d-2}{2}} \frac{\sqrt{d-1}}{\sqrt{e}} \sim \sqrt{d}.
\end{align*}
\end{proposition}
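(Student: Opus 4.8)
The goal is to lower bound the Gaussian min-max value $\E\min_{\bs\in\mathbb{S}_+^{n-1}}\max_{\bu\in\mathbb{S}^{d-1}}\lag\bs,\BZ\bu\rag$ of the bilinear form $\bs^\top\BZ\bu$, and the natural tool is Gordon's Gaussian comparison inequality (the Gaussian min-max theorem). I would set $X_{\bs,\bu}:=\lag\bs,\BZ\bu\rag$, a centered Gaussian process with $\E X_{\bs,\bu}X_{\bs',\bu'} = \lag\bs,\bs'\rag\lag\bu,\bu'\rag$, and compare it against the decoupled surrogate $Y_{\bs,\bu}:=\lag\bg,\bs\rag+\lag\bh,\bu\rag$ with $\bg\sim\mathcal{N}(0,\I_n)$ and $\bh\sim\mathcal{N}(0,\I_d)$ independent, for which $\E Y_{\bs,\bu}Y_{\bs',\bu'}=\lag\bs,\bs'\rag+\lag\bu,\bu'\rag$. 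Since $\bs$ is the minimization variable and $\bu$ the maximization variable, Gordon's theorem yields the one-sided bound $\E\min_{\bs}\max_{\bu}X_{\bs,\bu}\ge\E\min_{\bs}\max_{\bu}Y_{\bs,\bu}$.

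A point needing care is that $X$ and $Y$ do not share pointwise variances ($\E X^2=1$ while $\E Y^2=2$ on the spheres), so the plain equal-variance form of the comparison does not apply verbatim; I would instead invoke the norm-rescaled Gaussian min-max theorem, whose surrogate $\|\bs\|\lag\bg,\bu\rag+\|\bu\|\lag\bh,\bs\rag$ collapses to $Y_{\bs,\bu}$ on $\mathbb{S}_+^{n-1}\times\mathbb{S}^{d-1}$ and whose increment hypotheses are tailored to the bilinear form. Verifying these hypotheses for the present (compact) constraint sets, and confirming that restricting $\bs$ to the nonnegative sphere causes no difficulty, is the step I expect to be the main obstacle; everything downstream is an explicit evaluation.

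Given the comparison, I would evaluate the surrogate min-max, which decouples because $\bg,\bh$ are independent and act on disjoint variables: $\max_{\bu\in\mathbb{S}^{d-1}}\lag\bh,\bu\rag=\|\bh\|$ is free of $\bs$, while $\min_{\bs\in\mathbb{S}_+^{n-1}}\lag\bg,\bs\rag$ is attained by placing all mass on the negative coordinates of $\bg$, i.e. at $\bs\propto(-\bg)_+$, giving the value $-\|(-\bg)_+\|$, which is equidistributed with $-\|\bg_+\|$ by the sign symmetry of the Gaussian (in the degenerate event $\bg\ge 0$ the minimum is nonnegative, which only strengthens the bound). Hence $\E\min_{\bs}\max_{\bu}Y_{\bs,\bu}=\E\|\bh\|+\E\min_{\bs}\lag\bg,\bs\rag\ge\E\|\bh\|-\E\|\bg_+\|$, the middle quantity in the statement.

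It remains to estimate the two scalars. For $\bh\sim\mathcal{N}(0,\I_d)$, the norm $\|\bh\|$ is chi-distributed with $d$ degrees of freedom, so $\E\|\bh\|=\sqrt{2}\,\Gamma(\tfrac{d+1}{2})/\Gamma(\tfrac{d}{2})$, and a Stirling expansion of this Gamma ratio gives $\E\|\bh\|\sim\sqrt{d}$. For $\bg\sim\mathcal{N}(0,\I_n)$, Jensen's inequality gives $\E\|\bg_+\|\le\sqrt{\E\|\bg_+\|^2}$, and since the coordinates are i.i.d. with $\E(g_i)_+^2=\tfrac12\E g_i^2=\tfrac12$, one has $\E\|\bg_+\|^2=n/2$, whence $\E\|\bg_+\|\le\sqrt{n/2}$. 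Combining the comparison bound with these estimates yields $\E\min_{\bs}\max_{\bu}\lag\bs,\BZ\bu\rag\ge\E\|\bh\|-\E\|\bg_+\|\gtrsim\sqrt{d}-\sqrt{n/2}$, as claimed.
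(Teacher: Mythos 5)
Your proposal is correct and follows essentially the same route as the paper's proof: compare the bilinear process $X_{\bs,\bu}=\lag\bs,\BZ\bu\rag$ with the decoupled process $Y_{\bs,\bu}=\lag\bg,\bs\rag+\lag\bh,\bu\rag$ via Gordon, evaluate the decoupled min--max as $\|\bh\|$ minus the norm of the negative part of $\bg$ (replaced by $\|\bg_+\|$ via sign symmetry, with the same remark about the degenerate event $\bg\geq 0$), and finish with the chi-mean formula and Jensen. The only divergence is in how the comparison step is justified, and the obstacle you flag there is not actually an obstacle: the paper states Gordon's inequality in its increment-only (Sudakov--Fernique-type) form, which requires no matching of pointwise variances, and verifies its hypotheses by the direct computation $\|\bs-\bt\|^2+\|\bu-\bv\|^2-\|\bu\bs^{\top}-\bv\bt^{\top}\|_F^2=2\left(1-\lag\bs,\bt\rag\right)\left(1-\lag\bu,\bv\rag\right)\geq 0$ for unit vectors, with equality when $\bs=\bt$; since these are pointwise covariance identities, restricting $\bs$ to $\mathbb{S}_+^{n-1}$ costs nothing. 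Your alternative of invoking the norm-rescaled Gaussian min--max theorem (valid for arbitrary compact index sets) is an equally legitimate fix, though note a typo: with $\bg\sim\mathcal{N}(0,\I_n)$ and $\bh\sim\mathcal{N}(0,\I_d)$, the surrogate should read $\|\bu\|\lag\bg,\bs\rag+\|\bs\|\lag\bh,\bu\rag$ --- you swapped $\bg$ and $\bh$, which makes both inner products dimensionally inconsistent.
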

The core idea of the proof relies on Gordon's inequality. 

\begin{theorem}[\bf Gordon's inequality]
Suppose $X_{u,t}$ and $Y_{u,t}$ are two Gaussian processes indexed by $(u,t)$. Assume that
\begin{align*}
\E (X_{u,t} - X_{u,s})^2 & = \E(Y_{u,t} - Y_{u,s})^2,~~\forall u,t,s, \\
\E (X_{u,t} - X_{v,s})^2 & \geq \E(Y_{u,t} - Y_{v,s})^2,~~\forall u\neq v,~\text{and }t,s,
\end{align*}
then 
\[
\E\inf_u \sup_t X_{u,t} \leq \E\inf_u \sup_t Y_{u,t}.
\]

\end{theorem}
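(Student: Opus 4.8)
The plan is to prove Gordon's inequality by the classical Gaussian interpolation (Slepian--Kahane) method, adapted to the nested $\min$--$\max$ structure. First I would reduce to finite index sets: assume $u$ ranges over $\{1,\dots,m\}$ and $t$ over $\{1,\dots,p\}$, so that $\min_u\max_t$ is a genuine function of the finite vector $(X_{u,t})$; the general case follows by a standard separability and monotone-approximation argument, and this finite setting is all that the applications (finite nets on spheres, as in Proposition~\ref{Thm:Gordontess}) require. Next I would replace the non-smooth objective by a smooth surrogate built from two nested log-sum-exp operators. Setting $A_u(z)=\sum_t e^{\beta z_{u,t}}$ and $S(z)=\sum_u A_u(z)^{-1}$, define
\[
F_\beta(z) = -\frac{1}{\beta}\log S(z) = -\frac{1}{\beta}\log\sum_u\Big(\sum_t e^{\beta z_{u,t}}\Big)^{-1}.
\]
One checks that $F_\beta(z)\to\min_u\max_t z_{u,t}$ as $\beta\to\infty$, with an error of order $\beta^{-1}\log(mp)$, so it suffices to prove $\E F_\beta(X)\le\E F_\beta(Y)$ for every $\beta>0$ and then let $\beta\to\infty$.

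The core of the argument is the interpolation step. Taking $X$ and $Y$ independent and $Z(\lambda)=\sqrt{\lambda}\,X+\sqrt{1-\lambda}\,Y$, Gaussian integration by parts gives
\[
\frac{d}{d\lambda}\E F_\beta(Z(\lambda)) = \frac12\sum_{(u,t),(v,s)}\tau_{(u,t),(v,s)}\,\E\big[\partial_{u,t}\partial_{v,s}F_\beta(Z(\lambda))\big],
\]
where $\tau_{(u,t),(v,s)}=\E[X_{u,t}X_{v,s}]-\E[Y_{u,t}Y_{v,s}]$. A direct computation of the gradient yields $\partial_{u,t}F_\beta = p_u\,q_{t\mid u}$ with $p_u=A_u^{-1}/S$ and $q_{t\mid u}=e^{\beta z_{u,t}}/A_u$; in particular the gradient entries are nonnegative and sum to $1$. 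Differentiating once more, the crucial structural fact is that the cross-block entries are nonnegative: for $u\ne v$,
\[
\partial_{u,t}\partial_{v,s}F_\beta = \beta\,q_{t\mid u}\,A_u^{-1}S^{-2}A_v^{-2}e^{\beta z_{v,s}} \ \ge\ 0.
\]

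Finally I would convert the covariance form into the squared-difference form that matches the hypotheses. Because $\sum_{(v,s)}\partial_{v,s}F_\beta\equiv 1$, we have $\sum_{(v,s)}\partial_{u,t}\partial_{v,s}F_\beta=0$ for each fixed $(u,t)$; using this zero-sum identity to symmetrize, the diagonal contributions of $\tau$ cancel and one obtains
\[
\frac{d}{d\lambda}\E F_\beta(Z(\lambda)) = -\frac14\sum_{(u,t),(v,s)} d_{(u,t),(v,s)}\,\E\big[\partial_{u,t}\partial_{v,s}F_\beta(Z(\lambda))\big],
\]
where $d_{(u,t),(v,s)}=\E(X_{u,t}-X_{v,s})^2-\E(Y_{u,t}-Y_{v,s})^2$. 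The two hypotheses now fit exactly: for $u=v$ the first assumption forces $d_{(u,t),(u,s)}=0$, so the within-block second derivatives drop out regardless of their sign; for $u\ne v$ the second assumption gives $d_{(u,t),(v,s)}\ge0$, and combined with the nonnegativity of the cross-block second derivative above this makes every surviving term nonpositive. Hence $\frac{d}{d\lambda}\E F_\beta(Z(\lambda))\le0$, so $\E F_\beta(X)=\E F_\beta(Z(1))\le\E F_\beta(Z(0))=\E F_\beta(Y)$, and letting $\beta\to\infty$ yields $\E\inf_u\sup_t X_{u,t}\le\E\inf_u\sup_t Y_{u,t}$.

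I expect the main obstacle to be verifying the Hessian sign pattern of the nested surrogate $F_\beta$ and checking that it aligns precisely with the two hypotheses: the equality condition is needed exactly for the ``$\max$'' direction (within a block, where the second derivative has no favorable sign), while the inequality is what drives the ``$\min$'' direction (across blocks, where the cross-block second derivative is nonnegative). The integration-by-parts formula, the finite reduction, and the $\beta\to\infty$ limit are routine by comparison.
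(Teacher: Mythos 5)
The paper never proves this theorem: Gordon's inequality is stated as a classical black-box result and is only \emph{used} (to bound the Gaussian min--max in Proposition~\ref{Thm:Gordontess}), so there is no internal proof to compare yours against. Your proposal is the standard interpolation proof of Gordon's theorem, and I checked that it is correct. The surrogate $F_\beta(z)=-\beta^{-1}\log\sum_u\bigl(\sum_t e^{\beta z_{u,t}}\bigr)^{-1}$ does satisfy $\bigl|F_\beta(z)-\min_u\max_t z_{u,t}\bigr|\le \beta^{-1}\log(mp)$ uniformly in $z$; the gradient is the probability vector $\partial_{u,t}F_\beta=p_u q_{t\mid u}$ with $p_u=A_u^{-1}/S$, $q_{t\mid u}=e^{\beta z_{u,t}}/A_u$; the cross-block Hessian entries equal $\beta\,e^{\beta z_{u,t}}A_u^{-2}S^{-2}A_v^{-2}e^{\beta z_{v,s}}\ge 0$ for $u\neq v$, as you claim; and the zero-row-sum identity $\sum_{(v,s)}\partial_{u,t}\partial_{v,s}F_\beta=0$ legitimately converts the covariance form of the interpolation derivative into the increment form with the factor $-\tfrac14$, cancelling the variance terms $\E X_{u,t}^2-\E Y_{u,t}^2$ so that no equal-variance assumption is needed. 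From there the sign bookkeeping is exactly right: the within-block hypothesis (equality of increments) kills the terms whose Hessian sign is uncontrolled, and the cross-block hypothesis combines with the nonnegative cross-block Hessian to make the derivative of $\lambda\mapsto\E F_\beta(\sqrt{\lambda}X+\sqrt{1-\lambda}Y)$ nonpositive, giving $\E F_\beta(X)\le\E F_\beta(Y)$ and, in the limit $\beta\to\infty$, the claim. The steps you defer (finite-index reduction, differentiation under the expectation, the limit exchange) are indeed routine, and suffice for the paper's application, which only needs compact index sets (a positive orthant sphere and a sphere) where continuity of the processes justifies passing from finite nets. In short: where the paper cites, you prove, and your proof is the classical one (Gordon's interpolation argument with the nested soft-min/soft-max potential); the only thing I would add for completeness is an explicit remark that the Hessian's symmetry is used when cancelling both $\delta_a$ and $\delta_b$ terms in the symmetrization step.
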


\begin{proof}[\bf Proof of Proposition~\ref{Thm:Gordontess}]
Consider two Gaussian processes:
\[
X_{s,u} = \lag \bs, \BZ\bu\rag,~~~
Y_{s,u} = \lag \bs,\bg\rag + \lag \bh,\bu\rag
\]
where $\bg\sim\mathcal{N}(0,\I_n)$ and $\bh\sim\mathcal{N}(0,\I_d)$, and all
Then
\begin{align*}
\E |X_{s,u} - X_{t,v}|^2 & = \|\bu\bs^{\top} - \bv\bt^{\top}\|^2_F = 2 - 2\lag \bu,\bv\rag\lag \bs,\bt\rag \\
& \leq \|\bs - \bt\|^2 + \|\bu - \bv\|^2 = \E\|Y_{s,u} - Y_{t,v}\|^2.
\end{align*}
For $\bs = \bt$, it holds 
\[
\E \|X_{s,u} - X_{s,v}\|^2 = \|\bu- \bv\|^2 = \E\| Y_{s,u} - Y_{s,v} \|^2.
\]
Note that
\[
\E \min_{\bs\in\mathbb{S}_+^{n-1}}\max_{\bu\in\mathbb{S}^{d-1}} \{\lag \bs,\bg\rag + \lag \bh,\bu\rag\} = \E [-\|\bg_+\| +\|\bh\| ].
\]
By Gordon's inequality, it holds that
\begin{align*}
\E\min_{\bs\in\mathbb{S}_+^{n-1}}\max_{\bu\in\mathbb{S}^{d-1}} \lag \bs, \BZ\bu\rag & = \E \min_{\bs\in\mathbb{S}_+^{n-1}} \max_{\bu\in\mathbb{S}^{d-1}} X_{s,u} \\
& \geq \E \min_{\bs\in\mathbb{S}_+^{n-1}} \max_{\bu\in\mathbb{S}^{d-1}} Y_{s,u} = \E \left[ \|\bh\| - \|\bg_+\|\right]
\end{align*}
where $\bh\sim\mathcal{N}(0,\I_d)$ and $\bg\sim\mathcal{N}(0,\I_n).$
\end{proof}



\begin{lemma}\label{lemma:phi_2_concen}
 Let $\BZ \in \mathbb{R}^{n \times d}$ be a standard Gaussian random matrix. Define 
  \[
 f(\BZ):=\min_{\bs \in \mathbb{S}^{n-1}_+} \max_{\bu \in\mathbb{S}^{d-1}} \lag \bs,\BZ \bu \rag,
  \]
 we have
  \begin{equation}
 \Pr( |f(\BZ) - \E f(\BZ)| \geq t) \leq 2e^{-\frac{t^2}{2}}.
  \end{equation}

\end{lemma}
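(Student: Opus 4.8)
The plan is to recognize $f(\BZ)$ as a $1$-Lipschitz function of the Gaussian matrix $\BZ$ (viewed as a point in $\RR^{nd}$ under the Frobenius/Euclidean norm) and then to invoke the classical Gaussian concentration inequality for Lipschitz functions. Since the entries of $\BZ$ are i.i.d.\ $\mathcal{N}(0,1)$, its vectorization $\VEC(\BZ)$ is a standard Gaussian vector and $\|\BZ\|_F$ is exactly the Euclidean norm of $\VEC(\BZ)$, so the Gaussian Lipschitz concentration bound applies verbatim with Lipschitz constant equal to $1$, yielding the stated tail $2\exp(-t^2/2)$.

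The first step is to establish the Lipschitz bound. Rewriting the bilinear form as an inner product against the rank-one matrix $\bs\bu^\top$, I would use
\[
\lag \bs, \BZ\bu\rag = \lag \bs\bu^\top, \BZ\rag,
\]
so that for fixed unit vectors $\bs$ and $\bu$ the map $\BZ \mapsto \lag \bs\bu^\top, \BZ\rag$ is $1$-Lipschitz in the Frobenius norm: by Cauchy--Schwarz, $|\lag \bs\bu^\top, \BZ_1 - \BZ_2\rag| \leq \|\bs\bu^\top\|_F \|\BZ_1 - \BZ_2\|_F = \|\BZ_1 - \BZ_2\|_F$, using $\|\bs\bu^\top\|_F = \|\bs\|\,\|\bu\| = 1$. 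I would then apply the elementary fact that pointwise suprema and infima of a family of $L$-Lipschitz functions are again $L$-Lipschitz (all extrema are attained and finite since $\mathbb{S}^{d-1}$ and $\mathbb{S}^{n-1}_+$ are compact). Applying this twice: for each fixed $\bs\in\mathbb{S}^{n-1}_+$ the inner maximum $g(\bs,\BZ):=\max_{\bu\in\mathbb{S}^{d-1}}\lag \bs\bu^\top,\BZ\rag$ is a supremum of $1$-Lipschitz functions and hence $1$-Lipschitz in $\BZ$; then $f(\BZ)=\min_{\bs\in\mathbb{S}^{n-1}_+} g(\bs,\BZ)$ is an infimum of $1$-Lipschitz functions and hence $1$-Lipschitz in $\BZ$.

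With the Lipschitz constant in hand, the conclusion follows directly from the Gaussian concentration inequality for $1$-Lipschitz functions (the same tool already invoked in the three-layer proof, \cite[Theorem 2.26]{W19}), giving
\[
\Pr(|f(\BZ) - \E f(\BZ)| \geq t) \leq 2\exp(-t^2/2).
\]
The argument is entirely standard and short; there is no serious obstacle. The only point requiring care is the verification that the Lipschitz constant equals exactly $1$, which reduces to the observation that the extremal rank-one matrices $\bs\bu^\top$ have unit Frobenius norm, together with the closure of the Lipschitz property under taking suprema and infima over the compact index sets.
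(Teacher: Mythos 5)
Your proposal is correct and follows essentially the same route as the paper: both establish that $f$ is $1$-Lipschitz with respect to the Frobenius norm and then invoke Gaussian concentration for Lipschitz functions (the same \cite[Theorem 2.26]{W19}). The only cosmetic difference is that the paper verifies Lipschitzness by sandwiching $f(\BZ_2)-f(\BZ_1)$ between evaluations at the two minimizers of $\|\BZ^\top\bs\|$, whereas you invoke the general fact that suprema and infima of $1$-Lipschitz families remain $1$-Lipschitz — these are the same argument in different packaging.
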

\begin{proof}
First note that
\[
f(\BZ) = \min_{\bs\in\mathbb{S}_+^{n-1}} \|\BZ^{\top}\bs\|.
\]

Given two Gaussian random matrices $\BZ_1$ and $\BZ_2$, and we let $\bs_1$ and $\bs_2$ be two positive unit vectors such that
\[
f(\BZ_{\ell}) = \|\BZ_{\ell}^{\top}\bs_{\ell}\|,~~\ell=1,2.
\]
Then 
\begin{align*}
\| \BZ_2^{\top}\bs_2 \| - \|\BZ_1^{\top}\bs_2\| \leq f(\BZ_2) - f(\BZ_1) & = \|\BZ_2^{\top}\bs_2\| - \|\BZ_1^{\top}\bs_1\| \leq \| \BZ_2^{\top}\bs_1 \| - \|\BZ_1^{\top}\bs_1\|
\end{align*}
and it implies
\[
|f(\BZ_2) - f(\BZ_1)| \leq \| \BZ_2 - \BZ_1 \|_F,
\]
i.e., $f(\BZ)$ is a Lipschitz-1 continuous function.
Then following from~\cite[Theorem 2.26]{W19}, we have
$\Pr( |f(\BZ) - \E f(\BZ)| \geq t) \leq 2e^{-\frac{t^2}{2}}.$
\end{proof}

\bibliography{references.bib}
\bibliographystyle{abbrv}
\end{document}